\newcommand{\ubar}[1]{\underaccent{\bar}{#1}}
\def\ceil#1{\lceil #1 \rceil}
\newcommand*{\addFileDependency}[1]{
  \typeout{(#1)}
  \@addtofilelist{#1}
  \IfFileExists{#1}{}{\typeout{No file #1.}}
}
\newcommand{\rev}[1]{{#1}}
\newtheorem*{rep@theorem}{\rep@title}
\newcommand{\newreptheorem}[2]{%
\newenvironment{rep#1}[1]{%
 \def\rep@title{#2 \ref{##1}}%
 \begin{rep@theorem}}%
 {\end{rep@theorem}}}
\DeclarePairedDelimiterX{\infdivx}[2]{(}{)}{%
  #1\;\delimsize\|\;#2%
}
\newtheorem{theorem}{Theorem}
\newtheorem{lemma}{Lemma}
\newtheorem{proposition}{Proposition}
\title{Achieving Risk Control in Online Learning Settings
}
\author{} 
\author[1]{Shai Feldman} 
\author[1]{Liran Ringel}
\author[2]{Stephen Bates}
\author[1,3]{Yaniv Romano}
\affil[1]{Department of Computer Science, Technion IIT, Haifa, Israel}
\affil[2]{Departments of Statistics and of EECS, UC Berkeley, CA, USA}
\affil[3]{Department of Electrical and Computer Engineering, Technion IIT, Haifa, Israel}
\date{}
\begin{document}

\maketitle

\begin{abstract}
To provide rigorous uncertainty quantification for online learning models, we develop a framework for constructing uncertainty sets that provably control risk---such as coverage of confidence intervals, false negative rate, or F1 score---in the online setting. This extends conformal prediction to apply to a larger class of online learning problems. Our method guarantees risk control at any user-specified level even when the underlying data distribution shifts drastically, even adversarially, over time in an unknown fashion.
The technique we propose is highly flexible as it can be applied with any base online learning algorithm (e.g., a deep neural network trained online), requiring minimal implementation effort and essentially zero additional computational cost.
We further extend our approach to control multiple risks simultaneously, so the prediction sets we generate are valid for all given risks.
To demonstrate the utility of our method, we conduct experiments on real-world tabular time-series data sets showing that the proposed method rigorously controls various natural risks. 
Furthermore, we show how to construct valid intervals for an online image-depth estimation problem that previous sequential calibration schemes cannot handle.

\end{abstract}

\section{Introduction}\label{sec:introduction}
To confidently deploy learning models in high-stakes applications, we need both high predictive accuracy and reliable safeguards to handle unanticipated changes in the underlying data-generating process.
Reasonable accuracy on a fixed validation set is not enough, as raised by \cite{intro_to_unc_quant}; we must also quantify uncertainty to correctly handle hard input points and take into account shifting distributions. \rev{For example, consider the application of autonomous driving, where we have a real-time view of the surroundings of the car. To successfully operate such an autonomous system, we should measure the distance between the car and close-by objects, e.g., via a sensor that outputs a depth image whose pixels represent the distance of the objects in the scene from the camera. Figure~\ref{fig:depth_input} displays a colored image of a road and Figure~\ref{fig:depth_gt} presents its corresponding depth map. Since high-resolution depth measurements often require longer acquisition time compared to capturing a colored image, there were developed online estimation models to predict the depth map from a given RGB image \citep{patil2020don, zhang2020online}. The goal of these methods is to artificially speed-up depth sensing acquisition time. However, making decisions solely based on an estimate of the depth map is insufficient as the predictive model may not be accurate enough. Furthermore, the distribution can vary greatly and drastically over time, rendering the online model to output highly inaccurate and unreliable predictions.} In these situations, it is necessary to design a predictive system that reflects the range of plausible outcomes, reporting the uncertainty in the prediction. 
To this end, we encode uncertainty in a rigorous manner via prediction intervals/sets that augment point predictions and have a long-range error control. \rev{In the autonomous driving example, the uncertainty in the depth map estimate is represented by depth-valued uncertainty intervals.}
In this paper, we introduce a novel calibration framework that can wrap any online learning algorithm (e.g., an LSTM model trained online) to construct prediction sets with guaranteed validity. 

Formally, suppose an online learning setting where we are given data stream $\{(X_t, Y_t)\}_{t\in\mathbb{N}}$ in a sequential fashion, where $X_t\in\mathcal{X}$ is a feature vector and $Y_{t}\in\mathcal{Y}$ is a target variable. In single-output regression settings $\mathcal{Y}=\mathbb{R}$, while in classification tasks $\mathcal{Y}$ is a finite set of all class labels. \rev{The input $X_t$ is commonly a feature vector, i.e., $\mathcal{X}=\mathbb{R}^p$, although it may take different forms, as in the depth sensing task, where $X_t \in \mathbb{R}^{M\times N\times 3}$ is an RGB image and $Y_t\in\mathbb{R}^{M\times N}$ is the ground truth depth.} 
Consider a loss function $L(Y_t, \hat{C}_t(X_t))\in\mathbb{R}$ that measures the error of the estimated prediction set $\hat{C}_t(X_t)\subseteq \mathcal{Y}$ with respect to the true outcome $Y_t$.
Importantly, at each time step $t\in\mathbb{N}$, given all samples previously observed $\{(X_i, Y_i)\}_{i=1}^{t-1}$ along with the test feature vector $X_t$, our goal is to construct a prediction set $\hat{C}_t(X_t)$ guaranteed to attain any user-specified risk level $r$: 
\begin{equation}\label{eq:risk_requirement}
\mathcal{R}(\hat{C}) = \lim_{T \to \infty} \frac{1}{T} \sum_{t=1}^T {L(Y_t, \hat{C}_t(X_t))} = r.
\end{equation} 
\rev{For instance, a natural choice for the loss $L$ in the depth sensing task is the image miscoverage loss:
\begin{equation}  \label{eq:im_miscov}
L_\text{image miscoverage}(Y_t, \hat{C}(X_t)) = \frac{1}{MN} \left| (m,n) : Y_t^{m,n} \notin \hat{C}^{m,n}(X_t) \right|.
\end{equation}
In words, $L_\text{image miscoverage}(Y_t, C(X_t))$ is the ratio of pixels that were miscovered by the intervals $\hat{C}^{m,n}(X_t)$, where $(m,n)$ is the pixel's location.
Hence, the resulting risk for the loss in~\eqref{eq:im_miscov} measures the average image miscoverage rate across the prediction sets $\{\hat{C}_t(X_t)\}_{t=0}^{\infty}$, and $r=20\%$ is a possible choice for the desired miscoverage frequency.
Another example of a loss function that is attractive in multi-label classification problems is the false negative proportion whose corresponding risk is the false negative rate.
}


In this work, we introduce \emph{rolling risk control} (\texttt{Rolling RC}): the first calibration procedure to form prediction sets in online settings that achieve any pre-specified risk level in the sense of~\eqref{eq:risk_requirement} without making any assumptions on the data distribution, as guaranteed by Theorem~\ref{thm:risk_guarantee}. \rev{We accomplish this by utilizing the mathematical foundations of \emph{adaptive conformal inference} (\texttt{ACI}) \citep{aci} which is a groundbreaking conformal calibration scheme that constructs prediction sets for any arbitrary time-varying data distribution. 
The uncertainty sets generated by \texttt{ACI} are guaranteed to have valid long-range coverage, being a special case of~\eqref{eq:risk_requirement} with the choice of the 0-1 loss (indicator function) defined in Section~\ref{sec:background}. Importantly, one cannot simply plug an arbitrary loss function into \texttt{ACI} and achieve risk control. The reason is that \texttt{ACI} works with conformity scores---a measure of goodness-of-fit---that are only relevant to the 0-1 loss, but do not exist in the general risk-controlling setting.
Therefore, our \texttt{Rolling RC} broadens the set of problems that \texttt{ACI} can tackle, allowing the analyst to control an arbitrary loss. Furthermore, the technique we proposed in Section~\ref{sec:multi_risks} is guaranteed to control multiple risks, and thus constructs sets that are valid for all given risks over long-range windows in time. Additionally, the proposed online calibration scheme is lightweight and can be integrated with any online learning model, with essentially zero added complexity. Lastly, in Section~\ref{sec:stretching_functions} we carefully investigated design choices of our method to adapt quickly to distributional shifts. Indeed, the experiments conducted on real benchmark data sets, presented in Section~\ref{sec:experiments}, demonstrate that sophisticated designed choices lead to improved performance.}



\begin{figure}[ht]
\centering
        {%
        \subfloat[Input Frame]{
        \includegraphics[width=0.4425\textwidth]{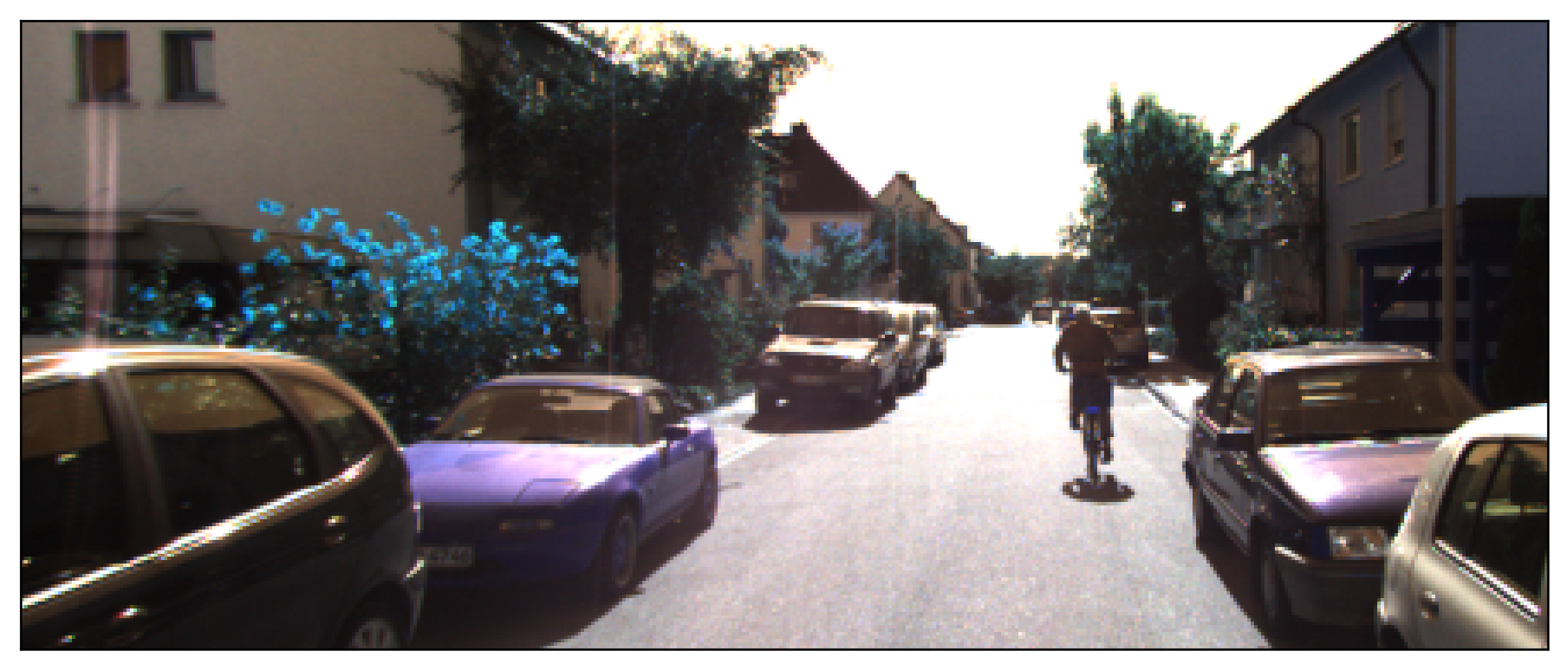}%
        \label{fig:depth_input}%
        }
        }%
    \hfill%
        \subfloat[Ground Truth Depth]{%
        \includegraphics[width=0.5\textwidth]{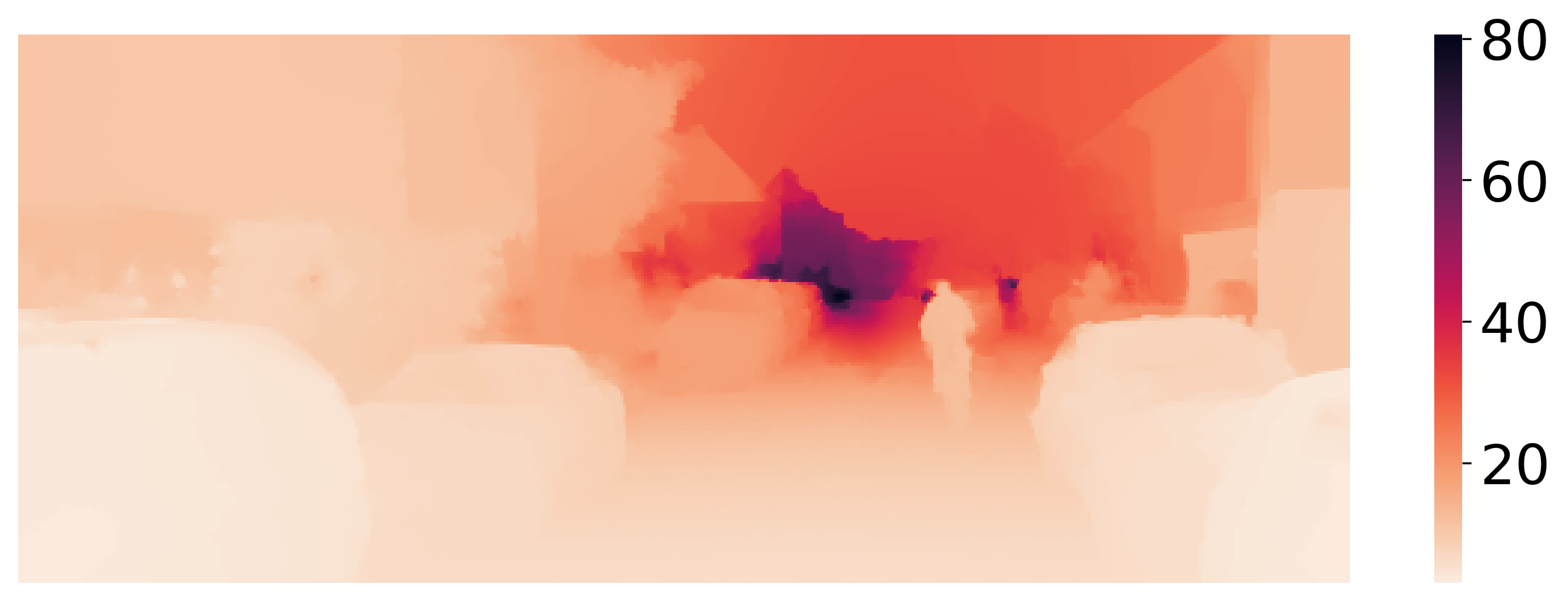}%
        \label{fig:depth_gt}%
        }%
    \hfill%
        \subfloat[Estimated Depth]{%
        \includegraphics[width=0.5\textwidth]{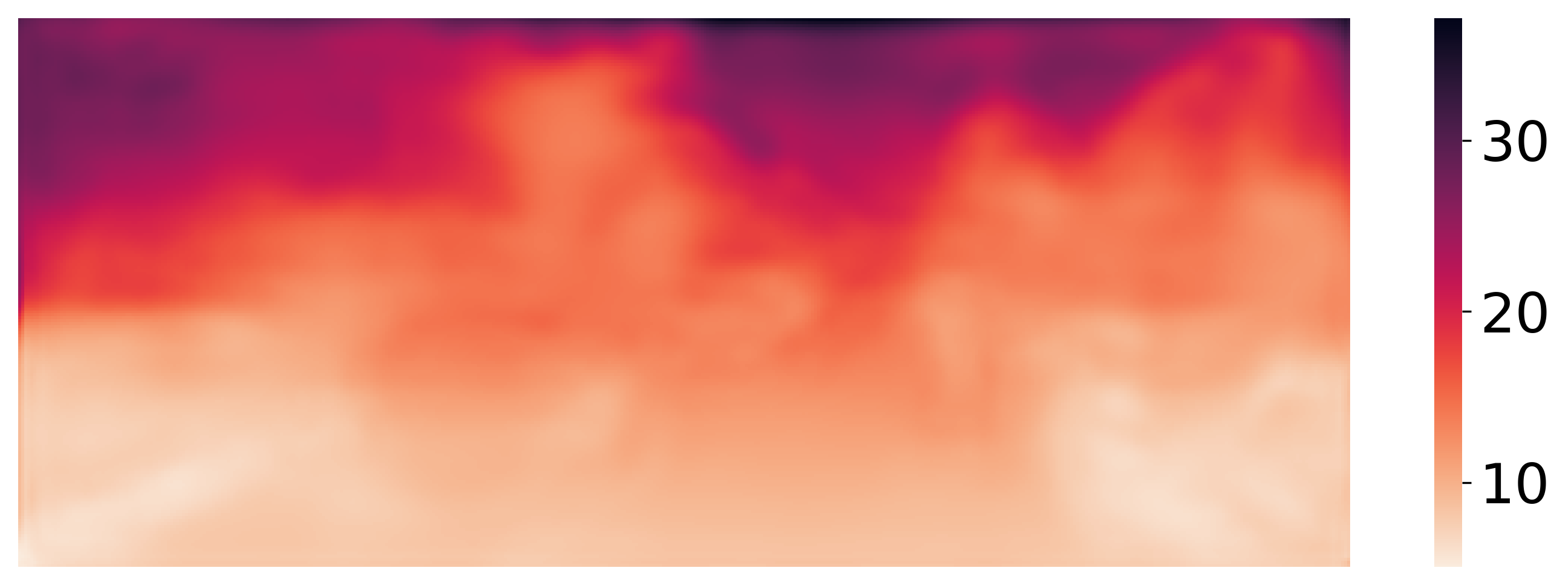}%
        \label{fig:depth_estimated}%
        }%
    \hfill%
    \subfloat[Uncertainty Size]{%
        \includegraphics[width=0.5\textwidth]{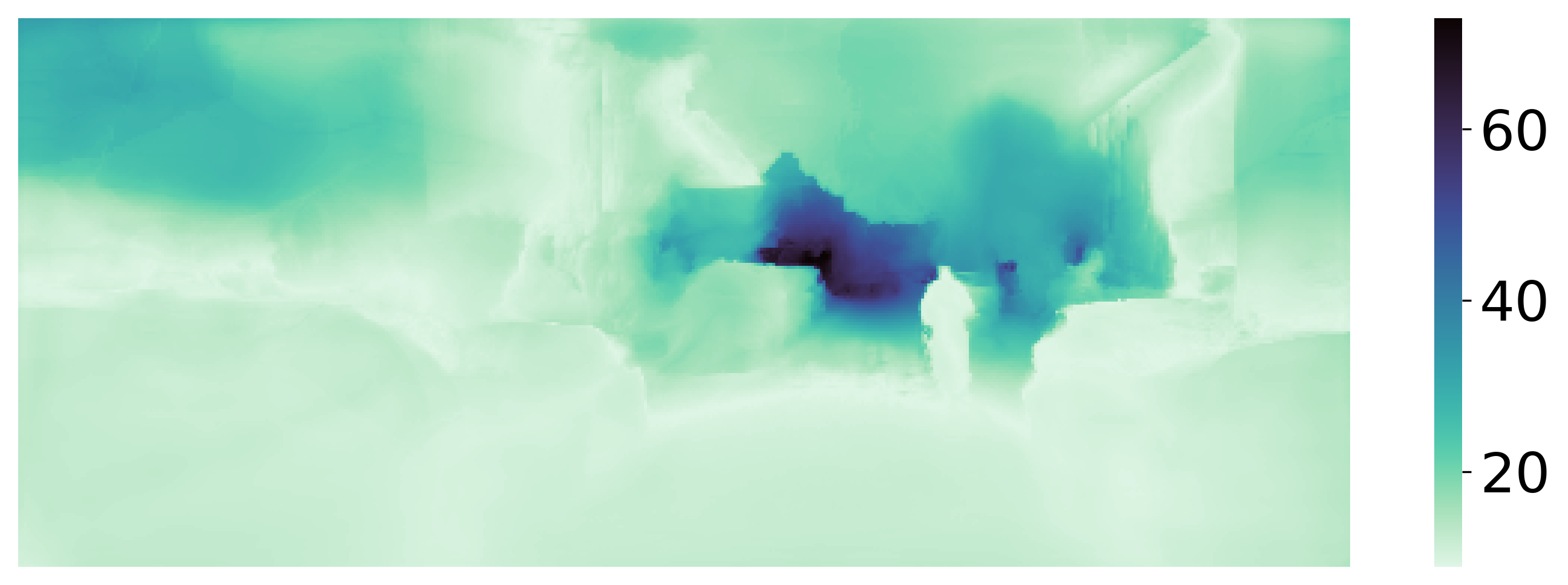}%
        \label{fig:depth_uq}%
        }%
        \caption{{Online depth estimation.} The input frame, ground truth depth map, estimated depth image, and interval's size at time step $t=8020$. All values are in meter units.}
\end{figure}
\subsection{Uncertainty Quantification for Online Depth Estimation}\label{sec:depth_example}

\rev{Recall the online depth sensing problem, where our goal is to construct a prediction interval $C^{m,n}(X_t)\subset\mathbb{R}$ for each pixel $(m,n)$ that contains the ground truth depth $Y^{m,n}_t$ at $80\%$ frequency. That is, we aim at controlling the image miscoverage loss~\eqref{eq:im_miscov} at level $r=20\%$.
To accomplish this, we apply our \texttt{Rolling RC} framework using a neural network model for depth estimation; in Appendix~\ref{sec:depth_example_setup} we give more information regarding the online training scheme and the implementation details.} At a high level, we fit (offline) an initial predictive model
on the first 6000 samples to obtain a reasonable predictive system. Next, passing time step 6001, we proceed by training the model in an online fashion while applying our calibration procedure, and then measure the performance on the data points corresponding to time steps 8001 to 10000.


Figure~\ref{fig:depth_estimated} shows the estimated depth image generated by the base model and Figure~\ref{fig:depth_uq} displays the size of the prediction interval of each pixel at timestamp $t=8020$. These figures suggest that the calibrated uncertainty intervals properly reflect the ground truth depth. Furthermore, Figure~\ref{fig:depth_cov_len} presents the image coverage rate and average length across the test timestamps, revealing that the proposed method accurately controls the risk with an average image coverage rate of $80\%$. %



\newcommand\imagewidth{0.45}
\begin{figure}[ht]
\setstretch{1}
  \centering

    \scalebox{1}{

    \begin{tabular}{cc}
    
    {\centering{\includegraphics[width=\imagewidth\linewidth]{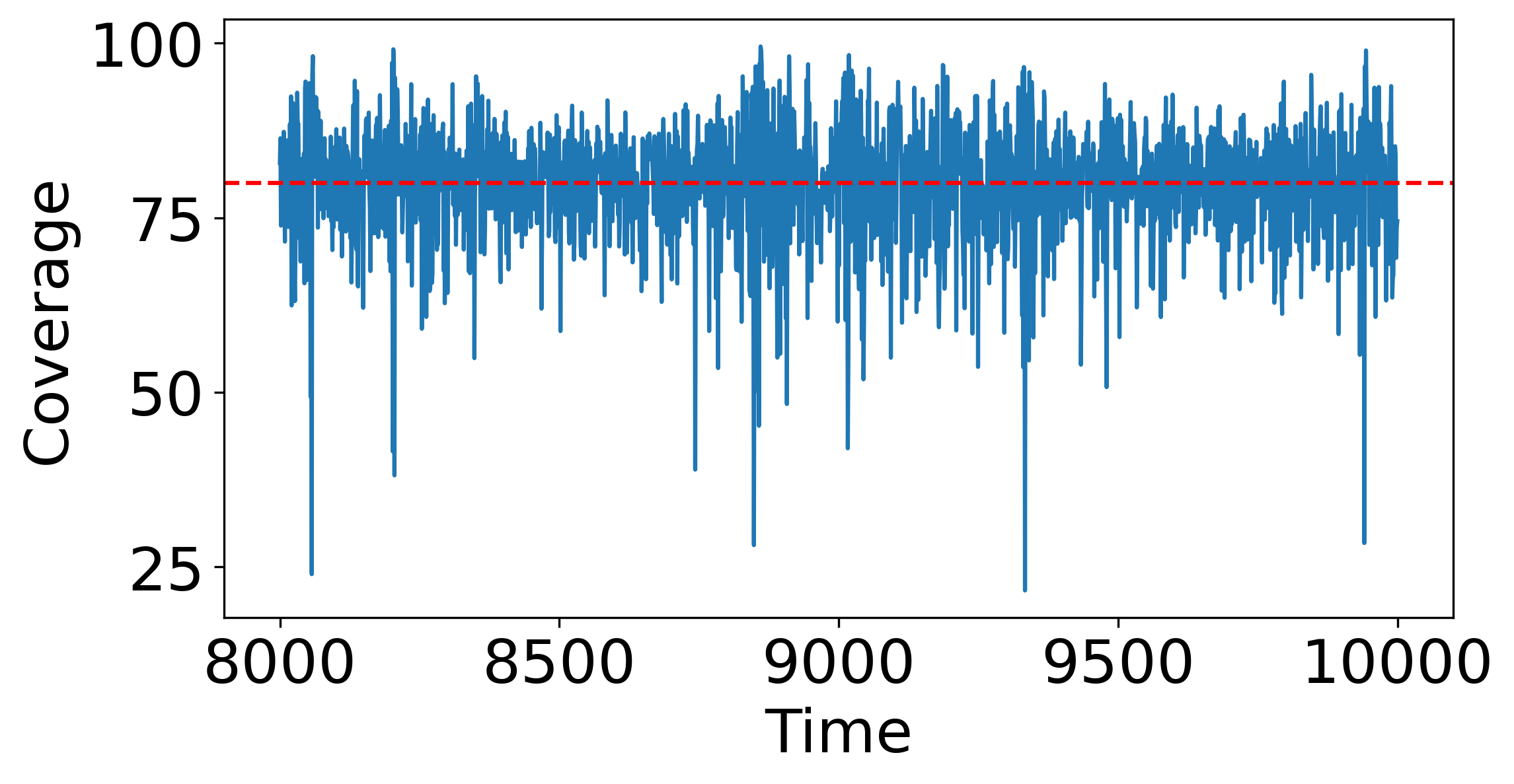}}} &

     {\centering{\includegraphics[width=\imagewidth\linewidth]{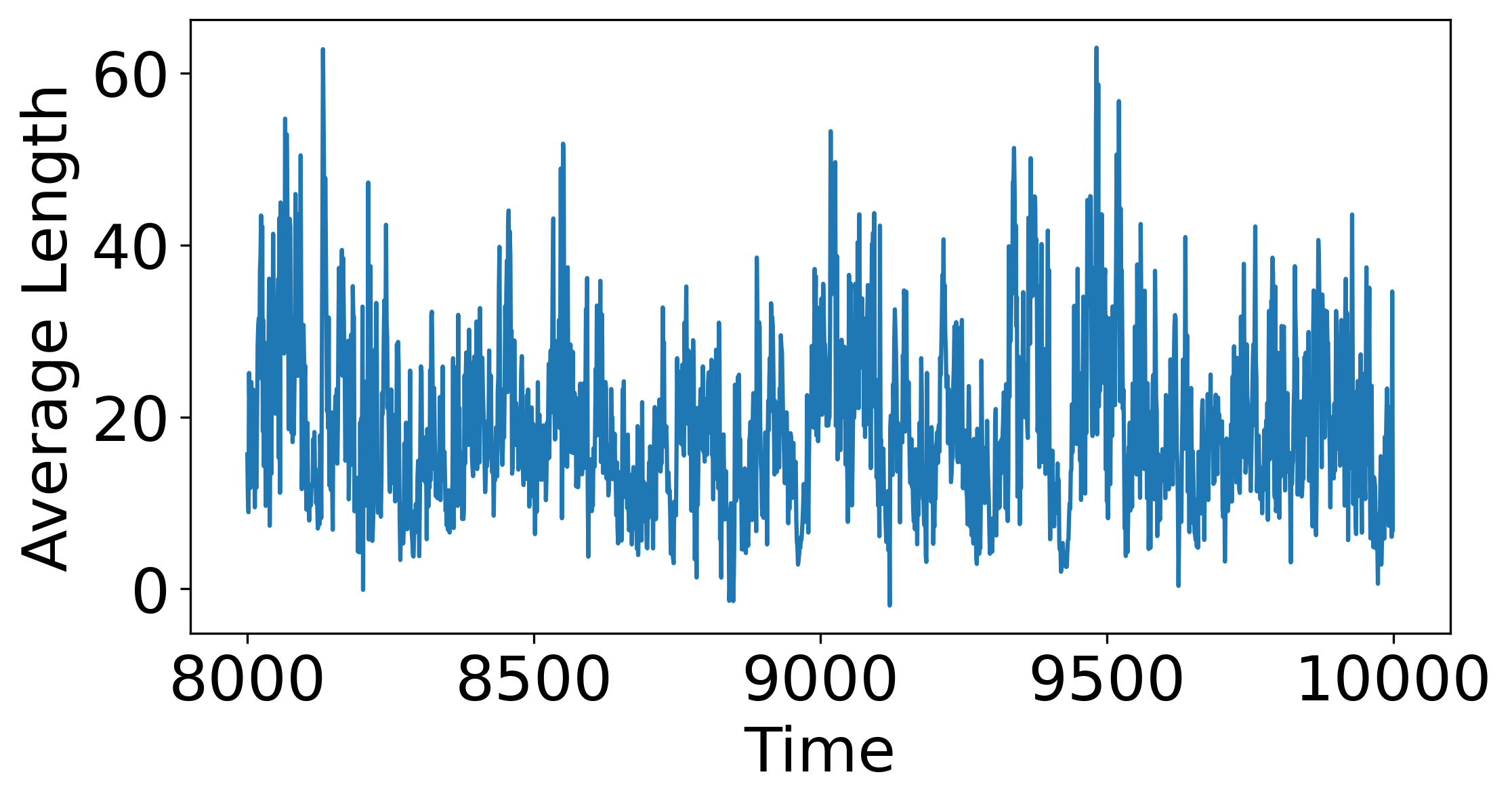}}}
    \\

    \end{tabular}%
    }
    \captionsetup{format=hang} \caption{The coverage rate and average interval length (in meters) over each image in the test sequence achieved by the proposed uncertainty quantification method. The average coverage is $80.00\%$ and the average length is $18.04$ meters.}
    
\label{fig:depth_cov_len}%
\end{figure}%

Software implementing the proposed framework and reproducing our experiments is available at \url{https://github.com/Shai128/rrc}.

\section{Background}\label{sec:background}

Conformal inference \citep{vovk2005algorithmic, angelopoulos-gentle}
is a generic approach for constructing prediction sets in regression or classification tasks that attain any pre-specified coverage rate, under the assumption that the training and testing points are i.i.d., or exchangeable. One example of such a method is \emph{Split conformal prediction} \citep{papadopoulos2008normalized, lei2018distribution}.
In a nutshell, the idea is to split the observed labeled data
into training and calibration sets, fit a model on the training set, and evaluate the model's goodness-of-fit on the reserved holdout calibration points. 
Under the i.i.d assumption stated above,
the prediction sets produced by split conformal prediction 
are guaranteed to have the following coverage property: $1 - \mathbb{E}[\mathbbm{1}\{Y \notin C(X)\}] \geq 1 - \alpha$, where $(X,Y)$ is a fresh data point and $\alpha$ is a pre-specified miscoverage rate.

While coverage rate is an important property, in real-world applications, it is often desired to control metrics other than the binary loss $\mathbbm{1}\{Y \notin C(X)\}$ that defines the coverage requirement. Such losses include the F1-score or the false negative rate, where the latter is attractive for data with high-dimensional $Y$ as in image segmentation tasks or image recovery applications. Indeed, there have been developed extensions of the conformal approach that go beyond the 0-1 loss, rigorously controlling more general loss functions \citep{angelopoulos2021learn, bates2021distributionfree, angelopoulos2022conformal}. Analogously to split conformal prediction, these methods provide a \emph{risk-controlling} guarantee that holds under the i.i.d. assumption. In particular, such guarantees do not hold for time-varying data with arbitrary distributional shifts, as the i.i.d. assumption would not hold anymore.


Since the i.i.d assumption of the conformal approach is often violated in real-world applications, there have been developed extensions to conformal inference that impose relaxed notions of exchangeability \citep{xu2020conformal,xu2020conformal2,chernozhukov2018exact, cauchois2020robust, tibshirani2019conformal, stankeviciute2021conformal}.
Such methods, however, are not guaranteed to construct prediction sets with a valid coverage rate for general time-series data with arbitrary distributional shifts.
By contrast, \texttt{ACI} \citep{aci} \rev{generates uncertainty sets that are guaranteed to achieve a user-specified level of the 0-1 loss:  
\begin{align}\label{eq:binary_loss}
L_{\text{0-1}}(Y_t, C(X_t))= \mathbbm{1}_{\{ Y_t \notin C_{t}(X_t)\}}= \begin{cases}
    1, &  Y_t \notin C_{t}(X_t), \\
    0, & \text{otherwise}.
  \end{cases}
\end{align} 
}A recent work by \cite{gibbs2022conformal} proposes a more sophisticated approach to track past coverage rates to better adapt to distributional shifts.
In general, this line of research is based on a common, simple idea---if the past coverage rate is too high, we shorten the intervals, and if it is too low, we widen them.
In this paper, we also rely on the above update rule, however, guarantee the control of a general risk, standing in contrast with \texttt{ACI} that controls only the binary loss in~\eqref{eq:binary_loss}. Therefore, our approach is the first online calibration scheme that can control risks other than the coverage frequency.

\newcommand\imagescale{0.22}

\section{Proposed Method}\label{sec:rci}
\subsection{General formulation}\label{sec:single_risk_general}
We now turn to present \texttt{Rolling RC}---a general framework for uncertainty quantification in an online learning setting, which satisfies the risk requirement in \eqref{eq:risk_requirement}. 
Towards that end, we define a set construction function
\begin{equation}\label{eq:rci_c}
\hat{C}_t(X_t) = f(X_t, \theta_t, \mathcal{M}_t)\in 2^{\mathcal{Y}}
\end{equation}
that gets as an input (i) the test $X_t$, (ii) a fitted model $\mathcal{M}_t$ trained on all data $\{X_{t'},Y_{t'}\}_{t'=1}^{t-1}$ up to time $t$, and (iii) a calibration parameter $\theta_t$, and returns a prediction set. Above, $2^\mathcal{Y}$ is the power set of $\mathcal{Y}$. For instance, in the depth prediction task in Section \ref{sec:depth_example}, $f$ constructs a prediction interval for each pixel in the image, as visualized in Figure~\ref{fig:depth_uq}. The model $\mathcal{M}_t(X_t)$ is used to form a prediction for $Y_t$ given the current feature vector $X_t$;
we will provide soon concrete formulations for the set constructing function $f$ as well as examples for $\mathcal{M}$.
The calibration parameter $\theta_t \in \mathbb{R}$ controls the size of the prediction set generated by $f$: larger $\theta_t$ leads to larger sets, and smaller $\theta_t$ leads to smaller sets. Under the assumption that larger sets produce a lower loss, $\theta_t$ allows us to control the risk over long-range windows in time: by increasing (resp. decreasing) $\theta_t$ over time we increase (resp. decrease) the empirical risk. Once $Y_t$ is revealed to us, we tune $\theta_t$ according to the following rule:
\begin{equation}\label{eq:generalized_rci_theta}
\theta_{t+1} = \theta_{t} + \gamma (l_{t} - r).
\end{equation}
This update rule is exactly that of \texttt{ACI}~\citep{gibbs2022conformal}, extended to our more general setting.
Above, $l_{t} = L(Y_t, C(X_t))$ is the loss at time $t$, and $\gamma > 0$ is a fixed step size, e.g., 0.05. In Appendix~\ref{sec:gamma_study} we study the effect of $\gamma$ on the resulted sets and provide a suggestion for properly setting it.
The pre-defined constant $r$ is the desired risk level, specified
by the user, e.g., $0.2$ for the image miscoverage loss in~\eqref{eq:im_miscov}.  
Lastly, we obtain a new predictive model $\mathcal{M}_{t+1}$ by updating the previous  $\mathcal{M}_{t}$ with the new labeled pair $(X_t,Y_t)$, e.g., by applying a single gradient step to reduce any given predictive loss function.
For convenience, the \texttt{Rolling RC} procedure is summarized in Algorithm~\ref{alg:rci_with_stretching}.
\begin{algorithm}[t]
  \caption{\texttt{Rolling RC}}
  \label{alg:rci_with_stretching}
  
  	\textbf{Input:}
	\begin{algorithmic}
		\State Data $\{(X_t, Y_t)\}_{t=1}^T \subseteq \mathcal{X} \cross \mathcal{Y}$, given as a stream, desired risk level $r\in\mathbb{R}$, a step size $\gamma >0$, a set constructing function $f:(\mathcal{X}, \mathbb{R}, \mathbb{M})\rightarrow 2^{\mathcal{Y}}$ and an online learning model $\mathcal{M}$.
	\end{algorithmic}
	
  	\textbf{Process:}
  \begin{algorithmic}[1]
        \State Initialize $\theta_0=0$.
  	    \For{$t=1,...,T$}
  	        \State Construct a prediction set for the new point $X_{t}$: $\hat{C}_t(X_t)=f(X_t,\theta_t,\mathcal{M}_t)$.
  	        \State Obtain $Y_t$.
  	        \State Compute $l_{t} =  L(Y_t, \hat{C}_t(X_t)) $.
	        \State Update $\theta_{t+1}= \theta_t + \gamma(l_t-r)$.
	        \State Fit the model $\mathcal{M}_t$ on $(X_t,Y_t)$ and obtain the updated model $\mathcal{M}_{t+1}$.
	    \EndFor
  \end{algorithmic}
  
  	\textbf{Output:}
	\begin{algorithmic}
		\State Uncertainty sets $\hat{C}_t(X_t)$ for each time step $t\in\{1,...T\}$.
	\end{algorithmic}
\end{algorithm}
The validity of our proposal is given below, whose proof is deferred to Appendix~\ref{sec:thm1_proof}. In Appendix~\ref{sec:general_thm1} we introduce a more general theorem that extends the domain of $\hat{C}_t$ beyond the power set $2^{\mathcal{Y}}$.
\begin{theorem}\label{thm:risk_guarantee}
Suppose that $f:(\mathcal{X}, \mathbb{R}, \mathbb{M})\rightarrow 2^{\mathcal{Y}}$ is an interval/set constructing function. In addition, suppose that there exist constants $m$ and $M$ such that for all $x\in\mathcal{X}$, $y\in\mathcal{Y}$ and $\mathcal{M}\in\mathbb{M}$, $f(x, \theta, \mathcal{M})=\mathcal{Y}$ for all $\theta > M$, $f(X, \theta, \mathcal{M}) = \emptyset$ for all $\theta < m$. Further suppose that the loss is bounded and satisfies $L(y,\mathcal{Y}) < r$ and $L(y,\emptyset) > r$.
Consider the following series of calibrated intervals: $\{\hat{C}_t(X_t)\}_{t=1}^{\infty}$, where $\hat{C}_t(X_t)$ is defined according to~\eqref{eq:rci_c}. Then, the calibrated intervals satisfy the risk requirement in~\eqref{eq:risk_requirement}.
\end{theorem}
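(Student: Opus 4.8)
The plan is to run the \texttt{ACI}-style argument and reduce the risk requirement~\eqref{eq:risk_requirement} to a boundedness statement about the calibration parameters $\{\theta_t\}$. Summing the update rule~\eqref{eq:generalized_rci_theta} over $t=1,\dots,T$ telescopes, giving
\[
\sum_{t=1}^{T}\bigl(l_t - r\bigr) \;=\; \frac{\theta_{T+1} - \theta_1}{\gamma},
\]
so that $\frac{1}{T}\sum_{t=1}^{T} l_t = r + \frac{\theta_{T+1}-\theta_1}{\gamma T}$, where $\theta_1$ is the initial value of the calibration parameter (its precise choice, $\theta_0=0$ in Algorithm~\ref{alg:rci_with_stretching}, will play no role). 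Consequently, if I can show $\sup_{t}\abs{\theta_t} < \infty$, the correction term vanishes as $T\to\infty$, the limit in~\eqref{eq:risk_requirement} exists, and it equals $r$. Thus the whole proof comes down to showing that $\{\theta_t\}$ stays in a bounded interval.

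For the boundedness I would argue by induction. Since the loss is bounded, there is a constant $B$ with $\abs{l_t - r}\le B$ for all $t$, so one application of~\eqref{eq:generalized_rci_theta} moves $\theta$ by at most $\gamma B$. I claim $\theta_t \le \max(\theta_1,\, M + \gamma B)$ for every $t$: if $\theta_t \le M$ then $\theta_{t+1} \le \theta_t + \gamma B \le M + \gamma B$; and if $\theta_t > M$ then by hypothesis $\hat{C}_t(X_t) = f(X_t,\theta_t,\mathcal{M}_t) = \mathcal{Y}$, hence $l_t = L(Y_t,\mathcal{Y}) < r$, the increment $\gamma(l_t - r)$ is strictly negative, and $\theta_{t+1} < \theta_t$. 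The symmetric induction, using $f(X_t,\theta,\mathcal{M}_t)=\emptyset$ for $\theta<m$ together with $L(Y_t,\emptyset) > r$, shows $\theta_t \ge \min(\theta_1,\, m - \gamma B)$. These two bounds give $\sup_t\abs{\theta_t}<\infty$ and close the argument.

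The step I expect to require the most care is this boundedness induction, and in particular being precise about which hypotheses it actually consumes. It uses no monotonicity of $f$ in $\theta$ on the interior $(m,M)$ — only the saturation of $f$ at the two ends plus the sign conditions $L(y,\mathcal{Y})<r$ and $L(y,\emptyset)>r$ — and it genuinely needs the loss to be bounded, to control how far $\theta$ can overshoot $M$ (or undershoot $m$) in a single step; without that, $\theta$ could in principle run off before reaching the saturation region. I would also dispatch the easy edge cases: no uniform gap in the strict inequalities $L(y,\mathcal{Y})<r$, $L(y,\emptyset)>r$ is needed, since strict decrease above $M$ (respectively increase below $m$) together with bounded increments already rules out escape to $\pm\infty$; and the more general statement promised in Appendix~\ref{sec:general_thm1} should follow verbatim once the codomain $2^{\mathcal{Y}}$ is replaced by the enlarged one used there, since the argument never inspects the structure of $\hat{C}_t(X_t)$ beyond the two extreme cases.
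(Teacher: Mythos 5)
Your proposal is correct and follows essentially the same route as the paper: telescope the update rule~\eqref{eq:generalized_rci_theta} and reduce the risk statement to boundedness of $\{\theta_t\}$, which is then deduced from the saturation of $f$ beyond $m$ and $M$ together with the sign conditions $L(y,\mathcal{Y})<r$ and $L(y,\emptyset)>r$ and the boundedness of the loss. The only difference is cosmetic: the paper establishes the bound $\theta_t\in[m-2\gamma B,\,M+2\gamma B]$ by contradiction at the first exit time, whereas you run a direct induction yielding the (slightly tighter) bound $\min(\theta_1, m-\gamma B)\le\theta_t\le\max(\theta_1, M+\gamma B)$; both give the same $O(1/T)$ finite-sample deviation and the same conclusion.
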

Crucially, this theorem states that the risk-control guarantee of \texttt{Rolling RC} holds for any distribution $\{P_{X_t,Y_t}\}_t$, any set-valued function $f$, and any sequence of online-updated predictive models $\{\mathcal{M}_t\}_t$. The requirements for Theorem~\ref{thm:risk_guarantee} to hold are (i) the function $f$ must yield the empty set for small enough $\theta$ and the full label space for large enough $\theta$; and (ii) the loss is smaller than the desired level $r$ for the full label set $\mathcal{Y}$ and exceeds $r$ for the empty set. Also, note that the step size $\gamma$ and the bounds $m,M$ must be fixed in order to control the risk. 

Observe that our method is immune to over-fitting by design even though we use the same data point twice: 
the evaluation of $\theta_t$ is conducted by using the predictions produced by an ``old'' model, before updating it with the new labeled point.
We also note that the proof of Theorem~\ref{thm:risk_guarantee} gives finite-sample bounds for how close the realized risk is to the desired level; the empirical risk falls within a $C / T$ factor of the desired level $r$, where \rev{$C = (M - m + 4\cdot\gamma B)$} is a known constant; see Appendix~\ref{sec:thm1_proof} for details. Furthermore, our new formulation unlocks new design possibilities: we can define any prediction set function $f$ while guaranteeing the validity of its output by calibrating any parameter $\theta_t$. This parameter can affect $f$ in a highly non-linear fashion to attain the most informative (i.e., small) prediction sets. Of course, the performance of the proposed scheme is influenced by the design of $f$, which is the primary focus of the next sections. 




\subsection{\texttt{Rolling RC} for Regression} \label{sec:rci-reg}



In this section, we focus on 1-dimensional response variable $Y$ and aim to control the 0-1 loss in~\eqref{eq:binary_loss}. Note that in Section~\ref{sec:mc_experiments} we will also deal with 1-dimensional responses but show how to control a more sophisticated notion of error for which the loss is defined on the time-horizon, and in Appendix~\ref{sec:online_i2i} we provide a concrete scheme for handling a multi-dimensional $Y$. 

Suppose we are interested in constructing prediction intervals with $1-\alpha$ coverage frequency, using a \emph{quantile regression} model $\mathcal{M}_t$ that produces estimates for the $\alpha/2$ and $1-\alpha/2$ conditional quantiles of the distribution of $Y_t \mid X_t$. We denote these estimates as $\mathcal{M}_t(X_t,\alpha/2)$ and $\mathcal{M}_t(X_t,1 - \alpha/2)$, respectively, which can be obtained by fitting an LSTM model that minimizes the pinball loss; see Appendix~\ref{sec:estimating_q_model} for further details. 
The guiding principle here is that a model that perfectly estimates the conditional quantiles 
will form tight intervals with valid risk, attaining $1-\alpha$ coverage level. 
In practice, however, the model $\mathcal{M}_t$ 
may not be accurate, and thus may result in invalid coverage;
this is especially true for data with frequent time-varying distributional shifts. Consequently, to ensure valid coverage control, we apply \texttt{Rolling RC}. Taking inspiration from the method of conformalized quantile regression (CQR) \citep{CQR}, we use the following interval construction function:
\begin{equation}\label{eq:rci_stretched_reg}
f(X_t, \theta_t, \mathcal{M}_t) = [\mathcal{M}_t(X_t,\alpha/2) - \varphi(\theta_t), \ \mathcal{M}_t(X_t,1-\alpha/2) + \varphi(\theta_t)].
\end{equation}
Above, the interval endpoints are obtained by augmenting the lower and upper estimates of the conditional quantiles by an additive calibration term $\varphi(\theta_t)$. The role of $\theta_t$ is the same as before: the larger $\theta_t$, the wider the resulting interval is. Here, however, we introduce an additional stretching function $\varphi$ that can scale $\theta_t$ non-linearly, providing us the ability to adapt more quickly to severe distributional shifts. We now present and review three design options for the stretching function $\varphi$.

\subsubsection{Stretching Functions for Faster Adaptation}\label{sec:stretching_functions}

\paragraph{None.}The first and perhaps most natural choice is $\varphi(x)=x$, which does not stretch the scale of the interval's adjustment factor. While this is the most simple choice, it might be sub-optimal when an aggressive and fast calibration is required. To see this, recall \eqref{eq:generalized_rci_theta}, and observe that the step size $\gamma$ used to update $\theta_t$ must be fixed throughout the entire process. As a result, the calibration parameter $\theta_t$ might be updated too slowly, resulting in an unnecessary delay in the interval's adjustment. 

\paragraph{Exponential.} The exponential stretching function, defined as
$\varphi(x) = 
    e^x-1$ for $x > 0$
 and    $\varphi(x) = -e^{-x}+1$ for $x \leq 0$,
updates the calibration adjustment factor with an exponential rate: $\varphi'(x)=e^x$, even though the step size for $\theta_t$ is fixed. In other words, it updates $\varphi(\theta_t)$ gently when the calibration is mild ($\varphi(\theta_t)$ is close to 0), and faster as the calibration is more aggressive ($\varphi(\theta_t)$ is away from zero).


\paragraph{Error adaptive.} 
The following stretching function updates $\theta_t$ more rapidly when the loss of the previous data point $\ell_{t-1}$ is farther from the desired risk $r$. Furthermore, it makes larger updates when $Y_t$ is far from the interval's boundaries. More formally, denote the CQR non-conformity score \citep{CQR} by 
$$s_{t} = \max\{ \mathcal{M}_t (X_{t}, \alpha/2) - Y_t, Y_t - \mathcal{M}_t (X_{t}, 1-\alpha/2) \},$$ which measures the signed distance of $Y_t$ from its closest boundary. Next, define
\begin{equation*}
    \varphi_t(\theta) = \theta + \lambda^{\text{error}}_t, \ \text{where} \ \lambda^{\text{error}}_t = \text{clip}(\lambda^{\text{error}}_{t-1} - \beta^{\text{score}}\cdot s_{t-1} \cdot \exp\left\{\beta^{\text{loss}} \cdot |\ell_{t-1}-r| \right\}, \beta^{\text{low}}, \beta^{\text{high}}),
\end{equation*}
where $\beta^{\text{loss}}, \beta^{\text{score}}, \beta^{\text{low}}$ and $\beta^{\text{high}}$ are hyperparameters.
The clipping function $\text{clip}(x, m, M) = \max\{\min\{ x, M \}, m\}$ is applied to restrain the effect of an outlier $Y_t$ that is far from the boundaries.

This discussion above sheds light on the great flexibility of \texttt{Rolling RC}: we can accurately find the correct adjustment to the uncertainty set while being adaptive to rapid distributional shifts in the data. \rev{Furthermore, Theorem~\ref{thm:risk_guarantee} guarantees the risk validity regardless of the choice of the stretching function.}
In Appendix~\ref{sec:stretching_functions_ablation} we propose an additional stretching function and compare all proposed stretching functions. This analysis indicates that the `error adaptive' stretching is the best choice. 

\subsection{Controlling Multiple Risks}\label{sec:multi_risks}
In this section, we show how to control more than one risk and construct intervals that are valid for all given risks. To motivate the need for such a multiple risks controlling guarantee it may be best to consider the depth estimation example from Section \ref{sec:depth_example}. Here, we may wish to control not only the coverage of the entire depth image, as in Section~\ref{sec:depth_example}, but also the frequency at which the coverage at the center of the image falls below a certain threshold.
This design choice meets reality since the coverage at the center falls below $60\%$ in more than $17\%$ of the time-steps, as presented in Figure~\ref{fig:full_depth_example} in Appendix~\ref{sec:more_depth_exps}. \rev{This figure also indicates that controlling the center coverage does not control the center failure loss}.
Concretely, we formulate the center failure loss as:
\begin{equation}\label{eq:im_center_miscov}
L_\text{center failure}(Y_t, C(X_t)) = \mathbbm{1}\left\{\frac{1}{|\text{center}|} \left| (m,n) \in \text{center}: Y_t^{m,n}\in C^{m,n}(X_t) \right| \leq 60\%\right\}.
\end{equation}
We define the center of an image as the middlemost 50x50 grid of pixels.
Controlling the center failure loss at level $r=10\%$ ensures that more than 60\% of the center will be covered for 90\% of the images. 

More generally, suppose we are given $k$ arbitrary loss functions $\{L_i\}_{i=1}^k$ and aim to control their corresponding risks, each at level $r^i$. 
In this setting, the set constructing function $f(\cdot)$ gets as an input the test $X_t$, the fitted model $\mathcal{M}_t$, and a calibration \emph{vector} $\underline{\theta}_t\in\mathbb{R}^k$, and returns a prediction set
\begin{equation}\label{eq:multiple_risk_control_equation}
\hat{C}_t(X_t) = f(X_t, \underline{\theta}_t, \mathcal{M}_t)\in 2^{\mathcal{Y}}.
\end{equation}
Similarly to the single risk-controlling formulation described in Section~\ref{sec:single_risk_general}, $\underline{\theta}_t$ controls the size of the generated set: by increasing the coordinates in $\underline{\theta}_t$ we encourage the construction of larger sets with lower risks, 
and we tune it likewise:
\begin{equation}
\underline{\theta}^i_{t+1} = \underline{\theta}^i_t + \underline{\gamma}^i (l^i_t - r^i), 
\end{equation}
where $l^i_t = L_i(Y_t, \hat{C}_t(X_t))$ 
and $\underline{\gamma}^i>0, i=1,\dots,L$ is the corresponding step size.
We now show that this procedure is guaranteed to produce uncertainty sets with valid risks.
\begin{theorem}\label{thm:multi_risk_guarantee1} 
Suppose that $f:(\mathcal{X}, \mathcal{R}, \mathbb{M})\rightarrow 2^{\mathcal{Y}}$ is an interval/set constructing function. In addition, suppose that there exist constants $\{M^i\}_{i=1}^k$ such that for all $X$ and $\mathcal{M}$, $f(X, \underline{\theta}, \mathcal{M}) = \mathcal{Y}$ if $\underline{\theta}^i > M^i$ for some $i\in\{1,...k\}$. Further suppose that the losses are bounded and satisfy $L^i(y,\mathcal{Y}) < r^i$ for every $y\in\mathcal{Y}$ and $i\in\{1,...k\}$. 
Consider the following series of calibrated intervals: $\{\hat{C}_t(X_t)\}_{t=1}^{\infty}$, where $\hat{C}_t(X_t)$ is defined according to~\eqref{eq:multiple_risk_control_equation}. Then, the calibrated intervals attain valid risk: 
\begin{equation}
\forall i\in\{1,...,k\} \ \ \ \exists D^i\in\mathbb{R} \ \ \ \textup{ s.t } \ \ \  \frac{1}{T} \sum_{t=1}^T { L_i(Y_t, \hat{C}_t(X_t))} \leq r^i+\frac{D^i}{T} \xrightarrow[T \to \infty]{} r^i .
\end{equation}
\end{theorem}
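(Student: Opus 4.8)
# Proof Proposal for Theorem~\ref{thm:multi_risk_guarantee1}

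The plan is to reduce the multi-risk statement to $k$ essentially independent one-dimensional arguments, mirroring the telescoping/boundedness trick that underlies Theorem~\ref{thm:risk_guarantee}. Fix an index $i \in \{1, \dots, k\}$ and focus on the scalar sequence $\underline{\theta}^i_t$, which evolves by $\underline{\theta}^i_{t+1} = \underline{\theta}^i_t + \underline{\gamma}^i(l^i_t - r^i)$ independently of the other coordinates. The key structural observation is a one-sided barrier: whenever $\underline{\theta}^i_t > M^i$, the hypothesis forces $f(X_t, \underline{\theta}_t, \mathcal{M}_t) = \mathcal{Y}$ \emph{regardless of the other coordinates}, hence $l^i_t = L^i(Y_t, \mathcal{Y}) < r^i$, so the increment $\underline{\gamma}^i(l^i_t - r^i)$ is strictly negative and $\underline{\theta}^i_{t+1} < \underline{\theta}^i_t$. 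First I would use this to show $\underline{\theta}^i_t$ stays bounded above: since the loss is bounded, say $|L^i| \le B^i$, each step moves $\underline{\theta}^i$ by at most $\underline{\gamma}^i(B^i + |r^i|)$ — call this $\Delta^i$ — so starting from $\underline{\theta}^i_1 = 0$ the sequence can never exceed $U^i := \max\{0, M^i\} + \Delta^i$, because to cross above $U^i$ it would have had to take a positive step from a value already exceeding $M^i$, which is impossible.

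Next I would telescope the update rule. Summing $\underline{\theta}^i_{t+1} - \underline{\theta}^i_t = \underline{\gamma}^i(l^i_t - r^i)$ over $t = 1, \dots, T$ gives
\begin{equation}
\underline{\theta}^i_{T+1} - \underline{\theta}^i_1 = \underline{\gamma}^i \sum_{t=1}^T (l^i_t - r^i),
\end{equation}
hence
\begin{equation}
\frac{1}{T}\sum_{t=1}^T l^i_t = r^i + \frac{\underline{\theta}^i_{T+1} - \underline{\theta}^i_1}{\underline{\gamma}^i T} = r^i + \frac{\underline{\theta}^i_{T+1}}{\underline{\gamma}^i T},
\end{equation}
using $\underline{\theta}^i_1 = 0$. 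From the upper bound in the previous paragraph, $\underline{\theta}^i_{T+1} \le U^i$, so with $D^i := U^i / \underline{\gamma}^i$ we obtain $\frac{1}{T}\sum_{t=1}^T L_i(Y_t, \hat{C}_t(X_t)) \le r^i + D^i/T$, and the right-hand side tends to $r^i$ as $T \to \infty$. This is exactly the claimed conclusion, and it holds simultaneously for every $i$ since the argument is run coordinate by coordinate.

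The main obstacle — really the only delicate point — is making the one-sided barrier argument fully rigorous in the presence of coupling: one must verify that the event $\{l^i_t < r^i\}$ is triggered purely by $\underline{\theta}^i_t > M^i$ and is not spoiled by the behaviour of the other coordinates of $\underline{\theta}_t$, which is precisely what the hypothesis ``$f(X, \underline{\theta}, \mathcal{M}) = \mathcal{Y}$ if $\underline{\theta}^i > M^i$ for \emph{some} $i$'' guarantees (the disjunctive ``some'' is what makes each coordinate's barrier self-contained). I would also take care to state the boundedness constant $B^i$ explicitly, note that unlike Theorem~\ref{thm:risk_guarantee} we need no lower barrier $m^i$ and no assumption on $L^i(y, \emptyset)$ — the one-sided inequality is all the upper bound on the empirical risk requires — and remark that $\underline{\gamma}^i$, $M^i$, and $B^i$ must be fixed in advance for $D^i$ to be a genuine constant. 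A symmetric argument with a lower barrier would give a matching lower bound and hence equality in the limit, but the theorem as stated only asks for the one-sided bound, so I would stop here.
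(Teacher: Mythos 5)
Your proposal is correct and follows essentially the same route as the paper's proof: establish a one-sided upper bound on each coordinate $\underline{\theta}^i_t$ (the paper's Lemma~\ref{lem:upper_bounded_theta}, proved by the same ``a step up past the barrier would require $\underline{\theta}^i_{t-1}>M^i$, forcing $l^i_t<r^i$'' contradiction you sketch as a first-crossing argument), then telescope the update rule to get $\frac{1}{T}\sum_t l^i_t \le r^i + D^i/T$ coordinate by coordinate. Your handling of per-coordinate constants $M^i$, $\underline{\gamma}^i$ and the explicit remark that no lower barrier or assumption on $L^i(y,\emptyset)$ is needed are, if anything, slightly more careful than the paper's write-up, which reuses a single $M$ and $\gamma$ in its lemma.
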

If we further assume that the risks are more synchronized with each other, we can achieve an exact multiple risks control, as stated next.
\begin{theorem}\label{thm:multi_risk_guarantee2} 
Suppose that $f$ is an interval/set constructing function and $\{L^i\}_{i=1}^k$ are loss functions as in Theorem \ref{thm:multi_risk_guarantee1}. Further suppose that there exist constants $\{m^i\}_{i=1}^k$ such that $f(X, \underline{\theta}, \mathcal{M}) = \emptyset$ if $\underline{\theta}^i < m^i$ for some $i\in\{1,...k\}$ and that the losses satisfy $L^i(y,\emptyset) > r^i$ for every $y\in\mathcal{Y}$ and $i\in\{1,...k\}$. Then, the intervals achieve the exact risk:
\begin{equation}
\forall i\in\{1,...,k\}: \mathcal{R}(\hat{C}) =\lim_{T \to \infty} \frac{1}{T} \sum_{t=1}^T { L_i(Y_t, \hat{C}_t(X_t))} = r^i
\end{equation}
\end{theorem}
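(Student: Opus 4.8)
The plan is to run the argument of Theorem~\ref{thm:risk_guarantee} separately in each of the $k$ coordinates, using the fact that the structural hypotheses on $f$ are one-sided in every coordinate of $\underline{\theta}_t$. Fix $i\in\{1,\dots,k\}$. First I would telescope the coordinatewise update rule $\underline{\theta}^i_{t+1}=\underline{\theta}^i_t+\underline{\gamma}^i(l^i_t-r^i)$ over $t=1,\dots,T$, which gives $\underline{\theta}^i_{T+1}=\underline{\theta}^i_1+\underline{\gamma}^i\sum_{t=1}^T(l^i_t-r^i)$ and hence the identity
\[
\frac{1}{T}\sum_{t=1}^T L_i(Y_t,\hat{C}_t(X_t)) \;=\; r^i+\frac{\underline{\theta}^i_{T+1}-\underline{\theta}^i_1}{\underline{\gamma}^i\,T}.
\]
So it suffices to show that the scalar sequence $(\underline{\theta}^i_t)_t$ stays inside a bounded interval with $T$-independent endpoints; then the remainder term is $O(1/T)$ and the limit is exactly $r^i$, for every $i$.

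To prove the two-sided boundedness I would first use that the losses are bounded to fix a constant $G^i$ with $|l^i_t-r^i|\le G^i$ for all $t$. For the upper bound, induct on $t$: if $\underline{\theta}^i_t\le M^i$ then $\underline{\theta}^i_{t+1}\le M^i+\underline{\gamma}^i G^i$; if instead $\underline{\theta}^i_t>M^i$, the hypothesis (with the ``for some $i$'' clause witnessed by this very coordinate) forces $f(X_t,\underline{\theta}_t,\mathcal{M}_t)=\mathcal{Y}$, so $l^i_t=L^i(Y_t,\mathcal{Y})<r^i$ and the update strictly decreases $\underline{\theta}^i_t$. Thus $\underline{\theta}^i_t\le\max\{\underline{\theta}^i_1,\,M^i+\underline{\gamma}^i G^i\}$ for all $t$. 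The lower bound is the mirror image: whenever $\underline{\theta}^i_t<m^i$ the hypothesis gives $f(X_t,\underline{\theta}_t,\mathcal{M}_t)=\emptyset$, so $l^i_t=L^i(Y_t,\emptyset)>r^i$ and the update strictly increases $\underline{\theta}^i_t$, yielding $\underline{\theta}^i_t\ge\min\{\underline{\theta}^i_1,\,m^i-\underline{\gamma}^i G^i\}$. Substituting these bounds into the displayed identity and letting $T\to\infty$ proves $\lim_T\frac{1}{T}\sum_{t=1}^T L_i(Y_t,\hat{C}_t(X_t))=r^i$; the same computation yields, as in the remark after Theorem~\ref{thm:risk_guarantee}, an explicit finite-sample rate $|\frac{1}{T}\sum_{t=1}^T l^i_t-r^i|\le D^i/T$ with $D^i$ depending only on $M^i,m^i,\underline{\gamma}^i$ and the loss bound.

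I do not anticipate a serious difficulty: once the structural hypotheses are granted, the scalar dynamics of the $k$ coordinates decouple, and each is controlled exactly as in Theorem~\ref{thm:risk_guarantee}. The one point that genuinely differs from a verbatim copy of that proof — and hence the thing to be careful about — is that $f$ couples all coordinates, so I must check that when a single coordinate $\underline{\theta}^i_t$ leaves its band $[m^i,M^i]$ the hypothesis still pins $f$ down to $\mathcal{Y}$ (resp.\ $\emptyset$) irrespective of the other coordinates; this is precisely what the ``for some $i$'' quantifiers in the statement ensure, so the per-coordinate monotonicity argument carries over without change. A minor bookkeeping step is to note that the constant $G^i$ exists because each $L^i$ is assumed bounded.
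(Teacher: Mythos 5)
Your proposal is correct and follows essentially the same route as the paper's proof: telescope the coordinatewise update to express the average loss as $r^i$ plus $(\underline{\theta}^i_{T+1}-\underline{\theta}^i_1)/(\underline{\gamma}^i T)$, then show $\underline{\theta}^i_t$ stays in a $T$-independent band because exceeding $M^i$ forces $f=\mathcal{Y}$ (loss below $r^i$, so $\theta^i$ decreases) and dropping below $m^i$ forces $f=\emptyset$ (loss above $r^i$, so $\theta^i$ increases). The only cosmetic differences are that you argue boundedness by induction rather than by contradiction at the first violation time, and your explicit constants differ slightly from the paper's $m-2\gamma B$ and $M+2\gamma B$; your observation that the ``for some $i$'' clauses decouple the per-coordinate dynamics matches the paper's use of these hypotheses.
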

The proofs of the theoretical results are given in Appendix~\ref{sec:theoretical_results}. In words, Theorem~\ref{thm:multi_risk_guarantee1} guarantees the validity of the risks, while Theorem~\ref{thm:multi_risk_guarantee2} guarantees that all risks are exactly controlled by assuming that during the calibration process there are no two coordinates in $\underline{\theta}_t$ such that the first is too low (requires widening the interval), and the other is too high (requires shrinking the interval). \rev{Lastly, we note that in this section we presented one implementation of \texttt{Rolling RC} to control multiple risks, although other approaches may be valid as well, e.g., techniques with calibration parameters that are independent of the number of risks.}


\section{Experiments}\label{sec:experiments}

\subsection{Single Response: Controlling a Single Risk in Regression Tasks}
In this section, we study the effectiveness of our proposed calibration scheme for time series data with 1-dimensional response variables. Towards that end, we describe two performance metrics that we will use in the following numerical simulations to assess conditional/local coverage rate.

\subsubsection{Time-Series Conditional Coverage Metrics}\label{sec:metrics}
\textbf{\texttt{MC}:} The \emph{miscoverage counter} counts how many miscoverage events happened in a row until time $t$:
\begin{equation}\label{eq:MC}
    \texttt{MC}_t = \begin{cases}
      \texttt{MC}_{t-1} + 1, & Y_t \not \in \hat{C}_t(X_t) \\
      0, & \text{otherwise}
    \end{cases},
\end{equation}
where $\texttt{MC}_0=0$.
Similarly to the coverage metric, $\texttt{MC}_t=0$ at timestamps for which $Y_t \in \hat{C}_t(X_t)$. By contrast, when $Y_t \not \in \hat{C}_t(X_t)$, the value of $\texttt{MC}_t$ is the length of the sequence of previous miscoverage events. Therefore, we can apply \texttt{Rolling RC} to control the \texttt{MC} level and prevent  
long sequences of failures. Interestingly, controlling the miscoverage counter immediately grants a control over the standard coverage metric, as stated next.
\begin{proposition}\label{thm:mc_controls_coverage} 
If the $\texttt{MC}$ risk is at most $\alpha$, then the miscoverage risk is at most $\alpha$.
\end{proposition}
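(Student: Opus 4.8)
The plan is to exploit a simple pointwise domination: at every time step $t$, the miscoverage counter $\texttt{MC}_t$ is at least as large as the $0$-$1$ miscoverage indicator $\mathbbm{1}\{Y_t \notin \hat{C}_t(X_t)\}$. First I would verify this from the two-case recursion in~\eqref{eq:MC}. An easy induction on $t$, starting from $\texttt{MC}_0 = 0$, shows $\texttt{MC}_t \ge 0$ for all $t$. Then: if $Y_t \in \hat{C}_t(X_t)$, both $\texttt{MC}_t$ and $\mathbbm{1}\{Y_t \notin \hat{C}_t(X_t)\}$ equal $0$; if $Y_t \notin \hat{C}_t(X_t)$, then $\texttt{MC}_t = \texttt{MC}_{t-1} + 1 \ge 1 = \mathbbm{1}\{Y_t \notin \hat{C}_t(X_t)\}$. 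Hence $\mathbbm{1}\{Y_t \notin \hat{C}_t(X_t)\} \le \texttt{MC}_t$ for every $t \ge 1$.

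Next I would average this inequality over the horizon: for every $T \in \mathbb{N}$,
\[
\frac{1}{T}\sum_{t=1}^T \mathbbm{1}\{Y_t \notin \hat{C}_t(X_t)\} \;\le\; \frac{1}{T}\sum_{t=1}^T \texttt{MC}_t .
\]
Letting $T \to \infty$, the right-hand side converges to the $\texttt{MC}$ risk, which by hypothesis is at most $\alpha$. Since the left-hand side is a nonnegative, uniformly bounded sequence, its $\limsup$ is therefore at most $\alpha$; whenever the Cesàro limit in~\eqref{eq:risk_requirement} exists for the $0$-$1$ loss (as in the setting of Theorem~\ref{thm:risk_guarantee}), this $\limsup$ bound is exactly the miscoverage risk, so the miscoverage risk is at most $\alpha$, as claimed.

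The argument has essentially no hard step — it is a one-line domination followed by monotonicity of averages and limits. The only point that deserves a word of care is the passage to the limit, since a priori the Cesàro average of the miscoverage indicators need not converge on its own; I would address this by stating the conclusion via $\limsup$, or by invoking the existence of the relevant limits in the regime where~\eqref{eq:risk_requirement} is assumed, so that the $\limsup$ bound coincides with the risk as defined there.
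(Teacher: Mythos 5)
Your proposal is correct and takes essentially the same route as the paper: both proofs rest on the pointwise domination $\mathbbm{1}\{Y_t \notin \hat{C}_t(X_t)\} \leq \texttt{MC}_t$, followed by averaging over $t$ and passing to the limit. Your additional care in phrasing the limiting step via a $\limsup$ is a minor refinement of the same argument, not a different approach.
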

In Appendix~\ref{sec:mc_controls_coverage_proof} we provide the proof of this proposition and in Section~\ref{sec:choosing_mc_alpha_level} we explain how to choose the nominal \texttt{MC} level to achieve a given coverage rate $1-\alpha$. In a nutshell, we argue that a model that has access to the true conditional quantiles
attains an \texttt{MC} of $\alpha/(1-\alpha)$. Therefore, in our experiments we seek to form the tightest intervals with an \texttt{MC} risk controlled at this level.

\textbf{\texttt{MSL}:} As implied by its name, the metric \emph{miscoverage streak length} evaluates the average length of miscoverage streaks of the constructed prediction intervals. In contrast to \texttt{MC}, which is defined on a single timestamp, the \texttt{MSL} is defined over a sequence of uncertainty sets $\{\hat{C}_t(X_t)\}_{t=T_0}^{T_1}\subseteq 2^\mathcal{Y}$ and response variables $\{Y_t\}_{t=T_0}^{T_1}\subseteq\mathcal{Y}$ as:
\begin{equation} \label{eq:msl}
{\texttt{MSL}}:=\frac{1}{|\mathcal{I}|}\sum_{t\in \mathcal{I}} \min \{i: Y_{t+i}\in\hat{C}_{t+i}(X_{t+i})\text{ or } t=T_1\},
\end{equation}
where $\mathcal{I}$ is a set containing the starting times of all miscoverage streaks. The formal description is given in Appendix~\ref{sec:msl_true_q}, where we also show that an ideal model that has access to the true conditional quantiles attains an \texttt{MSL} of $1/(1-\alpha)$. Therefore, we seek to produce the narrowest intervals having an \texttt{MSL} close to this value.

\subsubsection{Controlling the Binary Loss}\label{sec:qr_experiments}
In this section, we focus on the more standard long-range coverage loss as in \texttt{ACI}~\citep{aci}. We test the performance of \texttt{Rolling RC} on five real-world benchmark data sets with a 1-dimensional $Y$: \cite{power_data}, \cite{energy_data}, \cite{traffic_data}, \cite{wind_data}, and \cite{prices_data}.
We commence by fitting an initial quantile regression model on the first 5000 data points, to obtain a reasonable predictive system. Then, passing time step 5001, we start applying the calibration procedure while continuing to fit the model in an online fashion; we keep doing so until reaching time step 20000. Lastly, we measure the performance of the deployed calibration method on data points corresponding to time steps 8001 to 20000. In all experiments, we fit an LSTM predictive model \citep{lstm} in an online fashion, minimizing the pinball loss to estimate the 0.05 and 0.95 conditional quantiles of $Y_t \mid X_t$; these estimates are used to construct prediction intervals with target 90\% coverage rate. We calibrate the intervals according to~\eqref{eq:rci_stretched_reg} and examine two options for the stretching function: (i) no stretching, and (ii) `error adaptive' stretching, described in Section~\ref{sec:stretching_functions}. Appendix~\ref{sec:qr_setup} provides more details regarding 
the data sets and
this experimental setup.

\begin{figure}[ht]
\centering
\subfloat{%
        \includegraphics[height=0.25\textwidth]{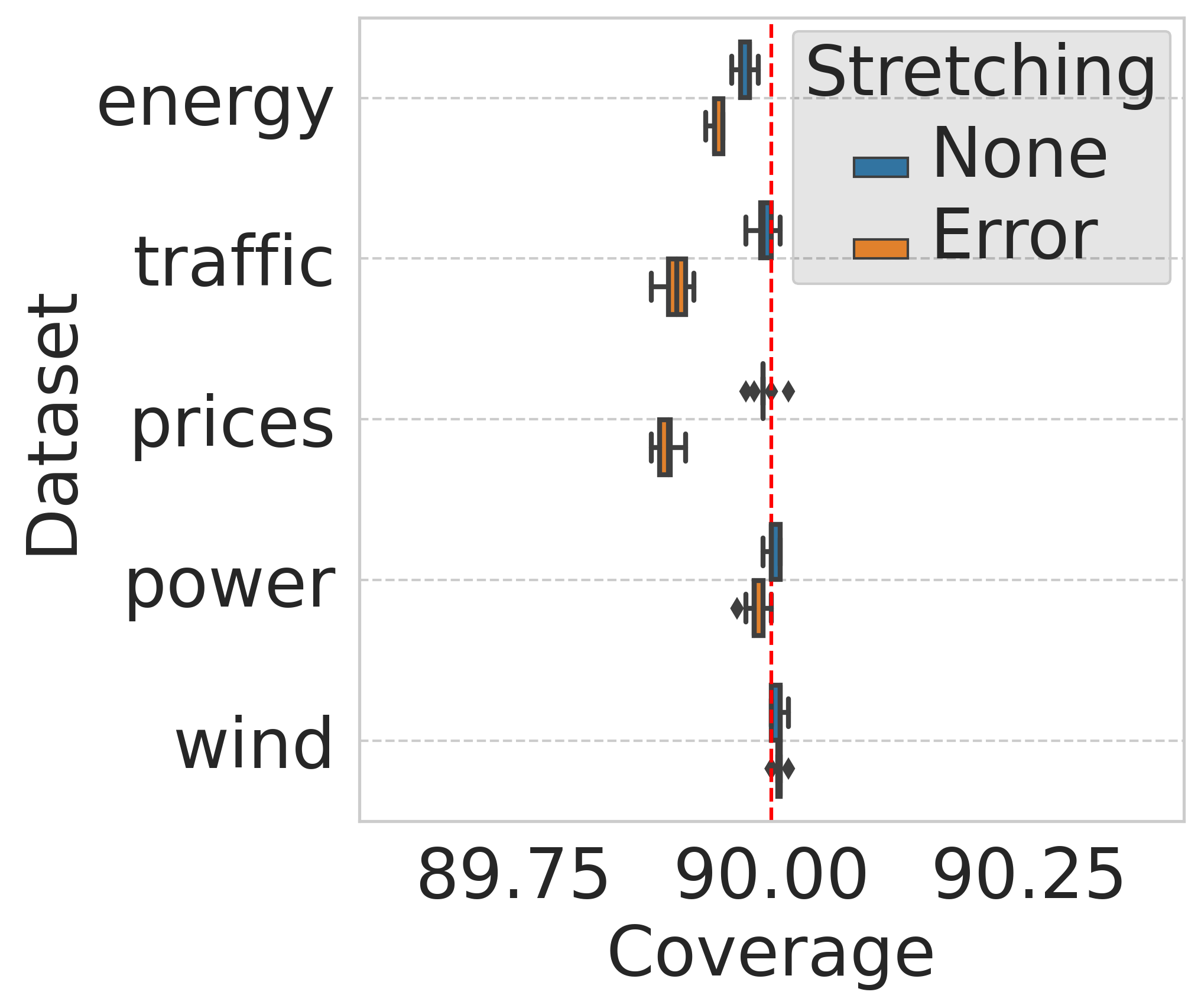}%
        }%
    \subfloat{%
        \includegraphics[height=0.25\textwidth]{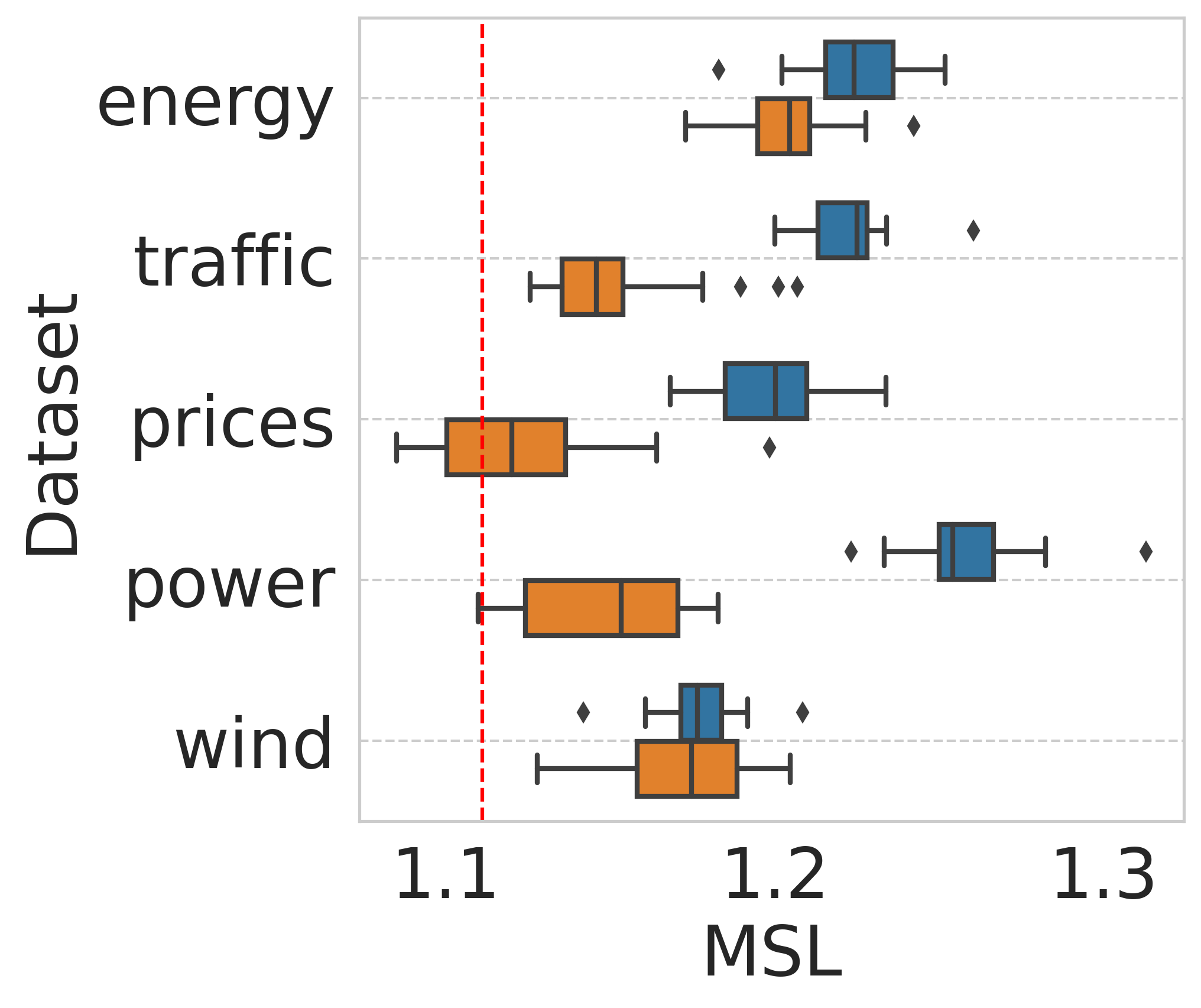}%
        }%
    \subfloat{%
        \includegraphics[height=0.25\textwidth]{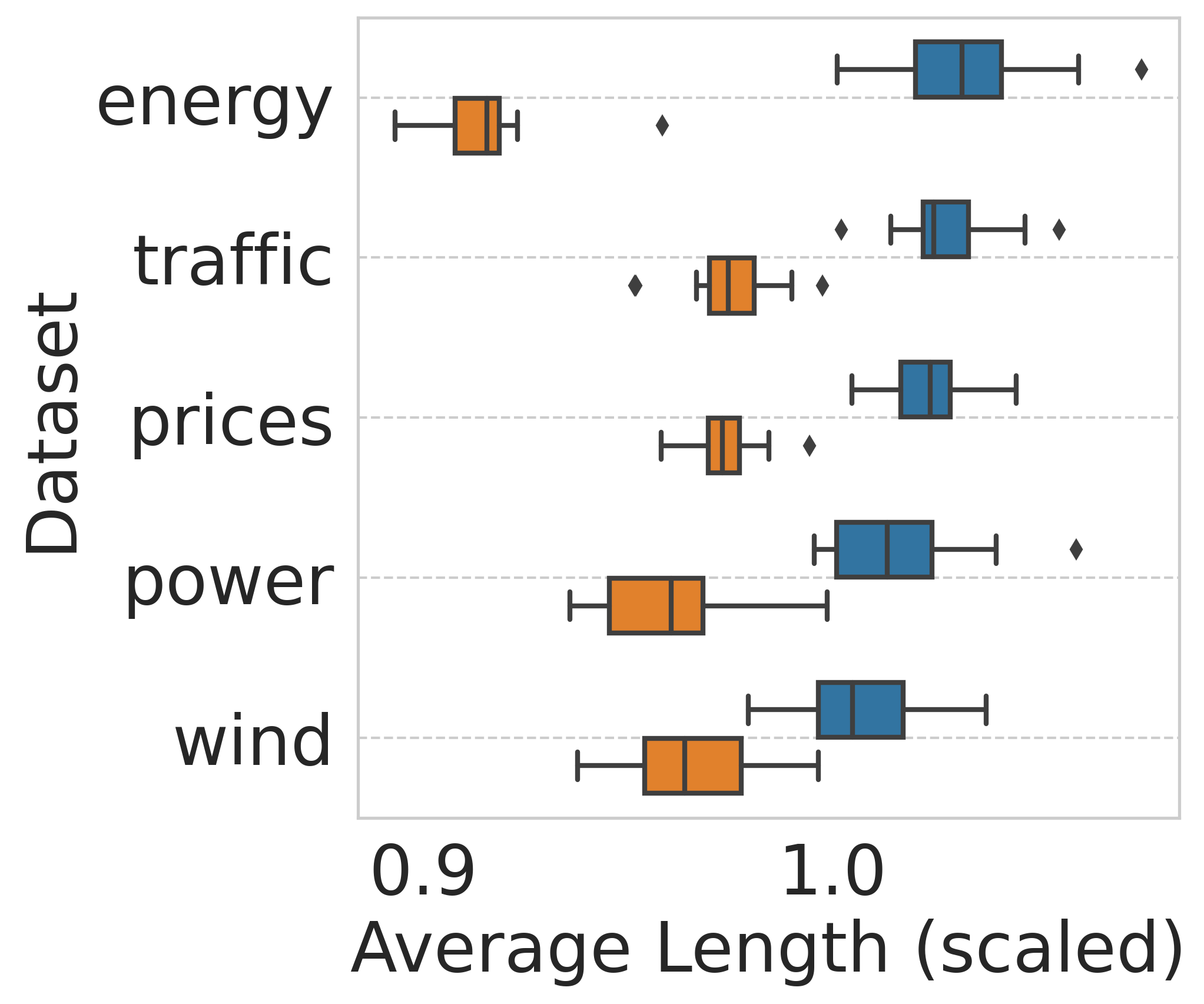}%
        }%
        \caption{Performance of \texttt{Rolling RC} on real data sets, aiming to control the coverage rate at level $1-\alpha=90\%$. The length of the prediction intervals is scaled per data set by the average length of the constructed intervals. Results are evaluated on 20 random initializations of the predictive model.}
        \label{fig:real_data_results}
\end{figure}
\noindent 
Figure \ref{fig:real_data_results} summarizes the performance metrics presented in Section \ref{sec:metrics}, showing that both stretching methods attain the desired coverage level; this is guaranteed by Theorem \ref{thm:risk_guarantee}. Additionally, this figure indicates that \texttt{Rolling RC} with the `error adaptive' stretching constructs narrower intervals with better conditional coverage compared to \texttt{Rolling RC} applied without stretching, as indicated by the \texttt{MSL} metric. 

\subsubsection{Comparing \texttt{Rolling RC} to vanilla \texttt{ACI}}

In this section, we analyze an instantiation of \texttt{ACI} \citep{aci}, which we refer to as \texttt{calibration with cal} that constructs uncertainty sets with a controlled miscoverage rate using a calibration set.
It uses calibration points, but does not hold out a large block. Rather, previous points are simultaneously used both for calibration and model fitting.
We run \texttt{Rolling RC} with either `error adaptive' stretching and without stretching, as described in Section~\ref{sec:stretching_functions}, and \texttt{calibration with cal} as defined in Appendix~\ref{sec:rrc_vs_aci}. 
Figure~\ref{fig:with_vs_without_cal_main} shows that \texttt{Rolling RC} with `error adaptive' stretching constructs the narrowest intervals while attaining the best conditional coverage metric. Furthermore, one can see that even without stretching, \texttt{Rolling RC} performs better than \texttt{calibration with cal}, as indicated by both performance metrics. In Appendix~\ref{sec:rrc_vs_aci} we present additional conditional coverage metrics as well as show that all methods achieve the nominal coverage level; this is guaranteed by Theorem \ref{thm:risk_guarantee}.

\begin{figure}[ht]
\centering
\begin{tabular}{cc}
    {
    \centering
    \includegraphics[height=0.25\textwidth]{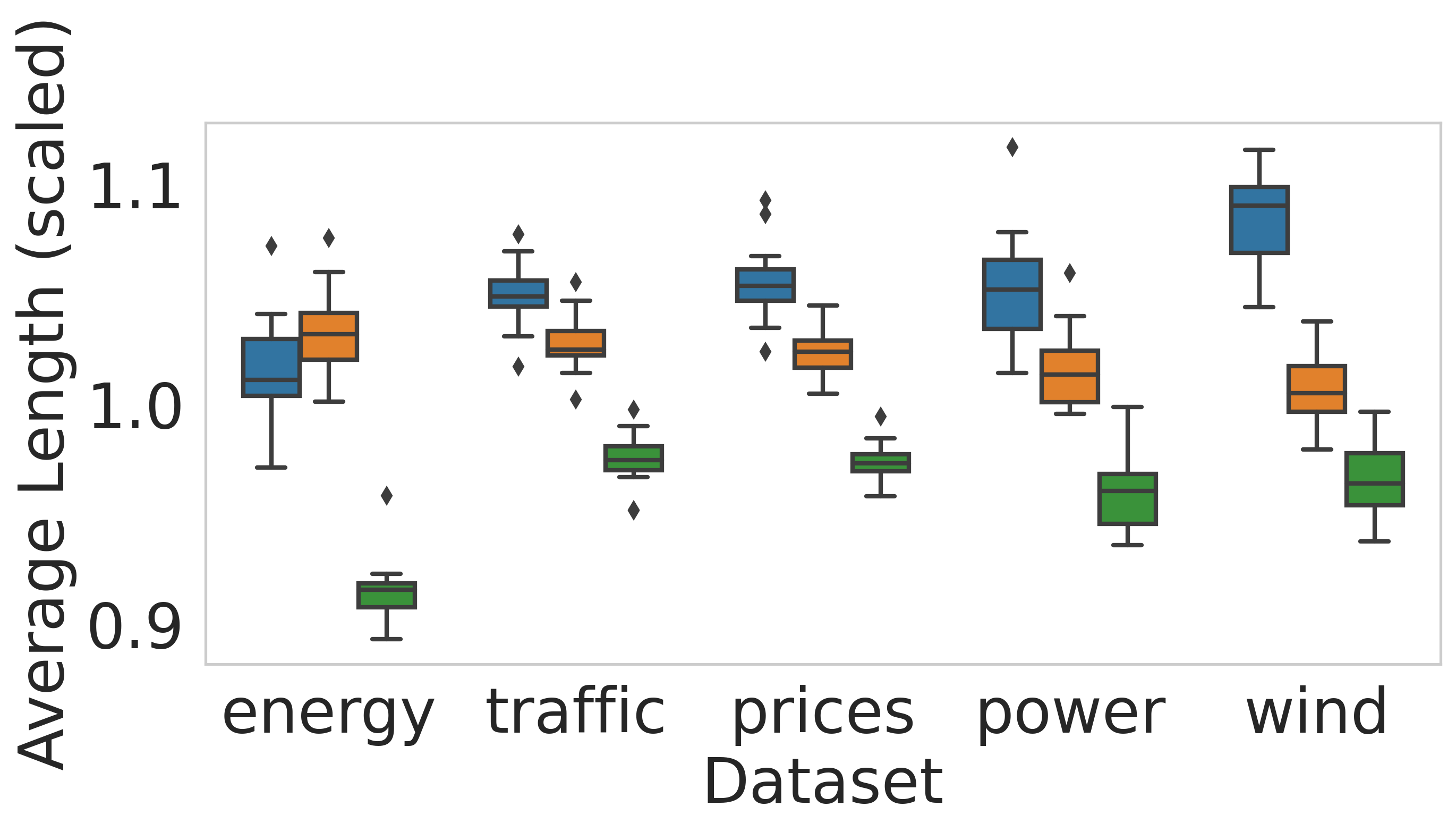}
    } & {
    \centering
    \includegraphics[height=0.23\textwidth]{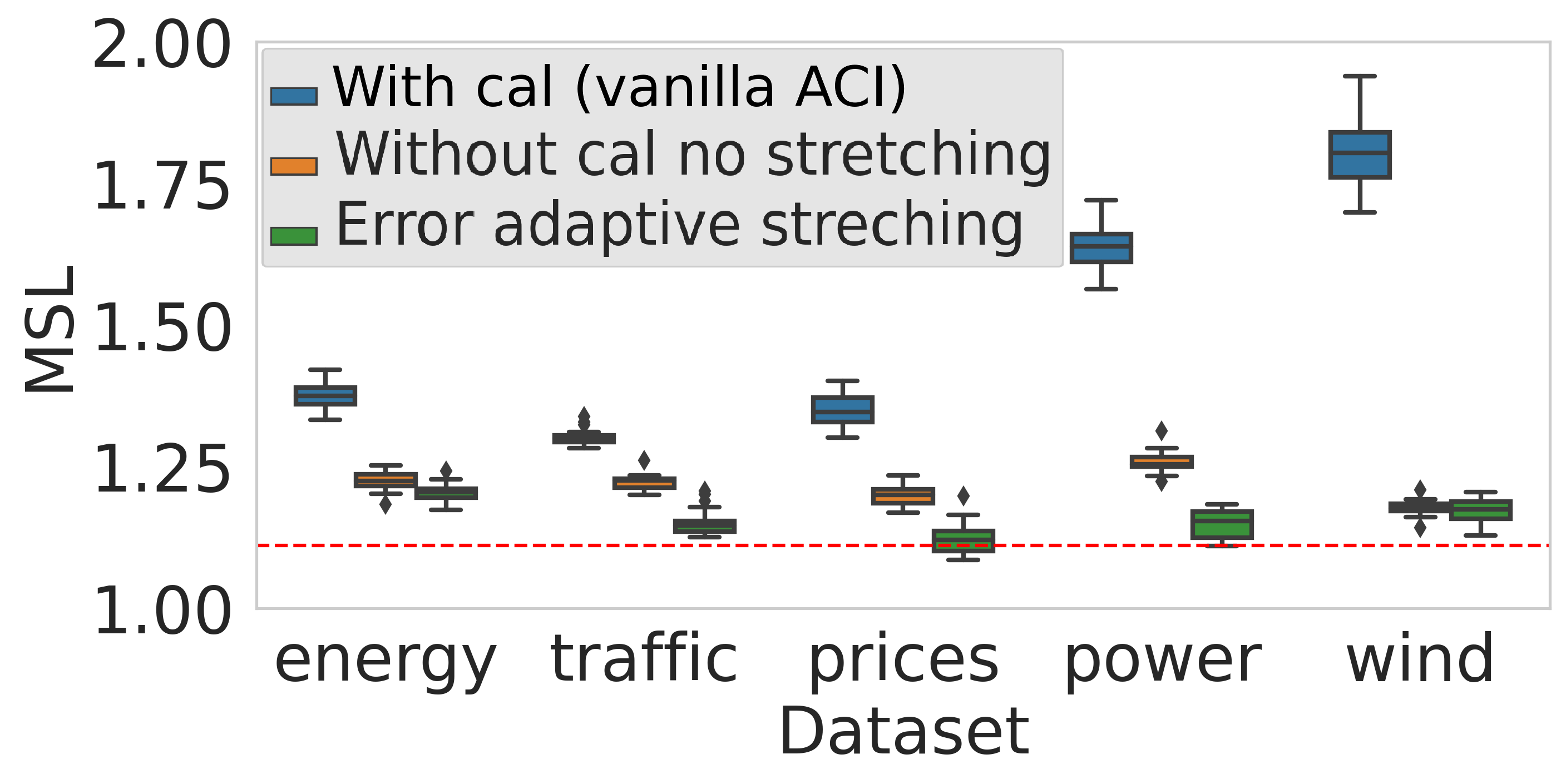}
        }
\end{tabular}
\captionof{figure}{Performance of \texttt{calibration with cal} (Algorithm~\ref{alg:rci_with_cal}) (blue), \texttt{Rolling RC} without stretching (orange), and \texttt{Rolling RC} with `error adaptive' stretching (green). All methods are applied to control the coverage rate at level $1-\alpha=90\%$. 
}
\label{fig:with_vs_without_cal_main}
\end{figure}

\subsubsection{Controlling Miscoverage Counter}\label{sec:mc_experiments}
Figure~\ref{fig:real_dataset_plot_MC} shows that \texttt{Rolling RC} applied on the real data sets with the goal of controlling the long-range coverage at level $1-\alpha=90\%$ achieves \texttt{MC} risk that is higher than $\alpha/(1-\alpha)=1/9$. Following the discussion in Section~\ref{sec:metrics}, this indicates that the constructed intervals tend to miscover one or more response variables in consecutive data points. To alleviate this, we repeat the same experiment in Section~\ref{sec:qr_experiments}, but apply \texttt{Rolling RC} to control the \texttt{MC} at level $r=1/9$. The results are summarized in Figure~\ref{fig:mc_metric}, revealing that (i) \rev{the \texttt{MC} risk is rigorously controlled even though it is defined over the time horizon}, and (ii) by controlling the \texttt{MC} risk we also achieve valid coverage rate.

\begin{figure}[ht]
\centering
    \subfloat[\texttt{MC} calibration]{%
        \includegraphics[height=0.25\textwidth]{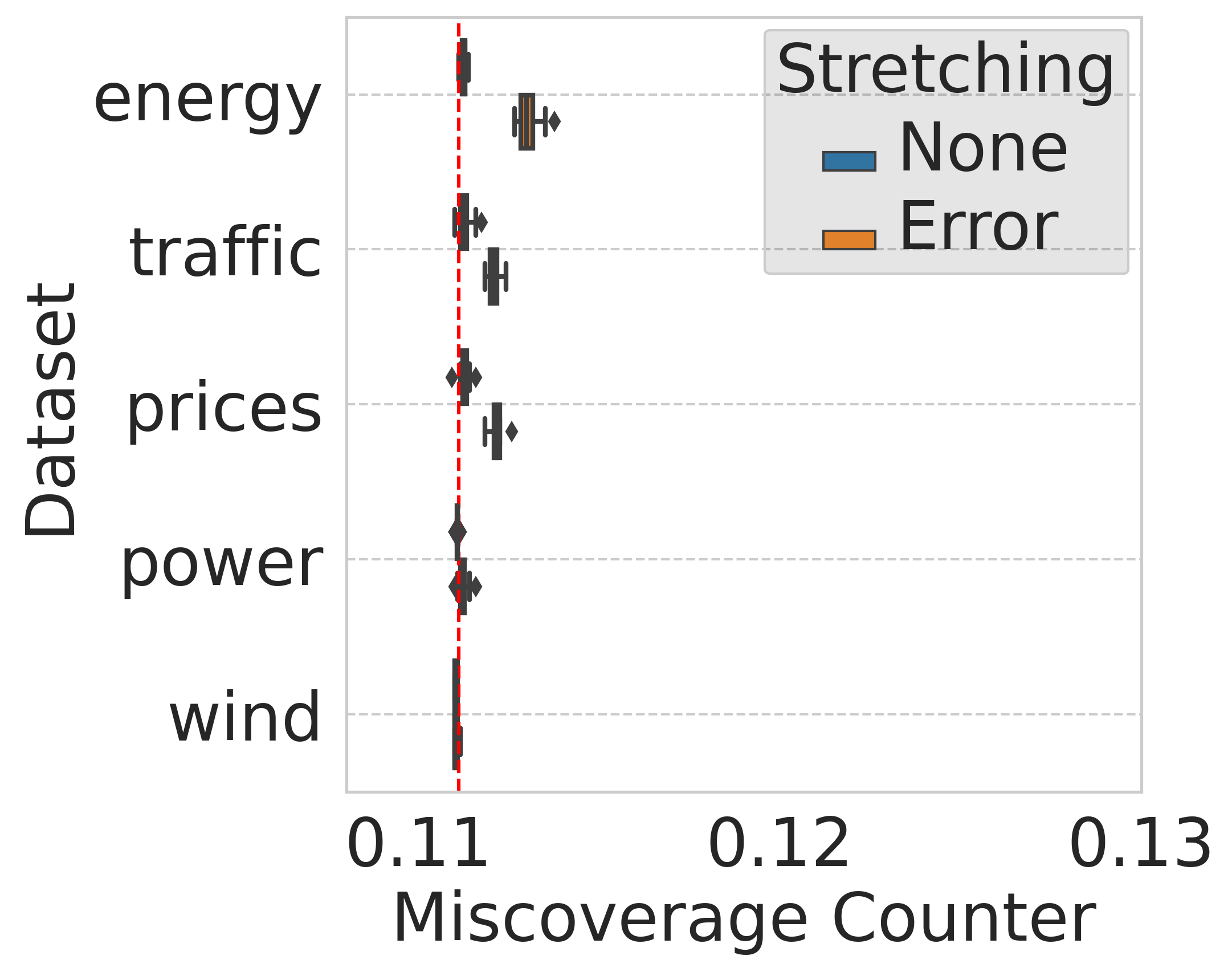}%
        \label{fig:MC_by_controlling_MC}%
        }%
    \subfloat[\texttt{MC} calibration]{%
        \includegraphics[height=0.25\textwidth]{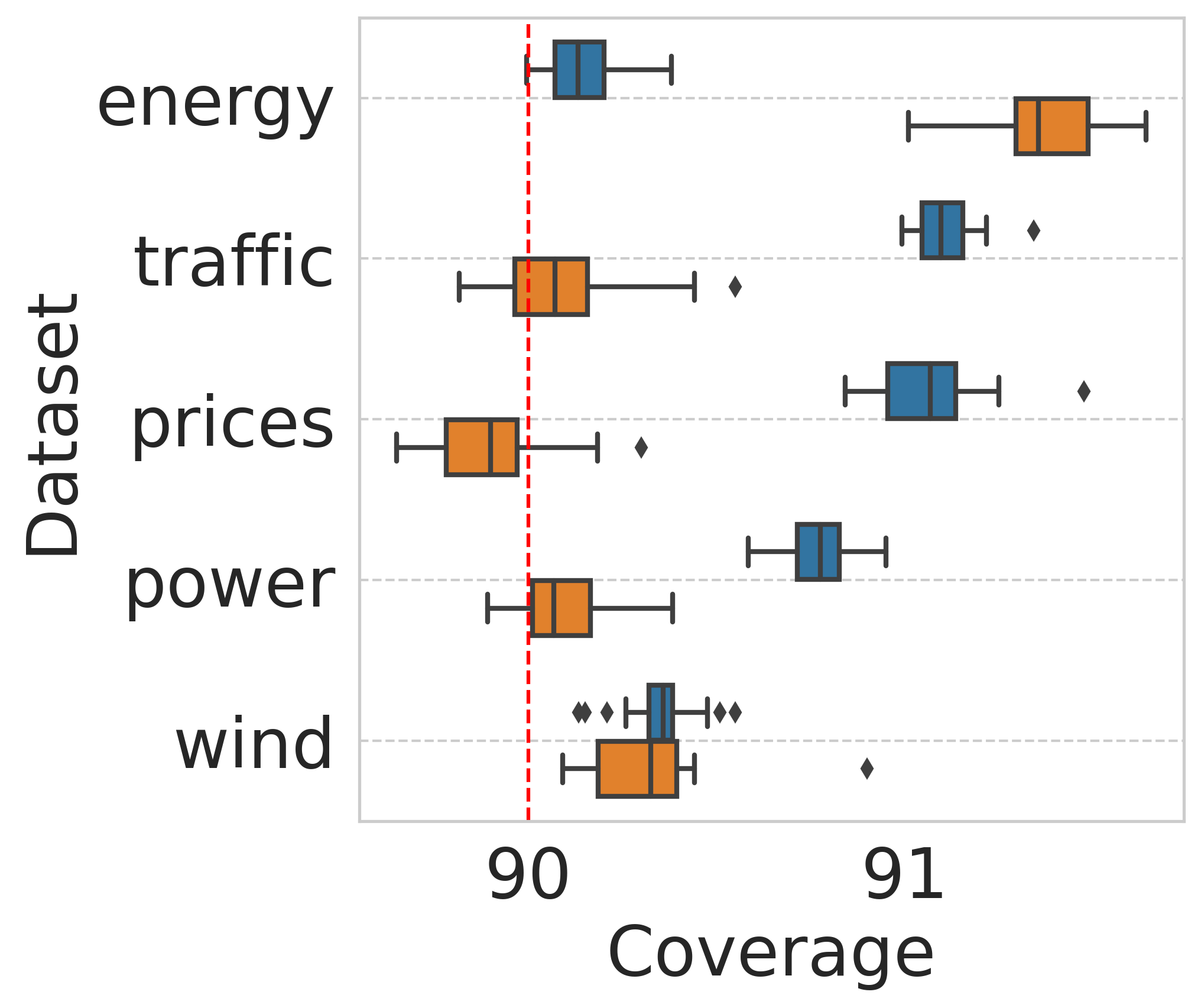}%
        \label{fig:MC_coverage_is_controlled}%
        }%
    \subfloat[Miscoverage calibration]{%
        \includegraphics[height=0.25\textwidth]{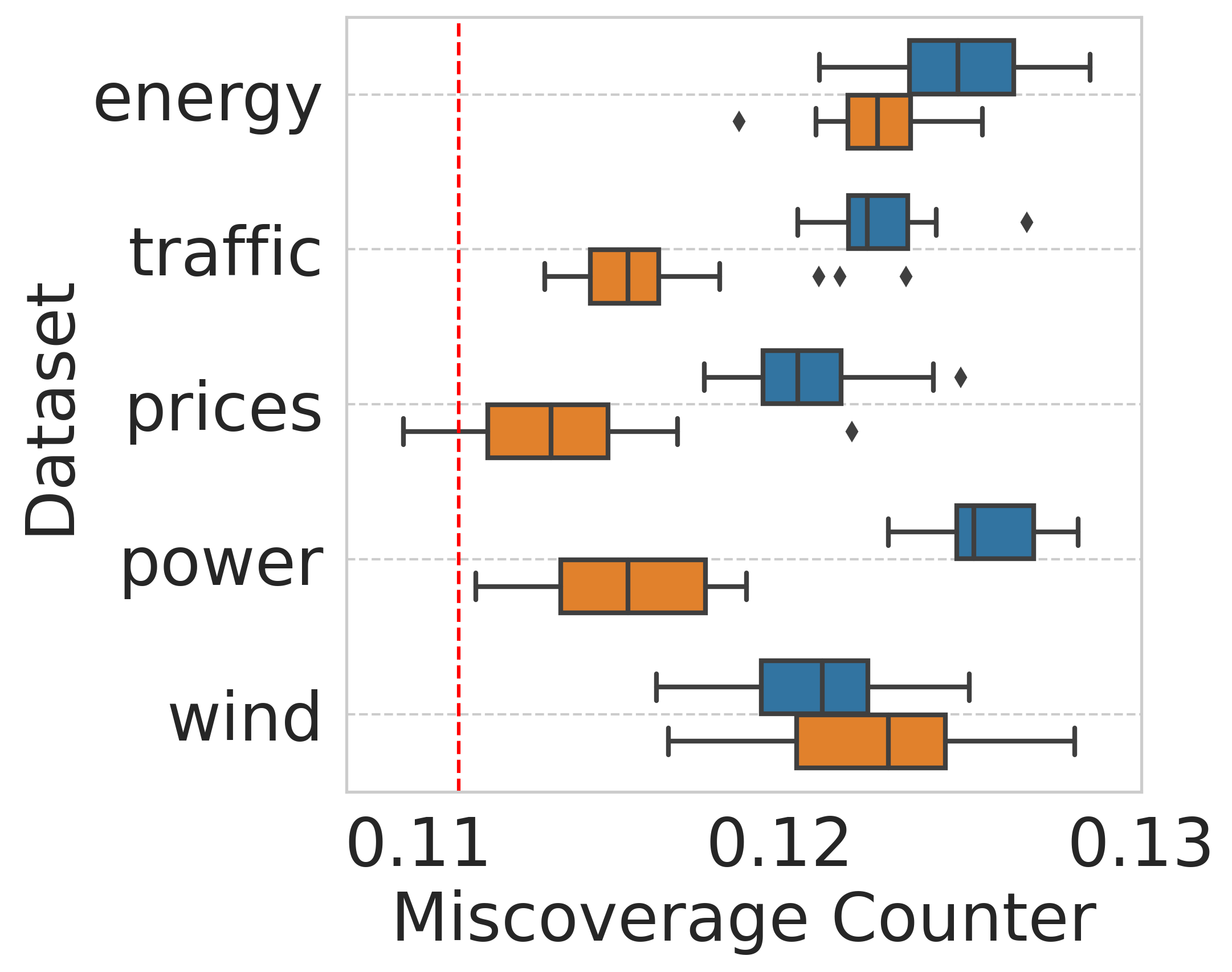}%
        \label{fig:real_dataset_plot_MC}%
        }%
        \caption{
        Performance of \texttt{Rolling RC} on real data sets. In (a) and (b) we aim to control the \texttt{MC} risk at level $r=\alpha/(1-\alpha)=1/9$. In (c) we aim to control the coverage at level $1-\alpha=90\%$. Other details are as in Figure~\ref{fig:real_data_results}.}
        \label{fig:mc_metric}
\end{figure}

\subsection{High Dimensional Response: Controlling Multiple Risks}\label{sec:multi_risks_experiments}
In this section, we analyze \texttt{Rolling RC} for multiple risks in the depth prediction setting. In particular, we follow the protocol described in Section~\ref{sec:depth_example} and apply the multiple risks controlling method from Section~\ref{sec:multi_risks} to control the image miscoverage rate defined in~\eqref{eq:im_miscov} at level $20\%$ and the center failure rate from~\eqref{eq:im_center_miscov} at level $10\%$. 
\rev{We construct the intervals according to~\eqref{eq:im2im_f} from Appendix~\ref{sec:online_i2i}, using exponential stretching where the vector $\underline{\theta}_t$ is aggregated into a scalar by taking the maximal coordinate in this vector.
We repeat this experiment for 10 trials. In Appendix~\ref{sec:depth_setup} we provide the full details about this experimental setup. }

Figure~\ref{fig:depth_multi} displays the results of \texttt{Rolling RC} obtained by controlling (i) only the image miscoverage loss, and (ii) both the image miscoverage loss and the center failure loss. As portrayed, when \texttt{Rolling RC} is set to control only the image miscoverage risk, it violates the center failure loss; the center coverage falls below $60\%$ for $17\%$ of the time-steps. However, when applying \texttt{Rolling RC} to control the two risks, it achieves both a valid image coverage rate, of approximately $85.6\%$, and a valid center failure rate, of $9.9\%$. This is not a surprise, as it is guaranteed by Theorem~\ref{thm:multi_risk_guarantee1}.

\begin{figure}[ht]
\centering
\includegraphics[width=1\textwidth]{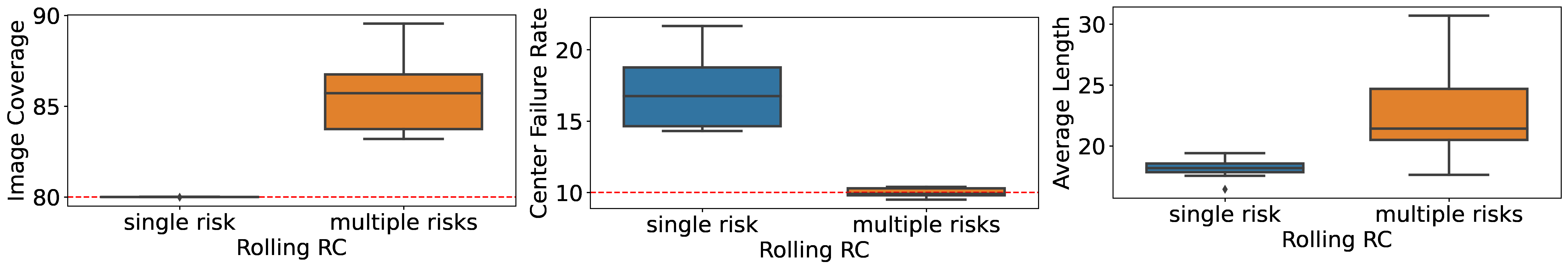} \caption{Performance of \texttt{Rolling RC} applied to control only the `image coverage' (single risk) or both `image coverage' and `center failure' (multiple risks). 
}
\label{fig:depth_multi}
\end{figure}

\section{Conclusion}
In this paper, we introduced \texttt{Rolling RC}, a novel method for quantifying prediction uncertainty for any time-series data using any online learning model. Our proposal is guaranteed to achieve any desired level of risk~\eqref{eq:risk_requirement} without making assumptions on the data, and can be applied for a broad class of tasks, such as regression, classification, image-to-image regression, and more. Furthermore, in Section~\ref{sec:multi_risks} we extended \texttt{Rolling RC} to provably control multiple risks, so that the uncertainty sets it constructs are valid for all given risks. 
One limitation of our method is the reliance on a fixed step size $\gamma$, used to tune the raw risk level; improper choice of this parameter may introduce undesired delays in adapting to distributional shifts.
Therefore, it is of great interest to develop a procedure that would automatically choose $\gamma$, e.g., by borrowing ideas from \citep{zaffran2022adaptive, gibbs2022conformal}. Meanwhile, we suggested a way to overcome this limitation using a stretching function $\varphi$, which leads to improved performance, as indicated by the experiments. 
\subsubsection*{Acknowledgments}
Y.R., L.R., and S.F. were supported by the ISRAEL SCIENCE FOUNDATION (grant No. 729/21). Y.R. also thanks the Career Advancement Fellowship, Technion, for providing research support. S.F. thanks Idan Aviv for insightful discussions regarding depth estimation. S.B. thanks Isaac Gibbs for comments on an early version of this manuscript.

\bibliography{bibliography}
\bibliographystyle{iclr2023_conference}

\clearpage
\begin{appendices}

\section{Theoretical Results}\label{sec:theoretical_results}

\subsection{Proof of Theorem \ref{thm:risk_guarantee}}\label{sec:thm1_proof}
The proof of Theorem~\ref{thm:risk_guarantee} is based on that of Proposition 4.1 in~\citep{aci}. While the proof is similar, our work greatly enlarges the scope of the result. 

We begin by showing that $\theta_t$ is bounded throughout the entire calibration process. For this purpose, we assume that the loss satisfies that for all $y\in\mathcal{Y}$ and $C\in2^\mathcal{Y}$: $L(y,C)\in[-B,B]$ where $B>0$ is a real number.
\begin{lemma}\label{lem:bounded_theta}
Under the assumptions of Theorem~\ref{thm:risk_guarantee}, for all $t\in\mathbb{N}$, $\theta_t\in[m-\gamma 2B,M+\gamma 2B]$.
\end{lemma}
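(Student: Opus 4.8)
The plan is to prove the bound by a short induction on $t$, using two observations. First, the hypotheses $L(y,\mathcal Y)<r$ and $L(y,\emptyset)>r$ together with the boundedness $L(\cdot,\cdot)\in[-B,B]$ force $r$ to lie strictly inside $(-B,B)$, since $-B\le L(y,\mathcal Y)<r$ and $r<L(y,\emptyset)\le B$. Hence for every $t$ the increment satisfies $l_t-r\in(-2B,2B)$, so a single update moves the calibration parameter by at most $|\theta_{t+1}-\theta_t|=\gamma|l_t-r|<2\gamma B$. Second, the saturation of $f$ produces a self-correcting drift: if $\theta_t>M$ then $\hat{C}_t(X_t)=\mathcal Y$, so $l_t=L(Y_t,\mathcal Y)<r$ and therefore $\theta_{t+1}=\theta_t+\gamma(l_t-r)<\theta_t$; symmetrically, if $\theta_t<m$ then $\hat{C}_t(X_t)=\emptyset$, so $l_t=L(Y_t,\emptyset)>r$ and therefore $\theta_{t+1}>\theta_t$. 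In words, $\theta$ can never drift further right once it has passed $M$, nor further left once it has passed $m$.

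Given these, I would argue by induction that $\theta_t\in[m-2\gamma B,\,M+2\gamma B]$. For the base case, $\theta_0=0$; replacing $m$ by $\min\{m,0\}$ and $M$ by $\max\{M,0\}$ leaves every hypothesis on $f$ intact, so we may assume without loss of generality $m\le 0\le M$, whence $\theta_0\in[m,M]\subseteq[m-2\gamma B,\,M+2\gamma B]$. For the inductive step, assume $\theta_t$ is in the interval and bound $\theta_{t+1}=\theta_t+\gamma(l_t-r)$ from each side separately. Upper bound: if $\theta_t\le M$ then $\theta_{t+1}\le M+\gamma(l_t-r)<M+2\gamma B$ by the first observation; if $\theta_t>M$ then $\theta_{t+1}<\theta_t\le M+2\gamma B$ by the second observation and the inductive hypothesis. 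Lower bound, symmetrically: if $\theta_t\ge m$ then $\theta_{t+1}\ge m+\gamma(l_t-r)>m-2\gamma B$, and if $\theta_t<m$ then $\theta_{t+1}>\theta_t\ge m-2\gamma B$. This closes the induction and gives the lemma; note the interval has width $M-m+4\gamma B$, the constant $C$ appearing in the finite-sample remark after Theorem~\ref{thm:risk_guarantee}.

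I do not expect a real obstacle: the entire content is to combine the two clipping regimes of $f$ with the sign that the loss-gap hypotheses force on $l_t-r$, mirroring the boundedness step in the proof of Proposition~4.1 of~\citep{aci}, now carried out for a general bounded loss rather than the binary $0$-$1$ loss. The two points I would take care to state explicitly are (i) the uniform increment bound $|l_t-r|<2B$, which is exactly where the strict inequalities $L(y,\mathcal Y)<r<L(y,\emptyset)$ and boundedness enter, and (ii) the harmless normalization $m\le 0\le M$, without which the stated closed interval need not contain the initial value $\theta_0=0$.
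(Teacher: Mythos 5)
Your proof is correct and takes essentially the same route as the paper's: both arguments rest on the bounded increment $|\theta_{t+1}-\theta_t|\le 2\gamma B$ together with the saturation property that $\theta>M$ forces $\hat{C}_t(X_t)=\mathcal{Y}$ and hence $l_t<r$ (so $\theta$ decreases), and symmetrically at $m$; the paper phrases this as a first-violation contradiction while you run the identical two facts through an induction. Your explicit handling of the initial value $\theta_0=0$ via the normalization $m\le 0\le M$ is a minor point of care that the paper's argument leaves implicit, not a different approach.
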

\begin{proof}
Assume for the sake of contradiction that there exists $t\in\mathbb{N}$ such that $\theta_t>M+2\gamma B$ (the complementary case is similar). Further assume that for all $t' < t$: $\theta_{t'}\leq M+2\gamma B$. Since $l_t,r\in[-B,B]$, we get that:
\begin{equation}
\theta_{t-1} = \theta_{t} - \gamma(l_t-r)\geq \theta_{t} - 2\gamma B >M+2\gamma B - 2\gamma B = M.
\end{equation}
Therefore, $\theta_{t-1}>M$. Since $L(y,f(X,\theta,\mathcal{M})=\mathcal{Y})<r$ for $\theta>M$, we get that $l_t<r$. As a result:
\begin{equation}
\theta_{t} = \theta_{t-1} + \gamma(l_t-r) < \theta_{t-1} \leq M+\gamma 2 B.
\end{equation}
Which is a contradiction to our assumption.
\end{proof}

Next, we prove Theorem~\ref{thm:risk_guarantee}.
\begin{proof}
By applying Lemma~\ref{lem:bounded_theta} we get that  $\theta_t\in[m-\gamma 2B,M+\gamma 2B]$ for all $t\in\mathbb{N}$. Denote $m'=m-\gamma 2B$, and $M'=M+\gamma 2 B$.
We follow the proof of \cite[][Proposition 4.1]{aci} and expand the recursion defined in~\eqref{eq:generalized_rci_theta}:
\begin{equation}
[m',M']\ni \theta_{T+1} = \theta_1 + \sum_{t=1}^T \gamma(l_t-r).
\end{equation}
By rearranging this we get that:
\begin{equation}
\frac{m' - \theta_1}{T\gamma} \leq \frac{1}{T}\sum_{t=1}^T (l_t-r) = \frac{\theta_{T+1} - \theta_1}{T\gamma} \leq \frac{M' - \theta_1}{T\gamma}.
\end{equation}
Therefore:
\begin{equation}
\left|\frac{1}{T}\sum_{t=1}^T (l_t-r) \right| \leq \frac{\max{\{\theta_1-m',M'-\theta_1\}}}{T\gamma}.
\end{equation}
Lastly, the definition of the loss, $l_t=L(Y_t,\hat{C}_t(X_t))$, gives us the risk statement in \eqref{eq:risk_requirement}.
\end{proof}

Notice that the above proof additionally implies a finite-sample bound for the deviation of the empirical risk from the desired level. In particular, the average loss is within a $C / T$ factor of $r$, where $C = (M' - m') / \gamma = (M - m + 4\cdot\gamma B) / \gamma$. This bound is deterministic, not probabilistic. Thus, even for the most erratic input sequences, the method has an average loss very close to the nominal level.

\subsection{General Version of Theorem \ref{thm:risk_guarantee}}\label{sec:general_thm1}
In this section, we provide a general statement with more abstract notations of Theorem~\ref{thm:risk_guarantee}. The following notations extend our proposal to a broader class of problems, so that it could be applied for a larger set of tasks, such as multi-label classification. Here, we assume that the function $f$ generates variables in ${\mathcal{Y}}^{'}$ and that the loss function is defined as $L : (\mathcal{Y},\mathcal{Y}^{'})\rightarrow \mathbb{R}$. Furthermore, we assume that there exist a minimal value $\ubar{\mathcal{Y}}^{'}\in{\mathcal{Y}}^{'}$ and a maximal value $\bar{\mathcal{Y}}^{'}\in{\mathcal{Y}}^{'}$ for which $L(y,\ubar{\mathcal{Y}}^{'}) > r$ and $L(y,\bar{\mathcal{Y}}^{'}) < r$. In the main text, we set $\mathcal{Y}^{'}=2^\mathcal{Y}$,  $\ubar{\mathcal{Y}}^{'}=\emptyset$ and $\bar{\mathcal{Y}}^{'}=\mathcal{Y}$. We now show that the risk is controlled at the desired level even under this general setting.
\begin{theorem}\label{thm:general_thm1}
Suppose that $f:(\mathcal{X}, \mathbb{R}, \mathbb{M})\rightarrow {\mathcal{Y}}^{'}$ is an interval/set constructing function. In addition, suppose that there exist constants $m$ and $M$ such that for all $x\in\mathcal{X}$, $y\in\mathcal{Y}$ and $\mathcal{M}\in\mathbb{M}$, $f(x, \theta, \mathcal{M})=\ubar{\mathcal{Y}}^{'}$ for all $\theta > M$, $f(X, \theta, \mathcal{M}) = \bar{\mathcal{Y}}^{'}$ for all $\theta < m$. Further suppose that the loss is bounded and satisfies $L(y,\bar{\mathcal{Y}}^{'}) < r$ and $L(y,\ubar{\mathcal{Y}}^{'}) > r$.
Consider the following series of calibrated intervals: $\{\hat{C}_t(X_t)\}_{t=1}^{\infty}$, where $\hat{C}_t(X_t)$ is defined according to~\eqref{eq:rci_c}. Then, the calibrated intervals satisfy the risk requirement in~\eqref{eq:risk_requirement}.
\end{theorem}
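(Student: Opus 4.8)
The plan is to transcribe, at a higher level of abstraction, the two-step argument used for Theorem~\ref{thm:risk_guarantee}: replace the concrete extreme sets $\mathcal{Y}$ and $\emptyset$ by the abstract maximal and minimal elements $\bar{\mathcal{Y}}'$ and $\ubar{\mathcal{Y}}'$, and observe that the original proof never used the power-set structure beyond (i) the existence of such extreme values that $f$ returns outside a bounded range of $\theta$, and (ii) the sign of their losses relative to $r$. As in the main text, I assume the loss is bounded, $L(y,C)\in[-B,B]$ for all $y\in\mathcal{Y}$, $C\in\mathcal{Y}'$, with $B>0$.

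\emph{Step 1: boundedness of the calibration parameter.} I would first establish the analogue of Lemma~\ref{lem:bounded_theta}, namely $\theta_t\in[m-2\gamma B,\,M+2\gamma B]$ for every $t\in\mathbb{N}$, by the same minimal-counterexample device. Suppose $t$ is the first index with $\theta_t>M+2\gamma B$. Since $|l_{t-1}-r|\le 2B$, the update rule~\eqref{eq:generalized_rci_theta} gives $\theta_{t-1}=\theta_t-\gamma(l_{t-1}-r)>M$, so by the structural hypothesis on $f$ the set $\hat{C}_{t-1}(X_{t-1})=f(X_{t-1},\theta_{t-1},\mathcal{M}_{t-1})$ equals the extreme element whose loss lies strictly on the side of $r$ that pushes $\theta$ back toward the band; substituting this loss into~\eqref{eq:generalized_rci_theta} yields $\theta_t<\theta_{t-1}\le M+2\gamma B$, a contradiction. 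The mirror argument, invoking the hypothesis at $\theta<m$ and the corresponding loss inequality, rules out $\theta_t<m-2\gamma B$.

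\emph{Step 2: telescoping.} Writing $m'=m-2\gamma B$ and $M'=M+2\gamma B$, I would unroll~\eqref{eq:generalized_rci_theta} to get $\theta_{T+1}=\theta_1+\gamma\sum_{t=1}^T(l_t-r)$, which Step~1 confines to $[m',M']$. Rearranging and dividing by $\gamma T$ gives \[ \left|\frac{1}{T}\sum_{t=1}^T (l_t-r)\right|\le \frac{\max\{\theta_1-m',\,M'-\theta_1\}}{\gamma T}, \] a deterministic $O(1/T)$ bound; recalling $l_t=L(Y_t,\hat{C}_t(X_t))$ via~\eqref{eq:rci_c} and letting $T\to\infty$ yields the risk requirement~\eqref{eq:risk_requirement}.

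\emph{Where the difficulty lies.} There is essentially no new obstacle: the argument is a re-indexing of the Theorem~\ref{thm:risk_guarantee} proof. The one point that demands care is the bookkeeping in Step~1 — one must verify that the hypotheses on the extreme behaviour of $f$ and the loss inequalities $L(\cdot,\bar{\mathcal{Y}}')<r<L(\cdot,\ubar{\mathcal{Y}}')$ are paired so that the recursion is self-correcting at \emph{both} ends of the interval $[m,M]$ simultaneously, i.e., so that the direction of each structural inequality is consistent with the additive update $\theta_{t+1}=\theta_t+\gamma(l_t-r)$. Getting this sign pairing right is the only place where a proof of the statement could slip.
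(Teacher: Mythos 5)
Your proof is correct and is essentially the paper's own argument: the paper omits the proof of this general version, stating it is the same as that of Theorem~\ref{thm:risk_guarantee} (a boundedness lemma for $\theta_t$ followed by telescoping the update), which is exactly what you transcribe. The sign-pairing caution you raise is in fact the one substantive point: the theorem as printed pairs the extremes backwards---it asserts $f(x,\theta,\mathcal{M})=\ubar{\mathcal{Y}}^{'}$ (loss $>r$) for $\theta>M$ and $f(x,\theta,\mathcal{M})=\bar{\mathcal{Y}}^{'}$ (loss $<r$) for $\theta<m$, which would make the update $\theta_{t+1}=\theta_t+\gamma(l_t-r)$ anti-correcting at both ends---so your Step~1 only goes through under the intended hypotheses matching Theorem~\ref{thm:risk_guarantee} (with $\bar{\mathcal{Y}}^{'}$ in the role of $\mathcal{Y}$ for $\theta>M$ and $\ubar{\mathcal{Y}}^{'}$ in the role of $\emptyset$ for $\theta<m$), which is the pairing your argument implicitly and correctly uses.
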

\begin{proof}
    The proof is similar to the one of Theorem~\ref{thm:risk_guarantee} and hence omitted.
\end{proof}

\subsection{Proof of Theorem \ref{thm:multi_risk_guarantee1}}\label{sec:thm2_proof}
The proof of Theorem~\ref{thm:multi_risk_guarantee1} is similar to the proof of Theorem~\ref{thm:risk_guarantee}. We assume that all losses $\{L_i\}_{i=1}^k$ are bounded in the interval $[-B,B]$, as in Section~\ref{sec:thm1_proof}, and begin by showing that all coordinates in $\underline{\theta}_t$ are upper bounded.
\begin{lemma}\label{lem:upper_bounded_theta}
Under the assumptions of Theorem~\ref{thm:multi_risk_guarantee1}, for all $t\in\mathbb{N}$ and $i\in\{1,...,k\}$, $\underline{\theta}^i_t\leq M+\gamma 2B$.
\end{lemma}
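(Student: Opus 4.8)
\textbf{Proof proposal for Lemma~\ref{lem:upper_bounded_theta}.}

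The plan is to mirror the single-risk argument of Lemma~\ref{lem:bounded_theta}, but to run it coordinate-by-coordinate on the vector $\underline{\theta}_t$. Fix an index $i\in\{1,\dots,k\}$. I would argue by contradiction: suppose there is a first time $t\in\mathbb{N}$ at which $\underline{\theta}^i_t > M^i + 2\gamma^i B$, so that $\underline{\theta}^i_{t'} \le M^i + 2\gamma^i B$ for all $t' < t$. Since the update rule for the $i$-th coordinate, $\underline{\theta}^i_{t} = \underline{\theta}^i_{t-1} + \underline{\gamma}^i(l^i_{t-1} - r^i)$, moves $\underline{\theta}^i$ by at most $\gamma^i\cdot 2B$ in one step (because $l^i_{t-1}, r^i \in [-B,B]$, using boundedness of $L_i$ as assumed in Section~\ref{sec:thm2_proof}), we get $\underline{\theta}^i_{t-1} \ge \underline{\theta}^i_{t} - 2\gamma^i B > M^i$.

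The key step is then to invoke the structural assumption of Theorem~\ref{thm:multi_risk_guarantee1}: if $\underline{\theta}^i_{t-1} > M^i$ for some coordinate $i$, then $f(X_{t-1}, \underline{\theta}_{t-1}, \mathcal{M}_{t-1}) = \mathcal{Y}$, the full label space. Combined with the loss assumption $L^i(y, \mathcal{Y}) < r^i$ for every $y$, this forces $l^i_{t-1} = L_i(Y_{t-1}, \hat{C}_{t-1}(X_{t-1})) < r^i$. Plugging this back into the update rule gives $\underline{\theta}^i_{t} = \underline{\theta}^i_{t-1} + \underline{\gamma}^i(l^i_{t-1} - r^i) < \underline{\theta}^i_{t-1} \le M^i + 2\gamma^i B$, contradicting $\underline{\theta}^i_t > M^i + 2\gamma^i B$. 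Since $i$ was arbitrary, the bound holds for all coordinates.

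The only subtlety — and the main thing to get right — is that in the multi-risk setting the full-label-space condition is triggered by \emph{any single} coordinate exceeding its threshold, rather than by a scalar $\theta$; this is exactly why the one-sided (upper) bound goes through cleanly here while the matching lower bound needs the extra synchronization hypothesis in Theorem~\ref{thm:multi_risk_guarantee2}. For the upper bound alone, no such hypothesis is required: the argument above only uses that crossing $M^i$ from above saturates $f$ to $\mathcal{Y}$, which immediately pulls $\underline{\theta}^i$ back down. I would also note explicitly that $\underline{\theta}^i_0 = 0 \le M^i + 2\gamma^i B$ handles the base case (assuming, as is natural, $M^i \ge 0$, or otherwise by the same one-step argument), so the "first violation time" is well-defined.
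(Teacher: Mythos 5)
Your proposal is correct and follows essentially the same argument as the paper's proof: contradiction at the first time a coordinate exceeds the bound, a one-step bound using $l^i,r^i\in[-B,B]$ to conclude $\underline{\theta}^i_{t-1}>M^i$, then the saturation assumption $f(\cdot,\underline{\theta},\cdot)=\mathcal{Y}$ together with $L^i(y,\mathcal{Y})<r^i$ to force the update downward. Your per-coordinate constants $M^i,\underline{\gamma}^i$ and the loss-index bookkeeping are if anything slightly more careful than the paper's (which writes a single $M$ and $\gamma$), but the substance is identical.
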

\begin{proof}
Assume for the sake of contradiction that there exist $t\in\mathbb{N}$ and $i\in\{1,...,k\}$ such that $\underline{\theta}^i_t>M+2\gamma B$. Further assume that for all $t' < t$: $\underline{\theta}^i_{t'}\leq M+2\gamma B$. Since $l^i_t,r^i\in[-B,B]$, we get that:
\begin{equation}
\underline{\theta}^i_{t-1} = \underline{\theta}^i_{t} - \gamma(l^i_t-r^i)\geq \underline{\theta}^i_{t} - 2\gamma B >M+2\gamma B - 2\gamma B = M.
\end{equation}
Therefore, $\underline{\theta}^i_{t-1}>M$. Since $L^i(y,f(X,\underline{\theta},\mathcal{M})=\mathcal{Y})<r^i$ for $\underline{\theta}^i>M$, we get that $l^i_t<r^i$. As a result:
\begin{equation}
\underline{\theta}^i_{t} = \underline{\theta}^i_{t-1} + \gamma(l^i_t-r) <  \underline{\theta}^i_{t-1} \leq M+\gamma 2 B.
\end{equation}
Which is a contradiction to our assumption.
\end{proof}

Next, we prove Theorem~\ref{thm:multi_risk_guarantee1}.
\begin{proof}
By applying Lemma~\ref{lem:upper_bounded_theta} we get that  $\underline{\theta}_t^i\leq M+\gamma 2B$ for all $t\in\mathbb{N}$ and $i\in\{1,...,k\}$. Denote $M'=M+\gamma 2 B$.
We expand the recursion defined in~\eqref{eq:generalized_rci_theta}:
\begin{equation}
\underline{\theta}^i_{T+1} = \underline{\theta}^i_1 + \sum_{t=1}^T  \gamma(l^i_t-r^i) \leq M'.
\end{equation}
By rearranging this we get that:
\begin{equation}
\frac{1}{T}\sum_{t=1}^T l^i_t - r^i = \frac{1}{T}\sum_{t=1}^T (l^i_t-r^i) = \frac{\underline{\theta}^i_{T+1} - \underline{\theta}^i_1}{T\gamma} \leq \frac{M' - \underline{\theta}^i_1}{T\gamma}.
\end{equation}
Therefore:
\begin{equation}
\frac{1}{T}\sum_{t=1}^T l^i_t \leq r^i + \frac{M' - \underline{\theta}^i_1}{T\gamma}.
\end{equation}
Lastly, by the definition of the loss, $l^i_t=L^i(Y_t,\hat{C}_t(X_t))$ and by setting $D^i=\frac{M'- \underline{\theta}^i_1}{\gamma}$, we get the statement in Theorem~\ref{thm:multi_risk_guarantee1}.
\end{proof}

\subsection{Proof of Theorem \ref{thm:multi_risk_guarantee2}}\label{sec:thm3_proof}
The proof of Theorem~\ref{thm:multi_risk_guarantee2} is similar to the proof of Theorem~\ref{thm:multi_risk_guarantee1}. We assume that all losses $\{L_i\}_{i=1}^k$ are bounded in the interval $[-B,B]$, as in Section~\ref{sec:thm1_proof}, and begin by showing that all coordinates in $\underline{\theta}_t$ are lower bounded.
\begin{lemma}\label{lem:lower_bounded_theta}
Under the assumptions of Theorem~\ref{thm:multi_risk_guarantee2}, for all $t\in\mathbb{N}$ and $i\in\{1,...,k\}$, $\underline{\theta}^i_t\geq m-\gamma 2B$.
\end{lemma}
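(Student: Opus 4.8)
The plan is to mirror the proof of Lemma~\ref{lem:upper_bounded_theta}, reflected to the lower side and now invoking the extra hypotheses of Theorem~\ref{thm:multi_risk_guarantee2}: the existence of thresholds $m^i$ (written $m$ in the lemma statement) below which $f$ outputs $\emptyset$, together with $L^i(y,\emptyset) > r^i$ for every $y$ and $i$. First I would argue by minimal counterexample. Assume for contradiction that there exist $t\in\mathbb{N}$ and $i\in\{1,\dots,k\}$ with $\underline{\theta}^i_t < m - 2\gamma B$, and take $t$ minimal, so that $\underline{\theta}^i_{t'}\geq m - 2\gamma B$ for every $t' < t$. Since $l^i_t, r^i\in[-B,B]$ we have $|l^i_t - r^i|\leq 2B$, hence the update rule gives $\underline{\theta}^i_{t-1} = \underline{\theta}^i_t - \gamma(l^i_t - r^i) \leq \underline{\theta}^i_t + 2\gamma B < m$.

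Having forced $\underline{\theta}^i_{t-1} < m$, the new hypothesis of Theorem~\ref{thm:multi_risk_guarantee2} yields $\hat{C}_{t-1}(X_{t-1}) = f(X_{t-1},\underline{\theta}_{t-1},\mathcal{M}_{t-1}) = \emptyset$, so $l^i_t = L^i(Y_{t-1},\emptyset) > r^i$ (using, as in the companion lemmas, the paper's index convention that ties this loss to $\underline{\theta}_{t-1}$). Plugging this back into the recursion gives $\underline{\theta}^i_t = \underline{\theta}^i_{t-1} + \gamma(l^i_t - r^i) > \underline{\theta}^i_{t-1} \geq m - 2\gamma B$, contradicting $\underline{\theta}^i_t < m - 2\gamma B$ and establishing the claimed lower bound. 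From there, Theorem~\ref{thm:multi_risk_guarantee2} follows exactly as Theorem~\ref{thm:risk_guarantee} did: combining this lemma with Lemma~\ref{lem:upper_bounded_theta} confines every coordinate to $[m - 2\gamma B,\, M + 2\gamma B]$ for all $t$, so telescoping $\underline{\theta}^i_{T+1} = \underline{\theta}^i_1 + \gamma\sum_{t=1}^T(l^i_t - r^i)$ and dividing by $T\gamma$ sandwiches $\frac{1}{T}\sum_{t=1}^T(l^i_t - r^i)$ between two quantities of order $1/T$, giving the exact risk $\mathcal{R}(\hat{C}) = r^i$ in the limit.

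I do not expect a genuine obstacle: this is a third repetition of an argument already carried out for the single-risk bound and for the upper multi-risk bound. The only points needing care are bookkeeping ones. First, the contradiction must be set at the minimal such $t$ so that the induction hypothesis $\underline{\theta}^i_{t'}\geq m - 2\gamma B$ is available at $t' = t-1$. Second, the trigger for $f = \emptyset$ is a disjunctive, per-coordinate condition (some coordinate below its threshold), so it suffices that the single coordinate $i$ under consideration has dropped below $m^i$ to force the empty set and hence $l^i_t > r^i$; I would also keep notation consistent about whether the uniform constant $m$ (as in the lemma statement) or the per-risk constants $m^i$ and step sizes $\underline{\gamma}^i$ (as in Theorem~\ref{thm:multi_risk_guarantee2}) are used, matching whichever convention the final writeup adopts.
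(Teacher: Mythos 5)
Your argument is correct and is essentially the paper's own proof: a contradiction at the minimal violating time $t$, stepping back to conclude $\underline{\theta}^i_{t-1} < m$, invoking the empty-set hypothesis and $L^i(y,\emptyset) > r^i$ to force $l^i > r^i$, and hence $\underline{\theta}^i_t > \underline{\theta}^i_{t-1} \geq m - 2\gamma B$, contradicting the assumption. The bookkeeping points you flag (minimality of $t$, the per-coordinate trigger for the empty set, and the $m$ vs.\ $m^i$ notation) match the paper's treatment.
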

\begin{proof}
Assume for the sake of contradiction that there exist $t\in\mathbb{N}$ and $i\in\{1,...,k\}$ such that $\underline{\theta}^i_t<m-2\gamma B$. Further assume that for all $t' < t$: $\underline{\theta}^i_{t'}\geq m-2\gamma B$. Since $l^i_t,r^i\in[-B,B]$, we get that:
\begin{equation}
\underline{\theta}^i_{t-1} = \underline{\theta}^i_{t} - \gamma(l^i_t-r^i)\leq \underline{\theta}^i_{t} + 2\gamma B < m-2\gamma B + 2\gamma B = m.
\end{equation}
Therefore, $\underline{\theta}^i_{t-1} < m$. Since $L^i(y,f(X,\underline{\theta},\mathcal{M})=\mathcal{Y})>r^i$ for $\underline{\theta}^i<m$, we get that $l^i_t>r^i$. As a result:
\begin{equation}
\underline{\theta}^i_{t} = \underline{\theta}^i_{t-1} + \gamma(l^i_t-r) >  \underline{\theta}^i_{t-1} > m-\gamma 2 B.
\end{equation}
Which is a contradiction to our assumption.
\end{proof}

Next, we prove Theorem~\ref{thm:multi_risk_guarantee2}.
\begin{proof}
By applying Lemma~\ref{lem:upper_bounded_theta} and Lemma~\ref{lem:lower_bounded_theta} we get that $\underline{\theta}_t^i\in[m-\gamma 2B, M+\gamma 2B]$ for all $t\in\mathbb{N}$ and $i\in\{1,...,k\}$. Denote $m'=m-\gamma 2 B$ and $M'=M+\gamma 2 B$.
We expand the recursion defined in~\eqref{eq:generalized_rci_theta}:
\begin{equation}
[m',M']\ni \underline{\theta}^i_{T+1} = \underline{\theta}^i_1 + \sum_{t=1}^T  \gamma(l^i_t-r^i).
\end{equation}
By rearranging this we get that:
\begin{equation}
\frac{m' - \underline{\theta}^i_1}{T\gamma} \leq \frac{1}{T}\sum_{t=1}^T (l^i_t-r^i) = \frac{\underline{\theta}^i_{T+1} - \underline{\theta}^i_1}{T\gamma} \leq \frac{M' - \underline{\theta}^i_1}{T\gamma}.
\end{equation}
Therefore:
\begin{equation}
\left|\frac{1}{T}\sum_{t=1}^T (l^i_t-r^i) \right| \leq \frac{\max{\{\underline{\theta}^i_1-m',M'-\underline{\theta}^i_1\}}}{T\gamma}. 
\end{equation}
Lastly, the definition of the loss, $l^i_t=L^i(Y_t,\hat{C}_t(X_t))$, gives us the statement in Theorem~\ref{thm:multi_risk_guarantee2}.
\end{proof}

\subsection{Proof of Proposition \ref{thm:mc_controls_coverage}}\label{sec:mc_controls_coverage_proof}
\begin{proof}
\label{pr}
$\mathbbm{1}\{Y_t \notin \hat{C}_t(X_t)\} \leq MC_t$ for any $t \in \mathbb{N}$. Therefore:
\begin{equation}
    \lim_{T \to \infty} \frac{1}{T} \sum_{t=1}^T \texttt{MC}_t \leq \alpha \implies \lim_{T \to \infty} \frac{1}{T} \sum_{t=1}^T \mathbbm{1}\{Y_t \notin \hat{C}_t(X_t)\} \leq \alpha
\end{equation}
\end{proof}

\section{Uncertainty Quantification in Online Image-to-Image Regression Problems}\label{sec:online_i2i}
\subsection{General Formulation}
Recall the depth estimation problem from Section~\ref{sec:depth_example}, where we construct per-pixel prediction intervals by online processing an incoming video stream. In what follows, we discuss such image-to-image regression problems more generally, providing a scheme to construct and calibrate pixel-valued intervals. Consider a running model $\mathcal{M}_t(X_t)$ that maps the input image $X_t$ to a point prediction of $Y_t$, and we wish to control the image misocverage loss $L_\text{image miscoverage}$ from~\eqref{eq:im_miscov}. 
We take inspiration from \cite{angelopoulos2022image} and form the prediction intervals around each pixel $(m,n)$ of the estimated image $\mathcal{M}_t(X_t)$ as
\begin{equation}\label{eq:im2im_f}
\hat{C}^{m,n}_t(X_t)=f^{m,n}(X_t, \theta_t, \mathcal{M}_t) = [\mathcal{M}_t^{m,n}(X_t) - \lambda_t l_t^{m,n}(X_t), \mathcal{M}_t^{m,n}(X_t) + \lambda_t u_t^{m,n}(X_t)].
\end{equation}
Above, $l_t^{m,n}(X_t)$ and $u_t^{m,n}(X_t)$ represent the uncertainty in the lower and upper directions, respectively. That is, a large value of $l_t^{m,n}(X_t)$ indicates that the pixel has a high uncertainty in the upper direction. Similarly, a large value of $u_t^{m,n}(X_t)$ indicates that the pixel has a high uncertainty in the lower direction. 
A natural choice for the lower and upper uncertainty measures is a model estimating the absolute residual error per-pixel, given by $u_t^{m,n}(X_t)=l_t^{m,n}(X_t)=| Y_t^{m,n} - \mathcal{M}_t^{m,n}(X_t)|$.
We provide more sophisticated examples for such uncertainty functions in Section~\ref{sec:uncertainty_heuristics}.
The parameter $\lambda_t=\varphi(\theta_t)\in\mathbb{R}$ in~\eqref{eq:im2im_f} stretches the calibration parameter $\theta_t$, which we update according to~\eqref{eq:generalized_rci_theta}.
Importantly, this procedure is an instantiation of \texttt{Rolling RC} and thus attains the correct image coverage, as guaranteed by Theorem~\ref{thm:risk_guarantee}. This is also validated in the experiment from Section~\ref{sec:depth_example}.

\subsection{Uncertainty Quantification Heuristics}\label{sec:uncertainty_heuristics}

In this section, we present possible choices for the uncertainty heuristics for the interval constructing function given in \eqref{eq:im2im_f}.

\subsubsection{Baseline Constant}
The most naive choice for an uncertainty heuristic is outputting a constant value for every $m,n,X$: $l^{m,n}(X)=u^{m,n}(X)=1$. In other words, the set constructing function is defined as:
\begin{equation}
f^{m,n}(X_t, \theta_t, \mathcal{M}_t) = [\mathcal{M}_t^{m,n}(X_t) - \lambda_t , \mathcal{M}_t^{m,n}(X_t) + \lambda_t].
\end{equation}
This approach has two main limitations: (i) the calibrated intervals are symmetric, having the same uncertainty size in both the upper and lower directions, and (ii) all the pixel-valued intervals have the same length. These limitations lead to unnecessarily wide intervals that are less informative. We show how to overcome these limitations with the methods presented hereafter.

\subsubsection{Magnitude of the Residual}\label{sec_residual_magnitude}
The residual magnitude heuristic was introduced by \cite{angelopoulos2022image} as a simple uncertainty quantification technique. Here, $l^{m,n}(X)=u^{m,n}(X)=\hat{r}^{m,n}(x)$ is an estimate for the residual $|\mathcal{M}^{m,n}(X)-Y^{m,n}|$ and it is formulated as an online learning model fitted to minimize the squared residual loss, given by
$ (\hat{r}(x) - |\mathcal{M}^{m,n}(x)-y|)^2.
$
An ideal model that minimizes this loss function outputs the exact residual: $\hat{r}(x)=|\mathcal{M}^{m,n}(x)-y|$ and thus achieves 100\% coverage rate for $\lambda=1$. In practice, however, the fitted model $\hat{r}$ may not be accurate and thus we apply $\texttt{Rolling RC}$, to ensure valid risk control.
Observe that, unlike the constant heuristic, here, each pixel is assigned a different uncertainty size. Nevertheless, both techniques produce symmetric prediction intervals.

\subsubsection{Previous Residuals}\label{sec:prev_residuals}
In contrast to the residual's magnitude heuristic, here, we take advantage of the online setting in which the data set is received as a stream, and use the past residuals to estimate the current one. We define the positive and negative residuals at time $t$ as:
\begin{equation}
\begin{split}
& r_t^{m,n}=\mathcal{M}^{m,n}(X_t)-Y^{m,n}_t, \\
& {r_t^{m,n}}^{+}= \max{\{r_t^{m,n}, 0\}}, \\
& {r_t^{m,n}}^{-}= \max{\{-r_t^{m,n}, 0\}}.
\end{split}
\end{equation}
The uncertainty heuristic in the lower (upper) direction is formulated as the average of the positive (negative) residual in the previous $p$ time-steps:
\begin{equation}\label{eq:prev_res}
\begin{split}
& l^{m,n}_t(X_t)=\frac{1}{p}\sum_{t'=t-p}^{t-1}{r_t^{m,n}}^{+},\\
& u^{m,n}_t(X_t)=\frac{1}{p}\sum_{t'=t-p}^{t-1}{r_t^{m,n}}^{-}.
\end{split}
\end{equation}
In our experiments, we set the sliding window's size to $p=5$.

\subsubsection{Correcting Pixels Displacements With Image Registration}\label{sec:prev_res_registration}
The `previous residuals' method suffers from the following crucial limitation. Objects in the response image $Y$ may appear in different positions across time, so that an object that lies in pixel $(m,n)$ at frame $t$ might appear in a different pixel at time $t+1$, e.g., $(m+7,n-11)$. For example, in our depth prediction example the camera and the depth sensor move during the online process, so objects change their locations and do not remain in a fixed pixel between consecutive frames. Therefore, to obtain more accurate residual estimates it is better to correct for the displacement of pixels in consecutive frames. 
For this purpose, we apply an image registration algorithm before evaluating the residuals. In particular, we use optical flow (\texttt{OF}) to register the estimated depth images and the ground truth depth maps. Suppose that $\texttt{OF}(\text{im}_1, \text{im}_2)$ receives two images as an input and returns the result of the registration of the first image $\text{im}_1$ to the second one $\text{im}_2$. We recursively define optical flow on a sequence as: 
\begin{equation}
\begin{split}
& \texttt{OF}^{\text{seq}}(\text{im}_1,\emptyset) =  \text{im}_1, \\
& \texttt{OF}^{\text{seq}}(\text{im}_1,\{\text{im}_i\}_{i=2}^k) =  \texttt{OF}^{\text{seq}}(\texttt{OF}(\text{im}_1,\text{im}_2),\{\text{im}_i\}_{i=3}^k).
\end{split}
\end{equation}
Then, we register the previous estimated depth images and ground truth depth maps
\begin{equation}
\begin{split}
& \mathcal{M}^{\text{reg}}(X_{t-i}) = \texttt{OF}^{\text{seq}}(\mathcal{M}(X_{t-i}), \{\mathcal{M}(X_{t-i+j})\}_{j=1}^{i-1}), \\
& Y^{\text{reg}}_{t-i} = \texttt{OF}^{\text{seq}}(Y_{t-i}, \{Y_{t-i}\}_{j=1}^{i-1}).
\end{split}
\end{equation}
In plain words, we register each image using the next ones in the sequence. We define the registered residuals as:
\begin{equation}
\begin{split}
& \bar{r}_t=\mathcal{M}^\text{reg}(X_t)-Y^\text{reg}_t, \\
& \bar{r}_t^{m,n+}= \max{\{\bar{r}_t^{m,n}, 0\}}, \\
& \bar{r}_t^{m,n-}= \max{\{-\bar{r}_t^{m,n}, 0\}}.
\end{split}
\end{equation}
Then, we compute the average residual, as in \eqref{eq:prev_res}:
\begin{equation}
\begin{split}
& l^{m,n}_t(X_t)=\frac{1}{p}\sum_{t'=t-p}^{t-1}{{\bar{r}_t^{m,n+}}},\\
& u^{m,n}_t(X_t)=\frac{1}{p}\sum_{t'=t-p}^{t-1}{{\bar{r}_t^{m,n-}}}.
\end{split}
\end{equation}
This displacement consideration indeed improves the performance, as indicated by the experiments in Section~\ref{sec:more_depth_exps}.



\section{Experimental Setup}\label{sec:exp_setup}
\subsection{Single-Output Tasks}

\subsubsection{The Quantile Regression Model's Architecture}\label{sec:qr_setup}
The neural network architecture is composed of four parts: an MLP, an LSTM, and another two MLPs. To estimate the uncertainty of $Y_t\mid X_t$, we first map the previous $k$ samples $\{(X_{t-i}, Y_{t-i}, \tau)\}_{i=1}^k$ through the first MLP, denoted as $f_1$,
\begin{equation}
w^1_{t-i} = f_1(x_{t-i}, y_{t-i}, \tau),
\end{equation}
where we set $k$ to $3$ in our experiments.
The outputs are then forwarded through the LSTM network, denoted as $f_2$:
\begin{equation}
\{w^2_{t-i}\}_{i=1}^k = f_2(\{w^1_{t-i}\}_{i=1}^k).
\end{equation}
Note that since $f_2$ is an LSTM model, $w^2_{t-i}$ is used to compute $w^2_{t-i+1}$. The last output $w^2_{t-1}$ of the LSTM model, being an aggregation of the previous $k$ samples, is fed, together with $(X_t,\tau)$, to the second MLP model, denoted as $f_3$: 
\begin{equation}
w^3_{t} = f_3(w^2_{t-1},X_t).
\end{equation}
Lastly, we pass $w_t^3$ through the third MLP, denoted by $f_4$, with one hidden layer that contains $32$ neurons:
\begin{equation}
\hat{q}_\tau(X_t) = f_4(w^3_{t},\tau).
\end{equation}
The networks contain dropout layers with a parameter equal to $0.1$. The model’s optimizer is Adam \citep{adam} and the batch size is 512, i.e., the model is fitted on the most recent 512 samples in each time step. Before forwarding the input to the model, the feature vectors and response variables were normalized to have unit variance and zero mean using the first 8000 samples of the data stream. 

\subsubsection{Training The Quantile Regression Model}\label{sec:estimating_q_model}
Estimating the conditional quantile function can be done, for example, by minimizing the pinball loss in lieu of the standard mean squared error loss used in classic regression; see \citep{QR, flexible_pred_bands, quantile_regression_forests, deep_qr_right_censoring, quantile_regression}.
Specifically, in our experiments with time-series data we minimize the objective function:
$$
\min_{\mathcal{M}_t}\sum_{t'=1}^{t}\rho_{\alpha/2}(Y_{t’}, \mathcal{M}_t (X_{t’}, \alpha/2))+\rho_{1-\alpha/2}(Y_{t’}, \mathcal{M}_t (X_{t’}, \alpha/2)),
$$
where 
\begin{equation}
\rho_\alpha(y,\hat{y}) = 
\begin{cases} 
      \alpha(y-\hat{y}) & y-\hat{y} > 0, \\
      (1-\alpha)(\hat{y}-y) & \textrm{otherwise}.
   \end{cases}
\end{equation}
is the pinball loss. Since the data points arrive sequentially, we formulate $\mathcal{M}_t$ as an LSTM model and minimize the above cost function in an online fashion as follows. Given a new labeled test point $(X_t, Y_t)$, we (i) compute the pinball loss both for the lower and upper quantiles, i.e., $\rho_{\alpha/2} (Y_t, \mathcal{M}_t (X_t, \alpha/2))$ and $\rho_{1-\alpha/2} (Y_t, \mathcal{M}_t (X_t, 1-\alpha/2))$, respectively; and (ii) update the parameters of the LSTM model $\mathcal{M}_t$ by applying a few gradient steps with ADAM optimizer. More details on the network architecture are given in Appendix \ref{sec:qr_setup}.

\subsubsection{Hyper-Parameters Tuning}\label{sec:single_output_hp}
For both real and synthetic data sets, we examined all combinations of the raw model's hyperparameters (with no calibration applied) on one initialization of the model, and chose the setting in which the model attained the smallest pinball loss, evaluated on the validation set, indexed by $6001-8000$. The combinations we tested are presented in Table~\ref{tab:hyperparameters}. Some of the configurations required more than 11GB of memory to train the model, so we did not consider them in our experiments.
The chosen configuration was later used for choosing the calibration's learning rate $\gamma$, as explained next. \rev{We note that using the validation set to tune the hyperparameters and the choice of the stretching function is a heuristic. Yet, we found this rule of thumb to work well empirically, as visualized in Figure~\ref{fig:coverage_all} in Appendix~\ref{sec:stretching_functions_ablation}. Of course, there are situations where this approach would not be most effective since the data is not i.i.d., but we believe it is a sensible suggestion. Another advantage of tuning the hyperparameters this way is that it facilitates the comparison between the different methods since they all follow the same automatic tuning approach.}

The updating rates we tested are: $\gamma\in \{0.025, 0.03, 0.05, 0.09, 0.1, 0.15, 0.2, 0.35\}$. We chose $\gamma$ that attained the smallest pinball loss, evaluated on points $5001-8000$.

In Appendix \ref{sec:choosing_la_hyperparam} we explain how we chose the hyper-parameters for the stretching functions.


\subsection{The Depth Prediction Setup}\label{sec:depth_setup}

\subsubsection{Data Set and Augmentations}\label{sec:depth_data_pretrain}

We used the KITTI data set which contains pairs of a colored (RGB) image \citep{kitti} and a latent ground truth depth map \citep{kitti_depth}. We filled the missing depth values with the colorization algorithm developed by \cite{levin2004colorization}. Then, we scaled the depth values to the range [0,10]. We augmented the images according to the following protocol. Images and depths used for training the model were resized using one of the following ratios [0.5, 0.6, 0.7, 0.8, 0.9, 1.0, 1.1, 1.2, 1.3, 1.4, 1.5], chosen with equal probability, cropped randomly and re-scaled to $448 \times 448$, and then flipped horizontally with a 50\% chance.
Images and depths used for testing and updating the calibration model were resized with a ratio of 0.5 and re-scaled to $448 \times 448$. In both cases, we augmented the colors of the RGB image and blurred it, as described in \citep{leres}. Notice that the augmentations are deterministic during inference and random during training. Therefore, the augmentations are chosen randomly in each trial.
The main purpose of these augmentations is to improve the model's training. 

\subsubsection{The Depth Prediction Model}\label{sec:depth_model}

The base prediction model $\mathcal{M}$ we used is LeReS \citep{leres} with ResNeXt101 backbone initialized with the pre-trained network from \url{https://cloudstor.aarnet.edu.au/plus/s/lTIJF4vrvHCAI31}. This pre-trained network was not fitted on the KITTI data set used in our experiments. We fitted the model offline on the first 6000 samples for 60 epochs, to obtain a reasonable predictive system. Then, passing time step 6001 we start training the model in an online fashion while applying the calibration procedure. Lastly, we measure the performance of the deployed calibration method on data points corresponding to time steps 8001 to 10000.

\subsubsection{Training The Uncertainty Quantification Models}\label{sec:uq_model}

For each uncertainty quantification model presented in Section~\ref{sec:uncertainty_heuristics}, we used a pre-trained LeReS \citep{leres} network as the base network architecture and initialized the last two components of the network with random weights. We simultaneously trained the uncertainty quantification model and the LeReS depth model on the first 6000 samples for 60 epochs, and in an online fashion at timestamps $6001$ to $10000$, as mentioned in Section~\ref{sec:depth_model}.

\subsubsection{Depth Example Setup}\label{sec:depth_example_setup}
In this section, we describe the setup we used for the depth experiment presented in Section~\ref{sec:depth_example} in the main text. As explained in Section~\ref{sec:depth_model}, we used LeReS \citep{leres} as the base depth prediction model. The model's predictions were calibrated by our \texttt{Rolling RC}, with the design choice for $f$ given in Section~\ref{eq:im2im_f}. The uncertainty heuristics we used in this experiment are the previous residuals with registration, as described in Section~\ref{sec:prev_res_registration}, with a sliding window of size $p=5$. We used scikit-image's implementation of optical flow. 
We set the parameter "num\_iter" to 2 and the "num\_warp" to 1, to reduce the computational complexity.
The figures are taken from one trial of the experiment, for which the random seed value was set to 0. The only hyper-parameter we tuned in this experiment is the calibration's learning rate $\gamma$ which we tuned in the following way. 
We applied \texttt{Rolling RC} with the following learning rates: $\gamma\in\{0.001, 0.005, 0.01, 0.05, 0.1, 0.2, 0.5, 1, 2, 10\}$ and chose the $\gamma$ that achieved the narrowest intervals among those with coverage greater than $79.9\%$ on the validation set. The validation samples are those that correspond to timestamps $7001$ to $8000$. The value that was finally chosen is $\gamma=0.2$.

\subsubsection{Multiple Risks Controlling Setup}\label{sec:depth_multi_setup}

In this section, we describe the setup we used for the multiple risks controlling experiment presented in Section~\ref{sec:multi_risks_experiments}. We followed the experimental protocol described in Section~\ref{sec:depth_example_setup} except for the tuning of $\underline{\gamma}$. When \texttt{Rolling RC} is applied to control multiple risks, the learning rate $\underline{\gamma}$ is a vector, so we examine all possible choices of $\underline{\gamma}^1,\underline{\gamma}^2\in\{0.00001, 0.0001, 0.001, 0.005, 0.01, 0.1, 0.5, 2\}$. We chose the vector $\underline{\gamma}$ that achieved the narrowest intervals among those with coverage greater than $79.9\%$ and center failure rate lower than $11\%$, evaluated on the validation samples, corresponding to timestamps 7001 to 8000. In this experiment, we set $\lambda_t$ in~\eqref{eq:im2im_f} to be the maximal value in the vector $\underline{\theta}_t$. That is, we used the `max aggregation' described in Section~\ref{sec:multi_risks_aggerations}.

\subsubsection{Implementation Details}\label{sec:depth_technical}
In this section, we describe the technical details of implementing the depth prediction model and the uncertainty quantification heuristics. Popular depth prediction models estimate the depth up to an unknown scale and shift \citep{li2022binsformer, yuan2022new, li2022depthformer}. That is, an ideal model $\mathcal{M}$ satisfies that for every $X_t\in\mathcal{X}$ there exist $\mu_X,\sigma_X\in\mathbb{R}$ such that:
\begin{equation}
\sigma_t\mathcal{M}(X_t) +\mu_t = Y_t.
\end{equation}
We correct the model's outputs to actual depth estimates in the following way.
First, we assume that we are given the ground truth depth of a small set of pixels. Then, we use this information to estimate the scale and shift: when $X_t$ is revealed, we uniformly choose 200 pixels of it and assume that their depth is given along with $X_t$. Next, we obtain the learning model's output $\mathcal{M}(X_t)$ and apply least squares polynomial fitting to compute the estimated scale $\hat{\mu}_t$ and shift $\hat{\sigma}_t$. Finally, we produce the following depth estimate:
\begin{equation}\label{eq:corrected_depth}
\hat{Y}_t=\hat{\sigma}_t\mathcal{M}_t(X_t) +\hat{\mu}_t.
\end{equation}
Throughout this paper, we consider the quantity in~\eqref{eq:corrected_depth} as the output of the depth model $\mathcal{M}$.

We utilize the sparse ground truth depth map to correct the estimates of the uncertainty heuristics as well. Recall that the residual heuristic from Section~\ref{sec_residual_magnitude} produces an estimate $\hat{r}(X_t)$ for the residual $|\mathcal{M}^{m,n}(X_t) - Y_t|$, where $\mathcal{M}(X_t)$ is the scaled model's prediction. We compute the scale and shift for the residual's prediction via polynomial fitting, and output the scaled residual, as in~\eqref{eq:corrected_depth}. Similarly, we correct the previous residuals heuristic defined in Section~\ref{sec:prev_residuals} by re-scaling the positive and negative residuals.

Another important technical detail is dealing with invalid pixels. Invalid pixels are pixels with depth that is too small (below $10^{-8}$), or pixels that are padded to the image. We do not consider these pixels for updating the calibration scheme or for evaluating the methods' performance. For instance, the image coverage rate is practically the coverage rate over all valid pixels in a given image.

Lastly, throughout the depth prediction experiments we used the following formulation of the exponential stretching function for \texttt{Rolling RC}:
\begin{equation}
\varphi^\text{exp.}(x)= \begin{cases}
    e^x-1, & x > 0.1, \\
    x, & -0.1 \leq x \leq 0.1, \\
    -e^{-x}+1, & x < -0.1.
  \end{cases}
\end{equation}
Notice that this stretching function is the identity function around $0$, and therefore it updates $\varphi(\theta_t)$ gently when the calibration is mild ($\theta_t$ is close to 0), and faster (exponentially) as the calibration is more aggressive ($\theta_t$ is away from zero).

\subsection{The Calibration's Hyperparameters}\label{sec:cal_hp}

\subsubsection{The Bounds \texorpdfstring{$m,M$}{TEXT}}

the lower and upper bounds--$m$ and $M$ are predefined constants serve as safeguards against extreme situations where the data change adversarially over time. In such extreme cases, these bounds allow controlling the coverage: once $\theta$ exceeds the upper bound we return the infinite interval (the full label space) and once it exceeds the lower bound we return the empty set. \rev{By outputting the full label space, we can guarantee to control the risk at any user-specified level, as the full label space is assumed to attain loss lower than the nominal level.} In practice, however, we do not expect a reasonable predictive model to reach the safeguard induced by $m$ and $M$. In fact, in our experiments, we set $m=-9999$, and $M=9999$ to be extremely large values relative to the scale of $Y$, and the coverage we obtained is exactly 90\%. 

For classification problems, we can set the bounds to be $(0,1)$, similarly to $\texttt{ACI}$. For regression problems, it depends on the interval constructing function $f$. If the intervals are constructed in the quantile scale, according to Section~\ref{sec:rci_on_q_scale} of the main text, we can set the bounds to be $(0,1)$ since $\theta$ is bounded in this range, as in $\texttt{ACI}$. If the intervals are constructed in the $Y$ scale, according to Section~\ref{sec:rci-reg} of the main text, we can set them to be 100 times the difference between the lowest and highest values of the response variables in the training data.

\subsubsection{The Initial Value of \texorpdfstring{$\theta$}{TEXT}}

The recommended way to set the initial value of $\theta$ depends on the design of the interval constructing function $f$: for example, for the interval constructing function in $Y$ scale, presented in Section \ref{sec:rci-reg} in the main text, we set the initial $\theta$ to zero as this is the right choice for a model that correctly estimates the conditional quantiles. If the model is inaccurate, $\theta$ will be updated over time, in a way that guarantees that the desired long-range coverage will be achieved.

\subsubsection{The Learning Rate \texorpdfstring{$\gamma$}{TEXT}}\label{sec:gamma_study}

In this section, we analyze the effect of the learning rate $\gamma$ on the performance of the calibration scheme. Figure \ref{fig:rci_gamma_comparison} presents the results of our $\texttt{Rolling RC}$ with a linear stretching function applied with different step-size $\gamma$ on the synthetic data described in Section~\ref{sec:syn_data} of the main manuscript. We choose the linear stretching instead of the exponential one to better isolate the effect of $\gamma$ on the performance. Following that figure, observe that by increasing $\gamma$ we increase the adaptivity of the method to changes in the distribution of the data, as indicated by the \texttt{MSL}. Recall that (i) the lower the \texttt{MSL} the smaller the average streak of miscoverage events; and (ii) the \texttt{MSL} for the ideal model is~$\approx~1.11$. On the other hand, the improvement in $\texttt{MSL}$ comes at the cost of increasing the intervals’ lengths: observe how the largest $\gamma$ results in too conservative intervals, as their $\texttt{MSL}$ is equal to 1.

To set a proper value for $\gamma$ in regression problems, we suggest evaluating the pinball loss of the calibrated intervals, using a validation set. With this approach, one can choose the value of $\gamma$ that yields the smallest loss. We note that our method is guaranteed to attain valid coverage for any choice of $\gamma$, so the trade-off here is between the intervals’ lengths and faster adaptivity to distributional shifts.

\begin{figure}[ht]
\centering
\includegraphics[width=0.9\textwidth]{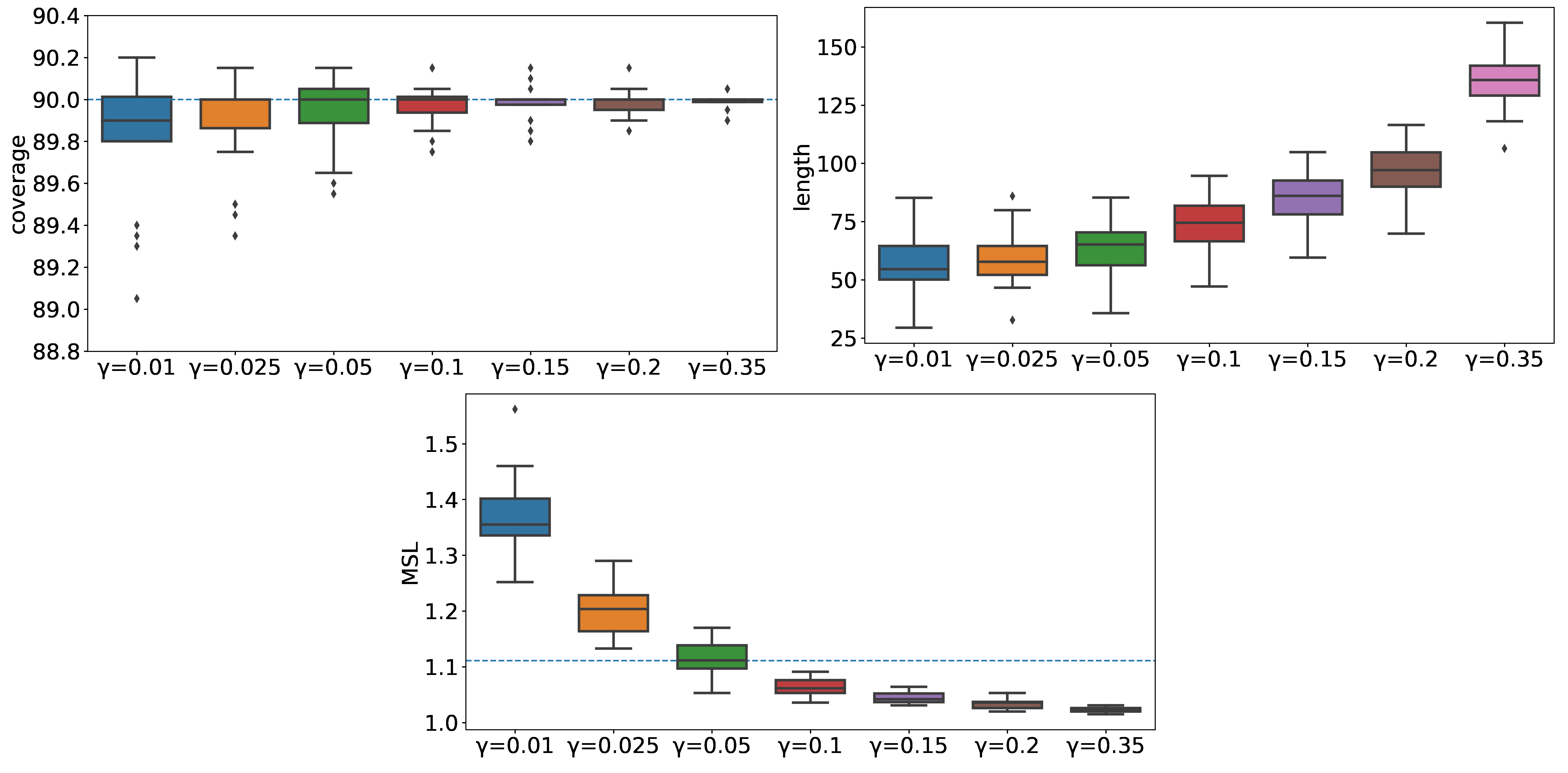} \caption{\texttt{Rolling RC} with linear stretching applied to control the 0-1 loss at level $r=10\%$ with different learning rates on the synthetic described in Section~\ref{sec:syn_data}.}
\label{fig:rci_gamma_comparison}
\end{figure}

\subsection{The Hyper-Parameters for the Stretching Functions}\label{sec:choosing_la_hyperparam}
The `score adaptive' $\varphi$ function has three hyper-parameters, and the `error adaptive' has four. Choosing them wisely greatly affects the performance of the method.
To do so, we used \texttt{SMAC3} \citep{JMLR:v23:21-0888}, which is a hyper-parameter tuning library written in python. We let it find a combination of $\beta^{\text{score}} \in [0.01, 0.4], \beta^{\text{loss}} \in [0.1, 0.2]$ that minimizes the pinball loss (or to be exact, the average of the pinball losses of the $0.95$ and $0.05$ quantiles), with runcount-limit of 40.
We chose $\beta^{\text{low}}, \beta^{\text{high}}$ to be $-\Delta_{\text{mean}}, +\Delta_{\text{mean}}$ (respectively) where:
\begin{equation*}
    \Delta_\text{mean} = \frac{1}{|\mathcal{I}_{\text{val}}|}\sum_{t\in\mathcal{I}_\text{val}} |Y_{t}-Y_{t-1}|,
\end{equation*}
where $\mathcal{I}_{\text{val}}$ are indices $5001-8000$.
The rationale behind this choice is that we want the clipping to be on the scale of an average change in the intervals' lengths.

\subsection{Machine's Spec}\label{sec:exp_spec}

The resources used for the experiments are:
\begin{itemize}
    \item \textbf{CPU}: Intel(R) Xeon(R) E5-2650 v4.
    \item \textbf{GPU}: Nvidia titanx, 1080ti, 2080ti.
    \item \textbf{OS}: Ubuntu 18.04.

\end{itemize}

\begin{table}[htbp]
\centering
\caption{Hyperparameters tested for each data set} \label{tab:hyperparameters}
\setstretch{1.3}
\begin{tabular}{ cc} 
\toprule[1.1pt]
 Parameter & Options \\
    \midrule
$f_1$ - LSTM input layers & [32], [32, 64], [32, 64, 128] \\ 
$f_2$ -LSTM layers & [64], [128] \\ 
$f_3$ -LSTM output layers & [32], [64, 32]\\ 
learning rate & $10^{-4}$, $5\cdot 10^{-4}$  \\ 

    \bottomrule[1.1pt]
\end{tabular}

\end{table}

\subsection{Data Sets Details}
\subsubsection{Synthetic Data Set}\label{sec:syn_data}
In this section, we define a synthetic data set that we use in our ablation study.
First, we define a group indication vector, denoted as $g_t$:
\begin{equation}
g = 1^{m_1}\cdot2^{m_2}\cdot3^{m_3}\cdot4^{m_4}...
\end{equation}
where $w^n$ is a vector of the number $w$ repeated $n$ times, $\cdot$ is a concatenation of two vectors, $m_i \sim \mathcal{N}(500, 10^2)$ and $\mathcal{N}(\mu,\sigma^2)$ is the normal distribution with mean $\mu$ and variance $\sigma^2$. In words, each group lasts for approximately $500$ time steps, and the vector is a concatenation of the group's indexes.
The generation of the feature vectors and the response variable is done in the following way:
\begin{align}
& \hat{\beta}_i \sim \textrm{Uniform} (0,1)^{p}, & \\
& \beta_i = \frac{\hat{\beta}_i}{\Vert\hat{\beta_i}\Vert_1}, & \\
& \omega_i = 
  \begin{cases}
    \mathcal{N}(20, 10), & g_t \equiv 0 \mod 2, \\
    1, & \text{otherwise}, \\
  \end{cases}, & \\
& X_t \sim  \textrm{Uniform} (0,1)^5, &\\
& \varepsilon_t \sim  \mathcal{N}(0,1), &\\
& Y_t = \frac{1}{2}Y_{t-1}+\omega_{g_t}^2|\beta_{g_t}^TX_t|+ 2\sin(2 X_{t,1} \cdot \varepsilon_t), &
\end{align}
where $\textrm{Uniform}(a, b)$ is a uniform distribution on the interval $(a, b)$.

\subsubsection{Real Data Sets}\label{sec:real_data}
In addition to the features given in the raw data set, we added for each sample the day, month, year, hours, minutes, and the day of the week.
Table~\ref{tab:real_datasets_info} presents the number of samples in each data set, the number of samples we used in the quantile regression experiments \ref{sec:qr_experiments}, and the dimension of the feature vector.

\begin{table}[htbp]
\caption{Information about the real data sets.}
    \label{tab:real_datasets_info}
\setstretch{1.5}
  \centering
\scalebox{0.8}{
\centering
\begin{tabular}{cccc}
    \toprule[1.1pt]
    {Data Set} & {Total Number of Samples} & {Number of Used Samples} & {Feature Dimension} \\
    \midrule
    \textbf{power \citep{power_data}} & 52416 & 20000 & 11 \\
    \textbf{energy \citep{energy_data}} & 19735 & 20000 & 33 \\
    \textbf{traffic \citep{traffic_data}} & 48204 & 20000 & 12 \\
    \textbf{wind \citep{wind_data}} & 385565 & 20000 & 6 \\
    \textbf{prices \citep{prices_data}} & 34895 & 20000 & 61 \\

    \bottomrule[1.1pt]
    \end{tabular}%
}
\captionsetup{format=hang}

\end{table}

\section{Additional Experiments}

\subsection{Single Response Quantile Regression}\label{sec:additional_qr}
\subsubsection{Ablation Study on the Stretching Function}\label{sec:stretching_functions_ablation}
In this section, we evaluate \texttt{Rolling RC} in the regression setting for different stretching functions. We follow the procedure described in Section~\ref{sec:rci-reg} and use the following stretching functions:
\paragraph{None.}
\begin{equation}
\varphi(x)=x
\end{equation}
\paragraph{Exponential.}
\begin{equation}
\varphi(x)= \begin{cases}
    e^x-1, & x > 0, \\
    -e^{-x}+1, & x \leq 0,
  \end{cases}
\end{equation}

\paragraph{Score adaptive.} The following stretching function makes a larger update to $\theta_t$ the farther the test $Y_t$ from the interval's boundaries. This is in contrast with the exponential function described above, which does not take into account the quality of the constructed interval. More formally, denote the CQR non-conformity score \citep{CQR} by 
$$s_{t} = \max\{ \mathcal{M}_t (X_{t}, \alpha/2) - Y_t, Y_t - \mathcal{M}_t (X_{t}, 1-\alpha/2) \},$$ which measures the signed distance of $Y_t$ from the its closest boundary. Next, define
\begin{align*}
    \varphi_t(\theta) = \theta + \lambda^{\text{score}}_t, \ \text{ where } \ 
    \lambda^{\text{score}}_t = \text{clip}(\lambda^{\text{score}}_{t-1} - \beta^{\text{score}}\cdot s_{t-1}, \beta^{\text{low}}, \beta^{\text{high}}),
\end{align*}
where $\beta^{\text{score}}, \beta^{\text{low}}$ and $\beta^{\text{high}}$ are hyperparameters.
Similarly to the `error adaptive' stretching function presented in Section \ref{sec:stretching_functions}, the clipping function is used to restrain the effect of an outlier $Y_t$ that is far from the boundaries.

\paragraph{Error adaptive.} By adding awareness of previous points' loss to the `score adaptive' stretching, we forge a stretching function that is aware of both the error margin and the constructed intervals' loss:
\begin{equation*}
    \varphi_t(\theta) = \theta + \lambda^{\text{error}}_t, \ \text{where} \ \lambda^{\text{error}}_t = \text{clip}(\lambda^{\text{error}}_{t-1} - \beta^{\text{score}}\cdot s_{t-1} \cdot \exp\left\{\beta^{\text{loss}} \cdot |\ell_{t-1}-r| \right\}, \beta^{\text{low}}, \beta^{\text{high}}).    
\end{equation*}

Figure~\ref{fig:coverage_all} displays the performance of \texttt{Rolling RC} aiming to control coverage rate at level $1-\alpha=90\%$ with the stretching functions described above, and Figure~\ref{fig:mc_all} presents the results of \texttt{Rolling RC} applied to control the $\texttt{MC}$ risk at level $\alpha/(1-\alpha)=1/9$. Following these figures, we can see that \texttt{Rolling RC} with each stretching function performs well on most of the metrics on most of the data sets.

Although the `no stretching' and the `exponential stretching' functions converge faster to the desired risk level, it is clear from the results that the `score adaptive' and the `error adaptive' stretching functions construct narrower intervals. Moreover, the `error adaptive' approach is superior in several terms:
\begin{itemize}
    \item It constructs the shortest intervals.
    \item It achieves $\texttt{MSL}$ that is closer to the ideal level $1.111...$, which means that consecutive miscoverage events are less likely to occur (see Appendix~\ref{sec:msl_true_q}).
    \item It achieves $\texttt{MC}$ that is closer to the desired level $0.111...$ when aiming to control the coverage rate, and its coverage rate is closer to $90\%$ when aiming to control the $\texttt{MC}$ risk level, which is a desired outcome (see Appendix \ref{sec:choosing_mc_alpha_level}).
    \item \texttt{Rolling RC} with `error adaptive' stretching performs similarly to the competitive stretchings in terms of \texttt{$\Delta$Coverage}.
\end{itemize}


\begin{figure}[ht]
\centering
    \hfill%
    \subfloat{%
        \includegraphics[width=0.5\textwidth]{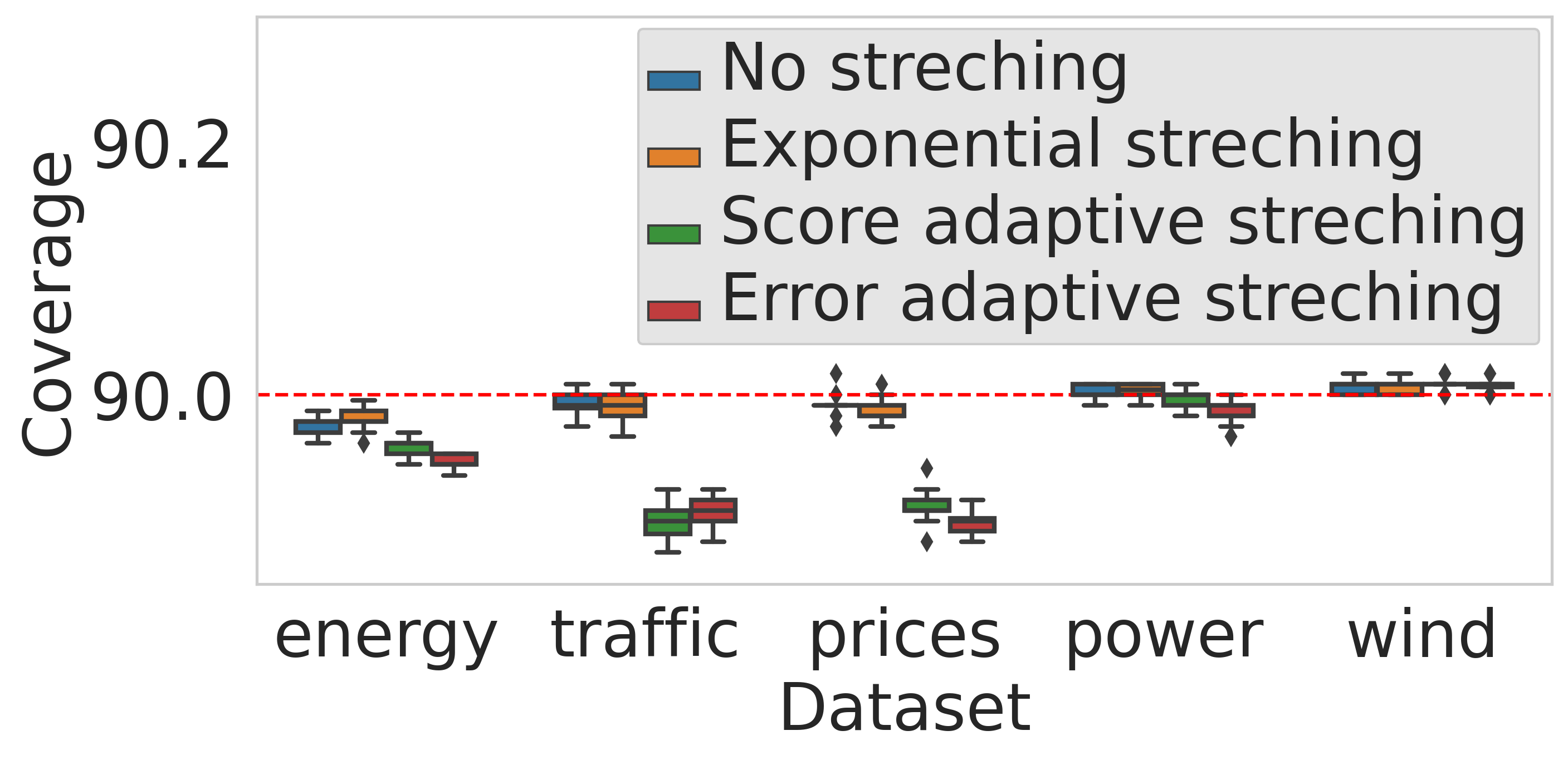}%
        \label{fig:coverage_by_controlling_coverage_all_methods}%
        }%
    \hfill%
    \subfloat{%
        \includegraphics[width=0.5\textwidth]{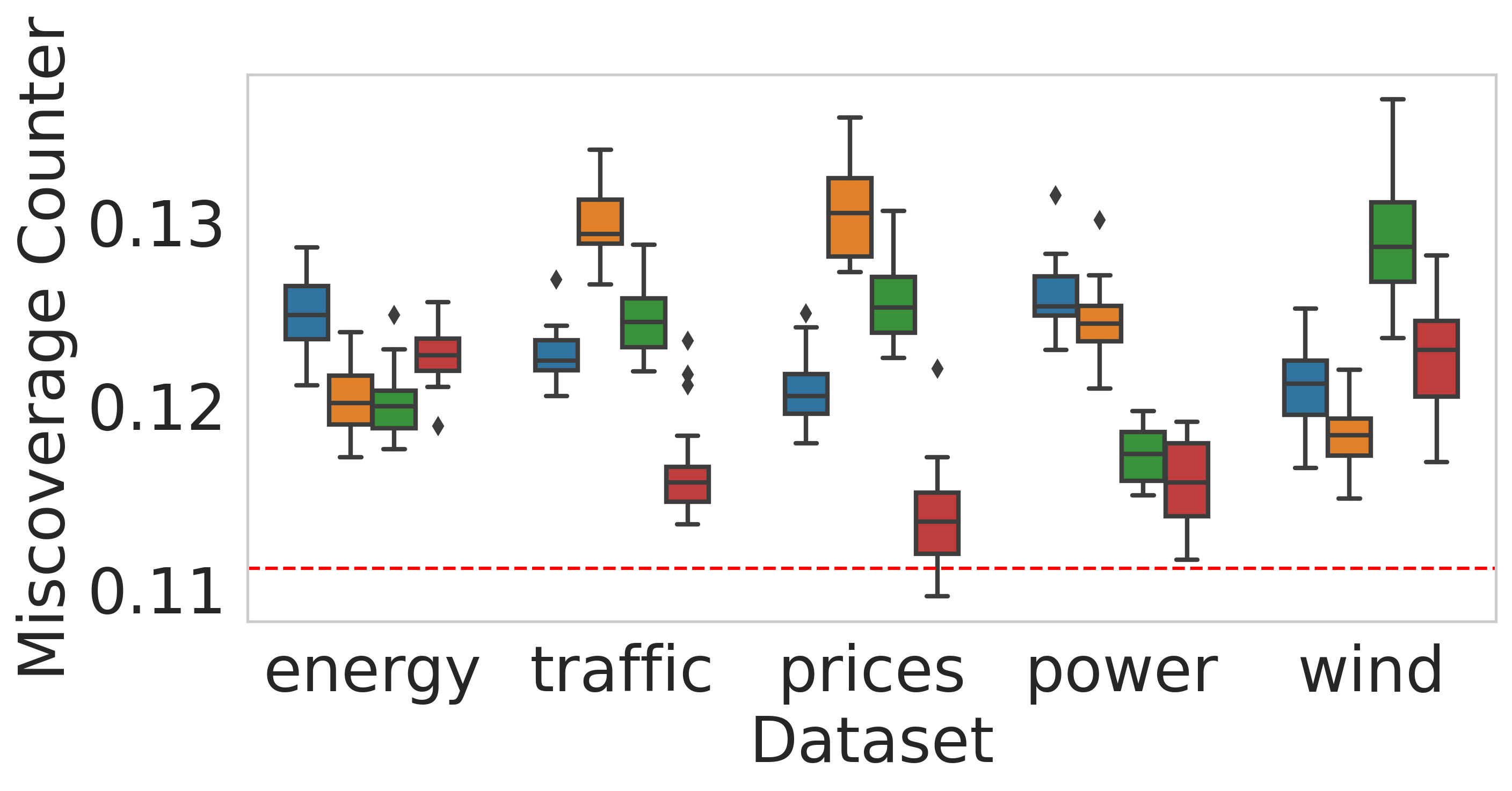}%
        \label{fig:MC_by_controlling_coverage_all_methods}%
        }%
    \hfill%
    \subfloat{%
        \includegraphics[width=0.5\textwidth]{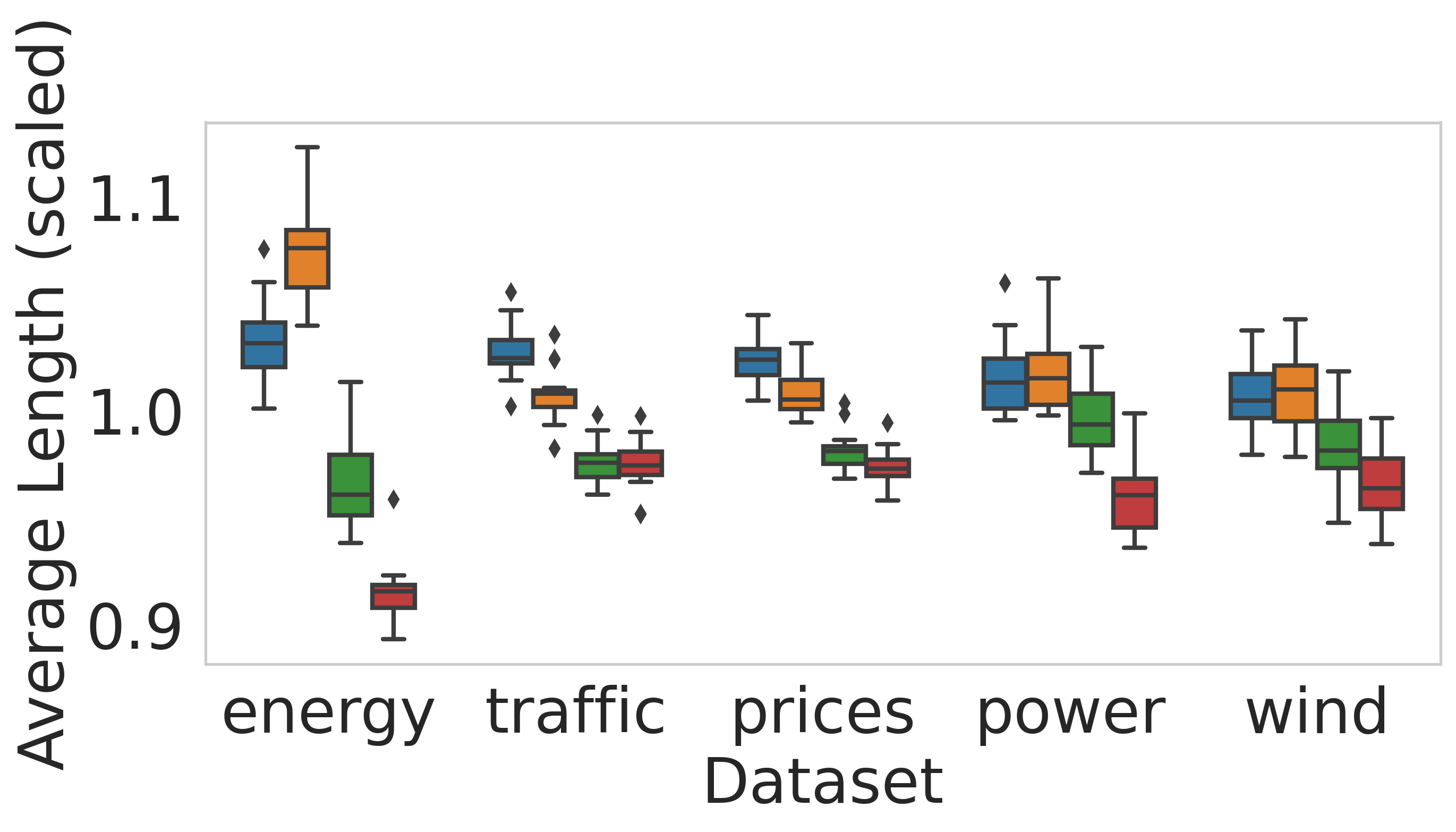}%
        \label{fig:interval_length_by_controlling_coverage_all_methods}%
        }%
    \hfill%
    \subfloat{%
        \includegraphics[width=0.5\textwidth]{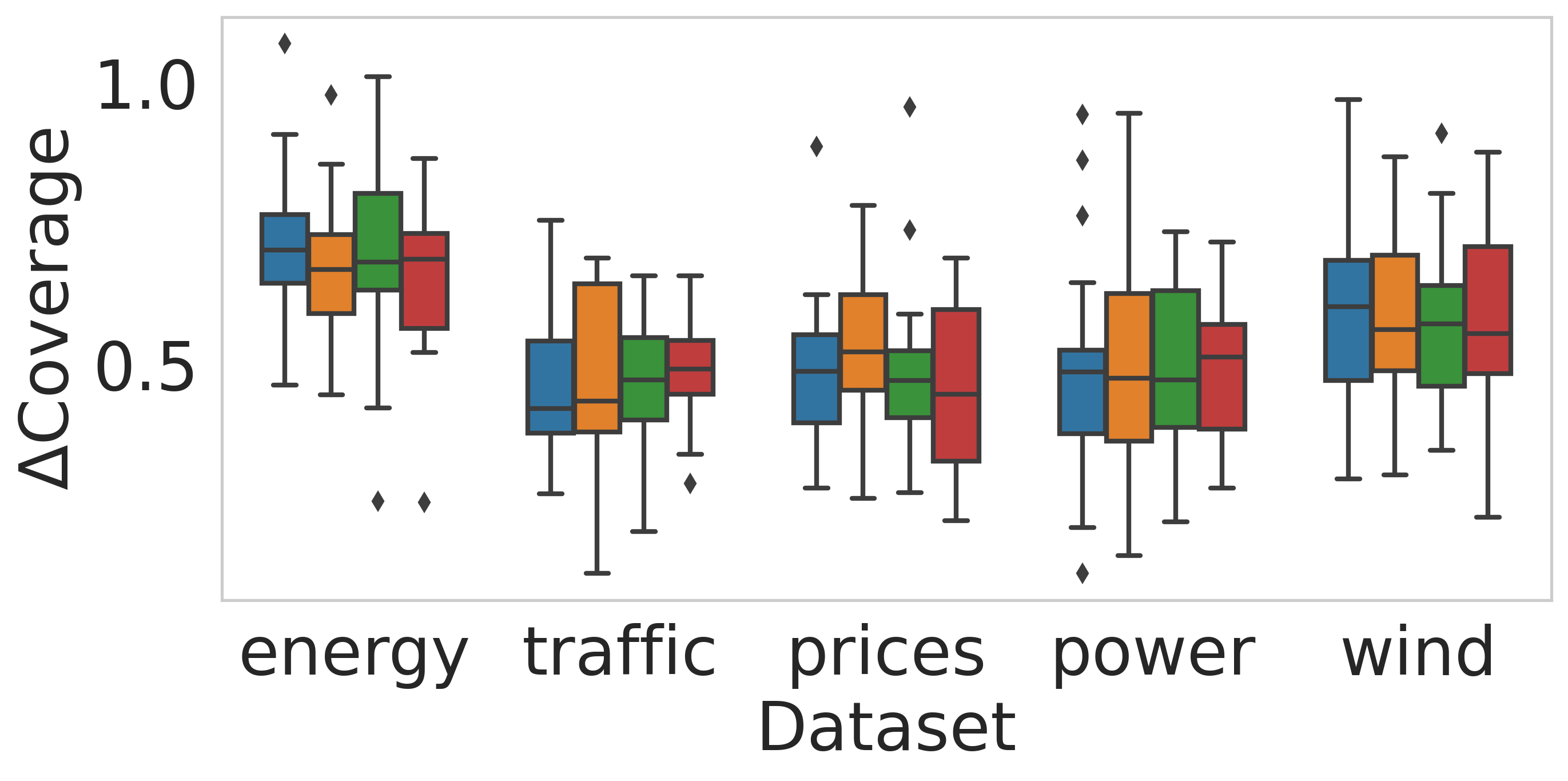}%
        \label{fig:delta_coverage_by_controlling_coverage_all_methods}%
        }%
    \hfill%
    \subfloat{%
        \includegraphics[width=0.5\textwidth]{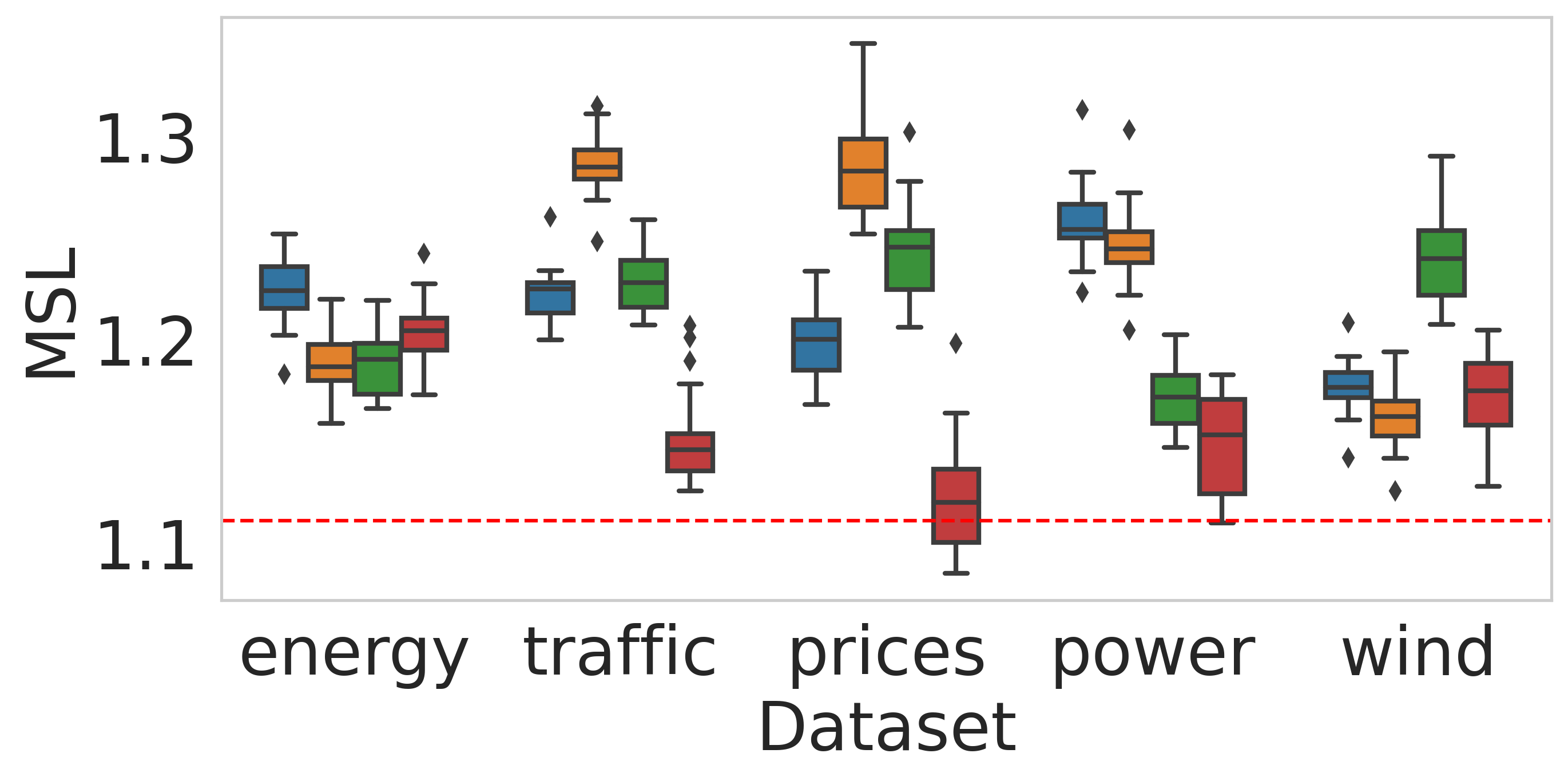}%
        \label{fig:MSL_by_controlling_coverage_all_methods}%
        }%
        \caption{Performance of \texttt{Rolling RC} on real data sets, aiming to control the coverage rate at level $1-\alpha=90\%$. The length of the prediction intervals is scaled per data set by the average length of the constructed intervals. Results are evaluated on  20 random initializations of the predictive model. The \texttt{$\Delta$Coverage} metric is scaled between 0 to 100.}
        \label{fig:coverage_all}
\end{figure}

\begin{figure}[ht]
\centering
    \hfill%
    \subfloat{%
        \includegraphics[width=0.46\textwidth]{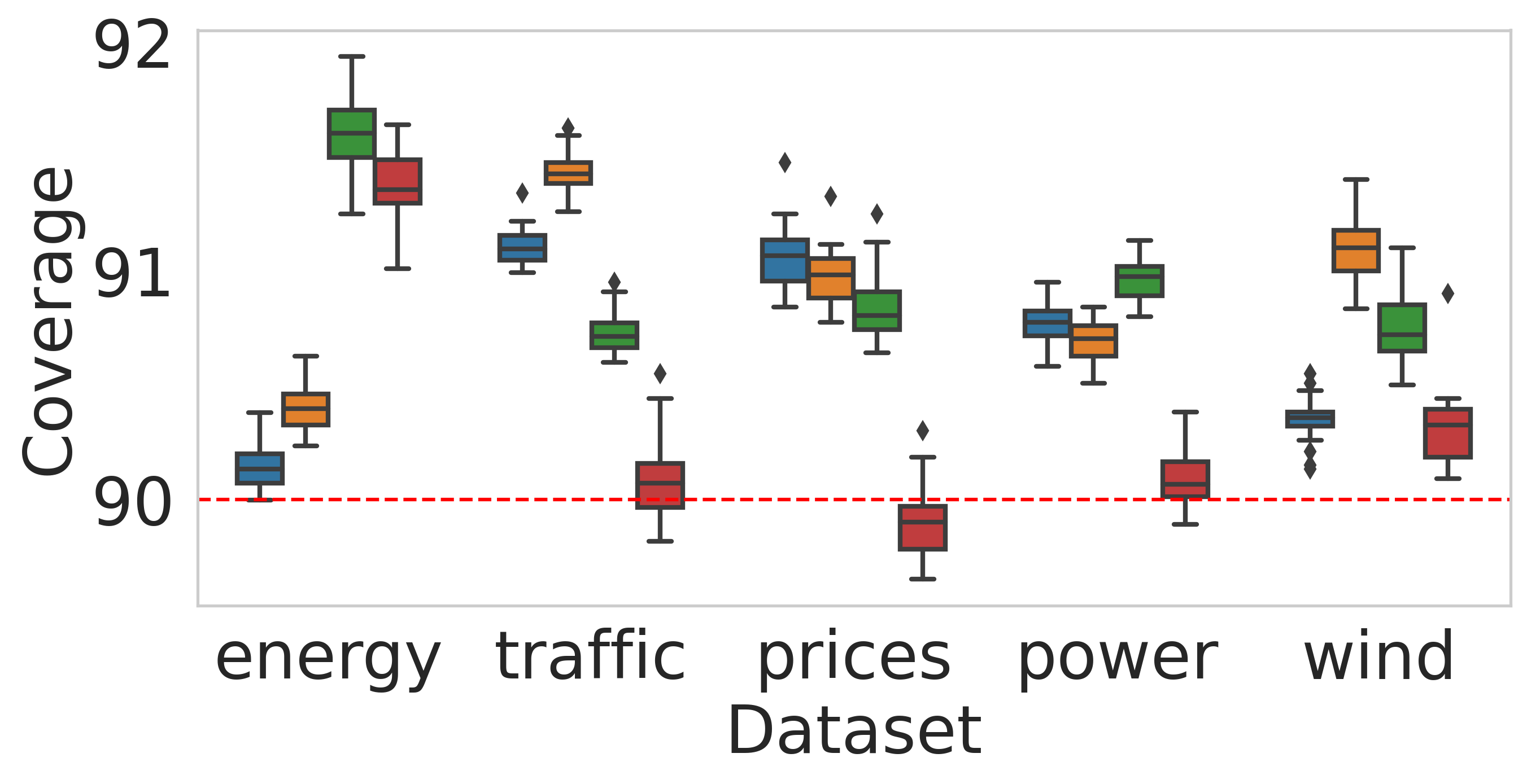}%
        \label{fig:coverage_by_controlling_MC_all_methods}%
        }%
    \hfill%
    \subfloat{%
        \includegraphics[width=0.5\textwidth]{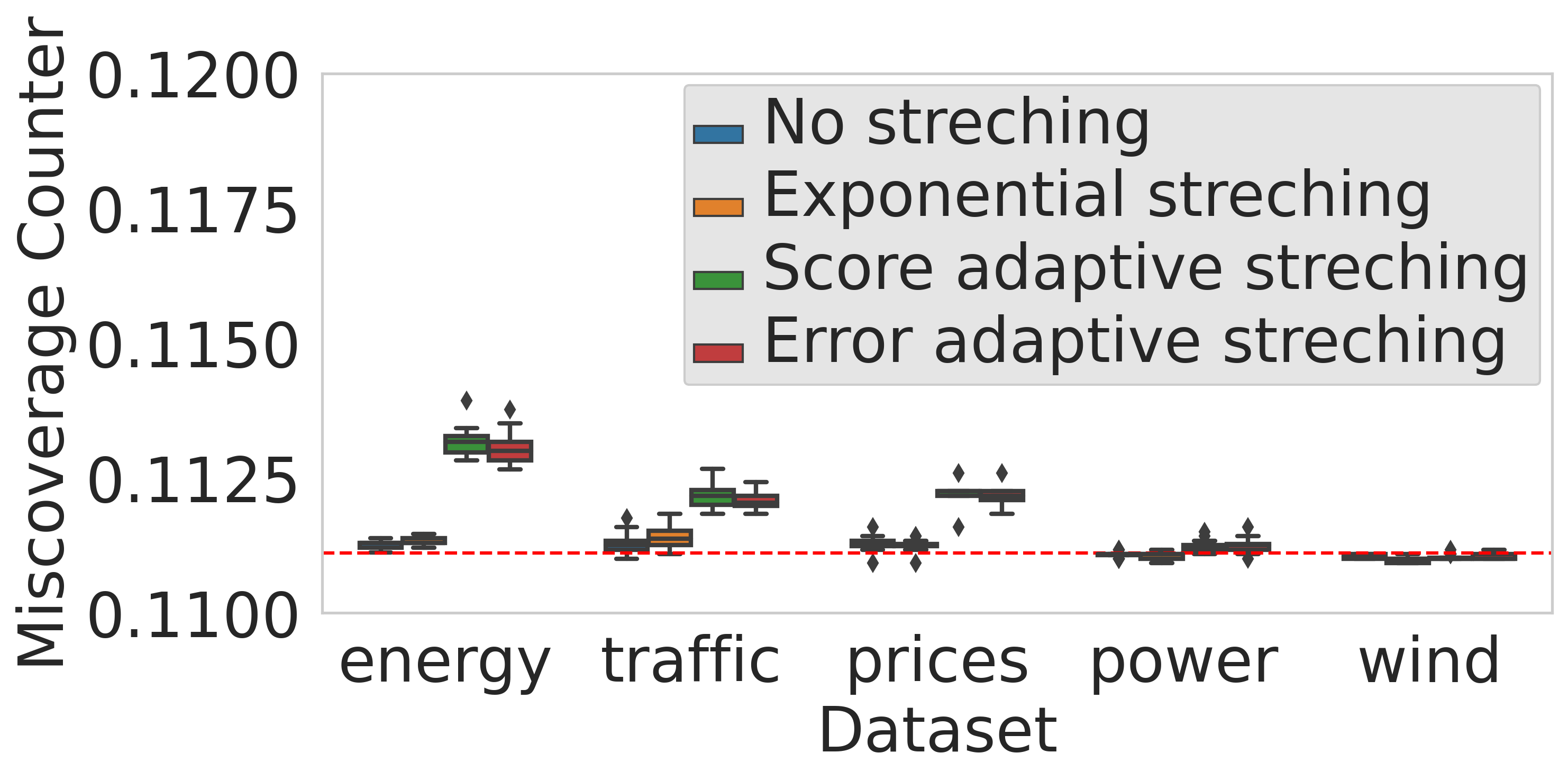}%
        \label{fig:MC_by_controlling_MC_all_methods}%
        }%
    \hfill%
    \subfloat{%
        \includegraphics[width=0.5\textwidth]{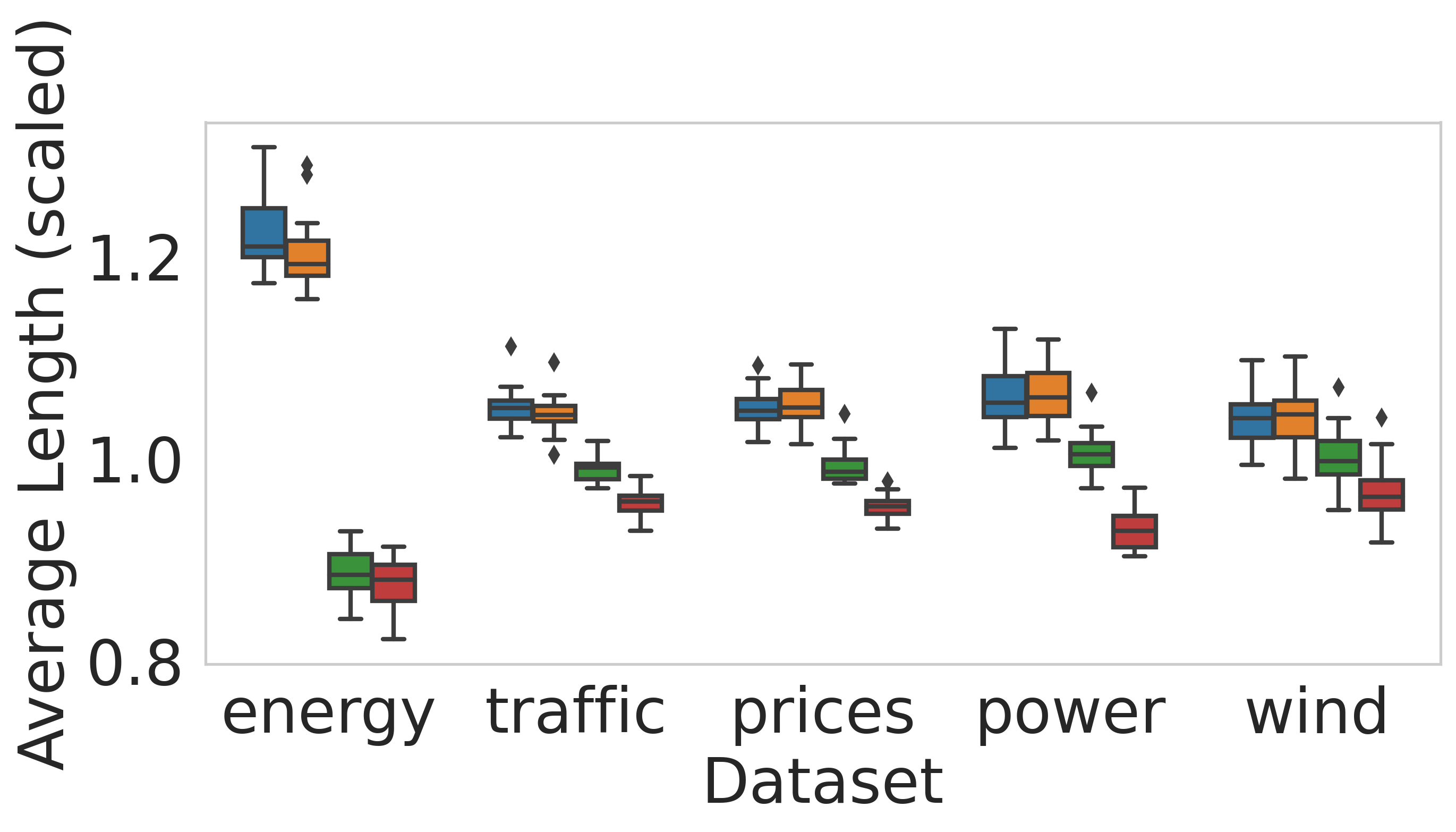}%
        \label{fig:interval_length_by_controlling_MC_all_methods}%
        }%
    \hfill%
    \subfloat{%
        \includegraphics[width=0.5\textwidth]{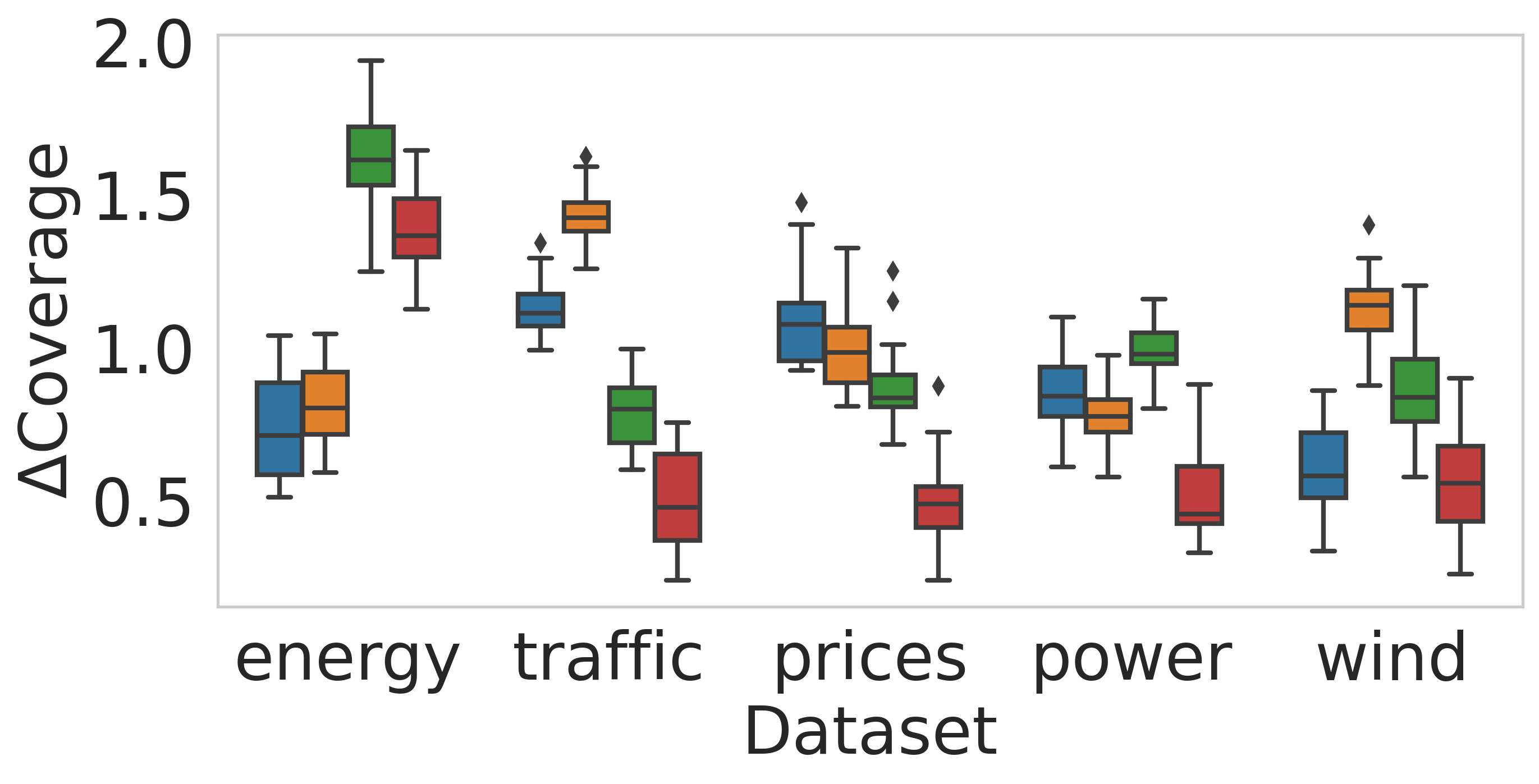}%
        \label{fig:delta_coverage_by_controlling_MC_all_methods}%
        }%
    \hfill%
    \subfloat{%
        \includegraphics[width=0.5\textwidth]{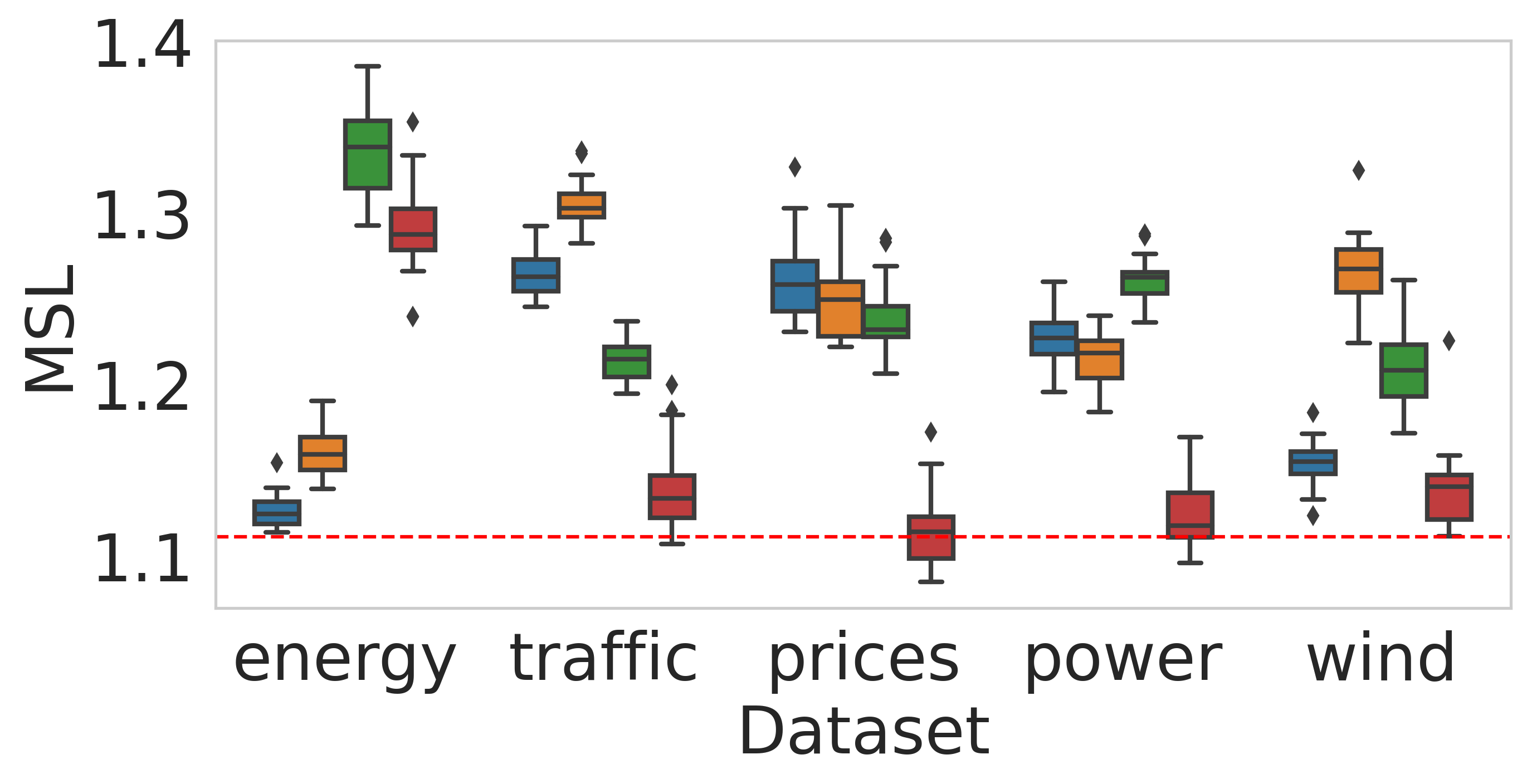}%
        \label{fig:MSL_by_controlling_MC_all_methods}%
        }%
        \caption{Performance of \texttt{Rolling RC} on real data sets, where we aim to control the \texttt{MC} risk at level $\alpha/(1-\alpha)=1/9$. The length of the prediction intervals is scaled per data set by the average length of the constructed intervals. Results are evaluated on  20 random initializations of the predictive model. The \texttt{$\Delta$Coverage} metric is scaled between 0 to 100.}
        \label{fig:mc_all}
\end{figure}

\subsubsection{Constructing Uncertainty Sets With a Calibration Set}\label{sec:rrc_vs_aci}
In this section, we analyze an instantiation of \texttt{ACI} \citep{aci}, which we refer to as \texttt{calibration with cal} that constructs uncertainty sets with a controlled miscoverage rate using a calibration set.
It is more out-of-the-box because it allows the user to take any conformal score function from the conformal prediction literature to get a confidence set function $f$. Many conformal scores have been developed and extensively studied, so this approach directly inherits all of the progress made on this topic. 
\texttt{calibration with cal} uses calibration points, but does not hold out a large block. Rather, previous points are simultaneously used both for calibration and model fitting.

Turning to the details, denote by $S(\mathcal{M}_t(X_t),Y_t) \in \mathbb{R}$ a non-conformity score function that takes as an input the model's prediction $\mathcal{M}_t(X_t)$ at time $t$ and the corresponding label $Y_t$, and returns a measure for the model's goodness-of-fit or prediction error. Here, the convention is that smaller scores imply a better fit. For instance, adopting the same notations from \eqref{eq:rci_stretched_reg}, the quantile regression score presented in \citep{CQR} are given by $S(\mathcal{M}_t(X_t),Y_t) = \max\{\mathcal{M}_t(X_t, \alpha/2) - Y_t, \ Y_t - \mathcal{M}_t(X_t, 1-\alpha/2)\}$. Next, define the prediction set constructing function in~\eqref{eq:rci_c} as:
\begin{equation}\label{eq:f_rci_wcal}
   f(X_t, \theta_t, \mathcal{M}_t) = \{y\in\mathcal{Y} : S(\mathcal{M}_t(X_t),y) \leq Q_{1+\theta_t}(\mathcal{S}_\text{cal}) \},
\end{equation}
where $\mathcal{S}_\text{cal} = \{S(M_{t'}(X_{t'}),Y_{t'}) : t'=t-n,\dots,t-1\}$ is a set containing the $n$ most recent non-conformity scores. 
The function $Q_{1+\theta_t}(\mathcal{S}_\text{cal})$ returns the $(1+\theta_t)$-th empirical quantile of the scores in $\mathcal{S}_\text{cal}$, being the $\ceil{(1+\theta_t)(n+1)}$ largest element in that set. Here, $-1 \leq \theta_t \leq 0$ is the calibration parameter we tune recursively, as in \eqref{eq:generalized_rci_theta}. The reason for having the negative sign, is to form larger prediction sets as $\theta$ increases. In plain words, $f$ in \eqref{eq:f_rci_wcal} returns all the candidate target values $y$ for the test label, whose score $S(\mathcal{M}_t(X_t),y)$ is smaller than $(1+\theta_t)\times100\%$ of the scores in $\mathcal{S}_\text{cal}$, which are evaluated on truly labeled historical data $S(\mathcal{M}_{t'}(X_{t'}),Y_{t'})$. As such, the size of the set in \eqref{eq:f_rci_wcal} gets smaller (larger) as $1+\theta_t$ gets smaller (larger). 

For reference, \texttt{calibration with cal} procedure is summarized in Algorithm~\ref{alg:rci_with_cal}. The reason for this method's name is to emphasize that we now use calibration scores to formulate the prediction set function $f$. In fact, the coverage guarantee of \texttt{calibration with cal} follows directly from Theorem~\ref{thm:risk_guarantee} for $f(X_t, \theta_t, \mathcal{M}_t)$ defined in \eqref{eq:f_rci_wcal}.

We run \texttt{Rolling RC} with either `error adaptive' stretching and without stretching, as described in Section~\ref{sec:stretching_functions}, and \texttt{calibration with cal}, as presented in Algorithm~\ref{alg:rci_with_cal} on the real data sets detailed in Appendix~\ref{sec:real_data}. 
Figure~\ref{fig:with_vs_without_cal} summarizes the results, showing that all methods attain the desired coverage level; this is guaranteed by Theorem \ref{thm:risk_guarantee}. This figure also shows that \texttt{Rolling RC} with `error adaptive' stretching constructs the narrowest intervals while attaining the best conditional coverage metrics. Furthermore, one can see that even without stretching, \texttt{Rolling RC} performs better than \texttt{calibration with cal}, as indicated by the intervals' lengths and the conditional coverage metrics.

\begin{algorithm}
  \caption{\texttt{calibration with cal}}
  \label{alg:rci_with_cal}
  
  	\textbf{Input:}
	\begin{algorithmic}
		\State Data $\{(X_t, Y_t)\}_{t=1}^T \subseteq \mathcal{X} \cross \mathcal{Y}$, given as a stream, miscoverage level $\alpha\in(0,1)$, a score function $S$, a calibration set size $n_2$, a step size $\gamma >0$, and an online learning model $\mathcal{M}$.
	\end{algorithmic}
	
  	\textbf{Process:}
  \begin{algorithmic}[1]
        \State Initialize $\alpha_0=\alpha$ and a set of the previous conformity scores: $\mathcal{S}_\text{cal}=\emptyset$.
  	    \For{$t=1,...,T$}

  	        \State Construct a prediction set for the new point $X_{t}$:
  	        \begin{equation}
  	            \hat{C}^{\texttt{WC}}_t(X_t)= \{y\in\mathcal{Y} : S(\mathcal{M}_t(X_t),y) \leq Q_{1-\alpha_t}(\mathcal{S}_\text{cal})\}.
  	        \end{equation}
  	        \State Obtain $Y_t$.
  	        \State Compute the current conformity score: $s_t = S(\mathcal{M}_t(X_t),Y_t)$.
  	        \State Add the current conformity score to the set: $\mathcal{S}_\text{cal} = \mathcal{S}_\text{cal} \cup \{s_t\}$.
  	        \State Remove the oldest calibration point from the set: $\mathcal{S}_\text{cal} = \mathcal{S}_\text{cal} - \{s_{t-n_2}\}$.
  	        \State Compute $\text{err}_t=\mathbbm{1}{\{Y_t \notin \hat{C}_t^{\texttt{WC}}(X_t)\}}$.
	        \State Update $\alpha_{t+1}= \alpha_t + \gamma(\alpha-\text{err}_t)$.
	        \State Fit the model $\mathcal{M}_{t}$ on $(X_t,Y_t)$ and obtain the updated model $\mathcal{M}_{t+1}$.
	    \EndFor
  \end{algorithmic}
  
  	\textbf{Output:}
	\begin{algorithmic}
		\State Uncertainty sets $\hat{C}^{\texttt{WC}}_t(X_t)$ for each time step $t\in\{1,...,T\}$.
	\end{algorithmic}
\end{algorithm}

\begin{figure}[ht]
\centering
    \hfill%
    \subfloat{%
        \includegraphics[width=0.5\textwidth]{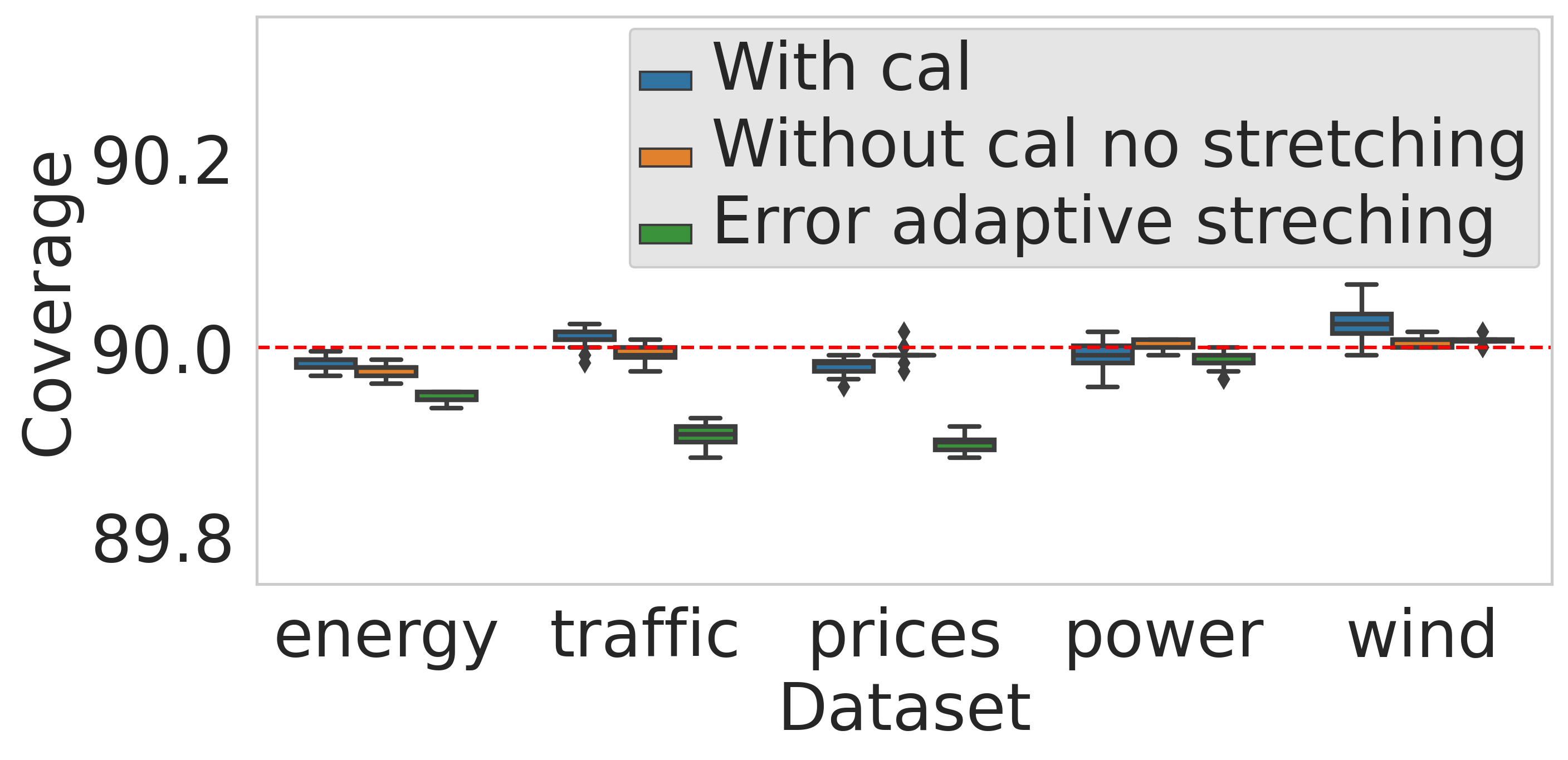}%
        \label{fig:cal_vs_no_cal_coverage}%
        }%
    \hfill%
    \subfloat{%
        \includegraphics[width=0.5\textwidth]{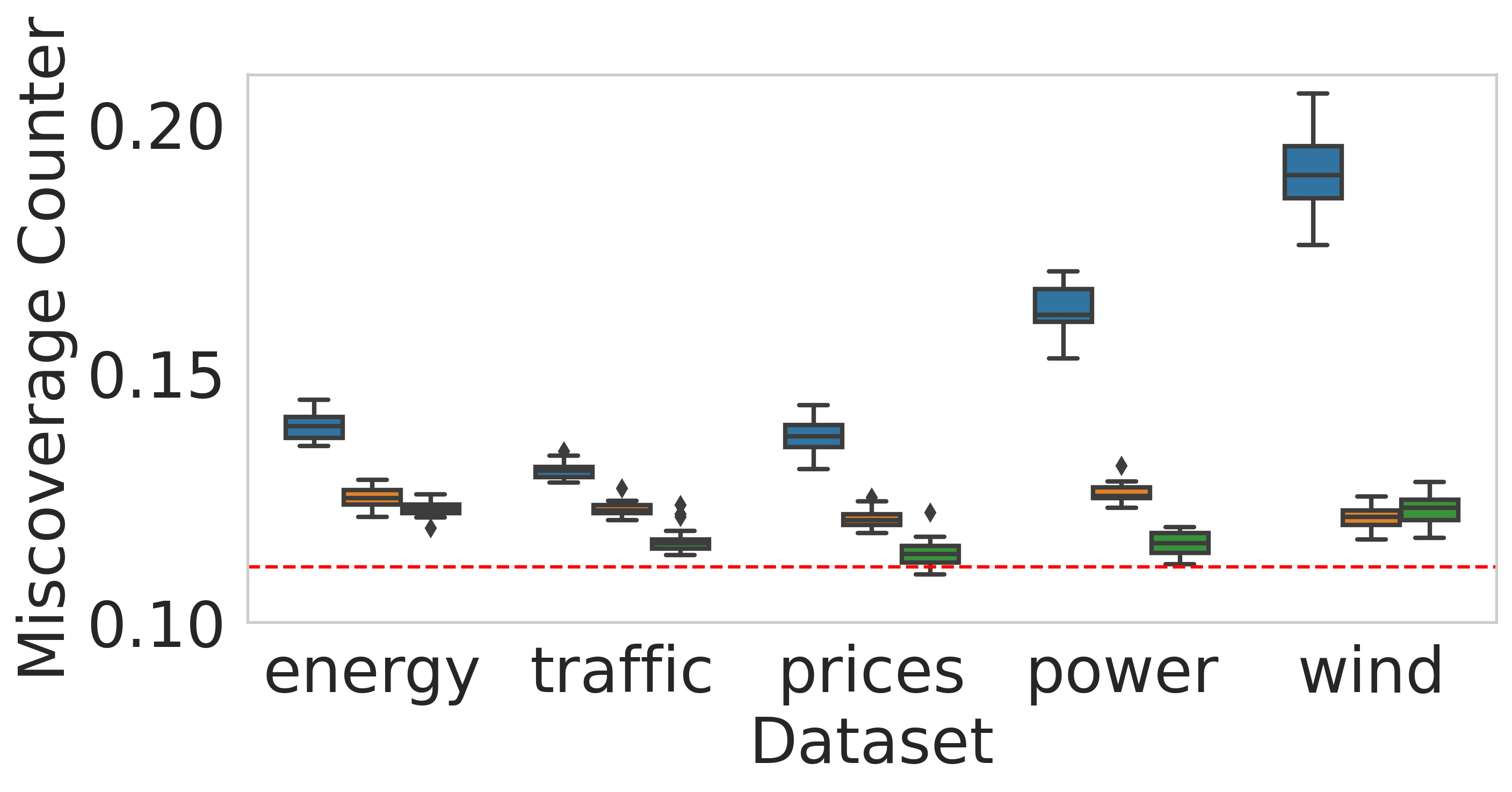}%
        \label{fig:cal_vs_no_cal_MC}%
        }%
    \hfill%
    \subfloat{%
        \includegraphics[width=0.5\textwidth]{With_vs_Without_Cal/cal_vs_no_cal_length.png}%
        \label{fig:cal_vs_no_cal_length}%
        }%
    \hfill%
    \subfloat{%
        \includegraphics[width=0.5\textwidth]{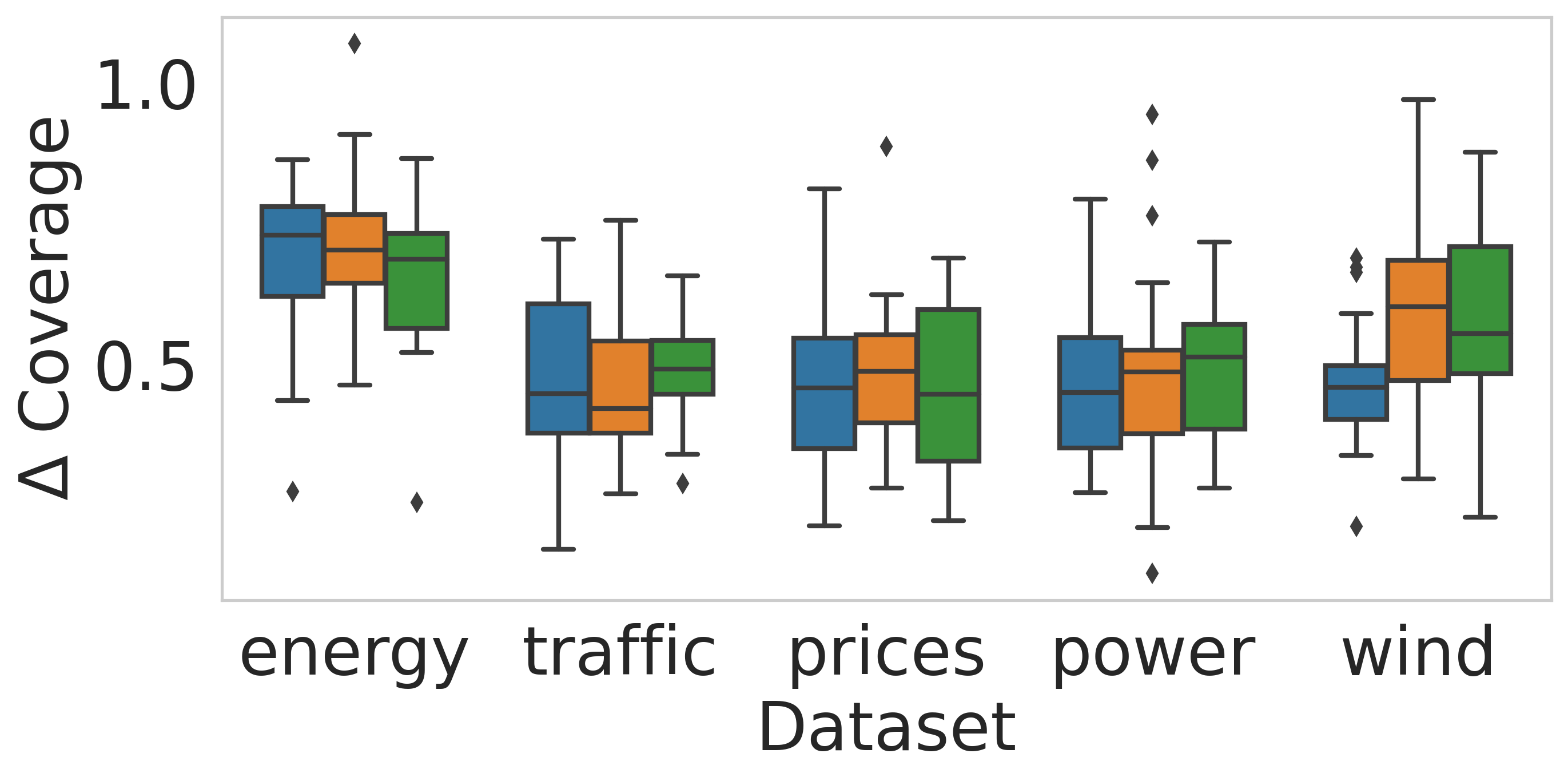}%
        \label{fig:cal_vs_no_cal_delta_coverage}%
        }%
    \hfill%
    \subfloat{%
        \includegraphics[width=0.5\textwidth]{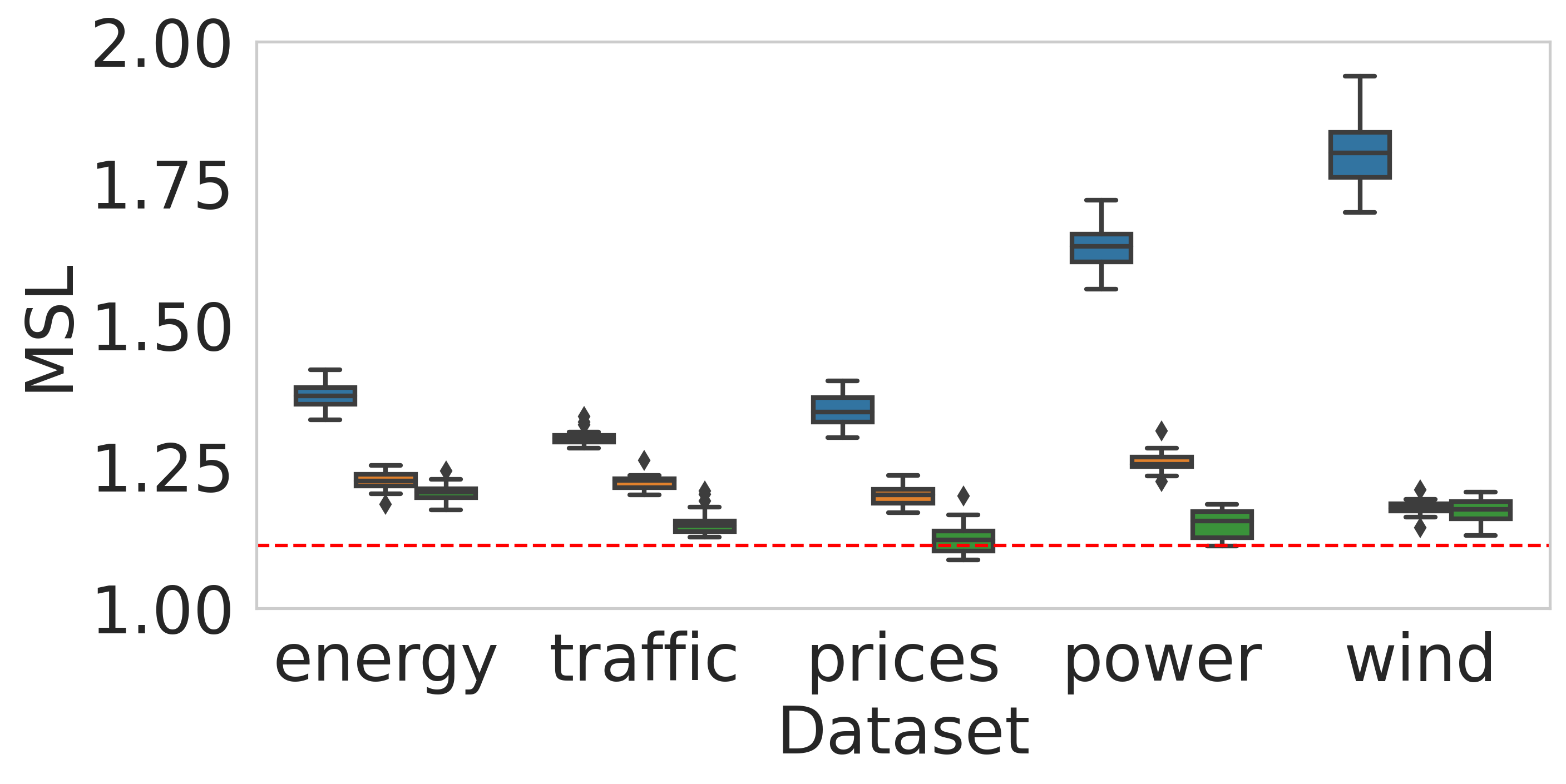}%
        \label{fig:cal_vs_no_cal_msl}%
        }%
        \caption{Performance of \texttt{calibration with cal} (Algorithm~\ref{alg:rci_with_cal}) (blue), \texttt{Rolling RC} without stretching (orange), and \texttt{Rolling RC} with `error adaptive' stretching (green). Results are evaluated on 20 random initializations of the predictive model. The \texttt{$\Delta$Coverage} metric is scaled between 0 to 100.}
        \label{fig:with_vs_without_cal}
\end{figure}

\subsection{Uncertainty Quantification for Online Depth Estimation}\label{sec:more_depth_exps}
In this section we analyze the performance of the uncertainty quantification heuristics described in Section~\ref{sec:stretching_functions}. We follow the experimental protocol explained in Section~\ref{sec:depth_example_setup}, and display the results in Figure~\ref{fig:full_depth_example}. This figure shows that all heuristics attain the nominal image coverage level, as guaranteed by Theorem~\ref{thm:risk_guarantee}. Furthermore, the figure suggests that estimating the current residual with the five most recent ones outperforms the baseline constant technique, while estimating the residual with a neural network, does not, as indicated by the average length and center coverage metrics. We propose two possible explanations for this phenomenon. First, since the residual model's architecture is huge, it may require further offline fitting, on a larger data set. For comparison, we trained it for 60 epochs on 6000 samples, while the base depth prediction model, LeReS \cite{leres}, was trained over 300k samples. 
Second, since one can extract a depth estimate from a residual estimate, the problem of estimating the residual $r$ is equivalent to estimating the depth:
\begin{equation}
\hat{Y}_t = r(X_t) + \mathcal{M}(X_t).
\end{equation}
Therefore, the fact that estimating a depth map is extremely difficult, as explained in Section~\ref{sec:depth_technical}, turns the task of estimating the residual to be difficult as well. As a consequence, the residual estimates may be inaccurate.

\begin{figure}[ht]
\centering
\includegraphics[width=1\textwidth]{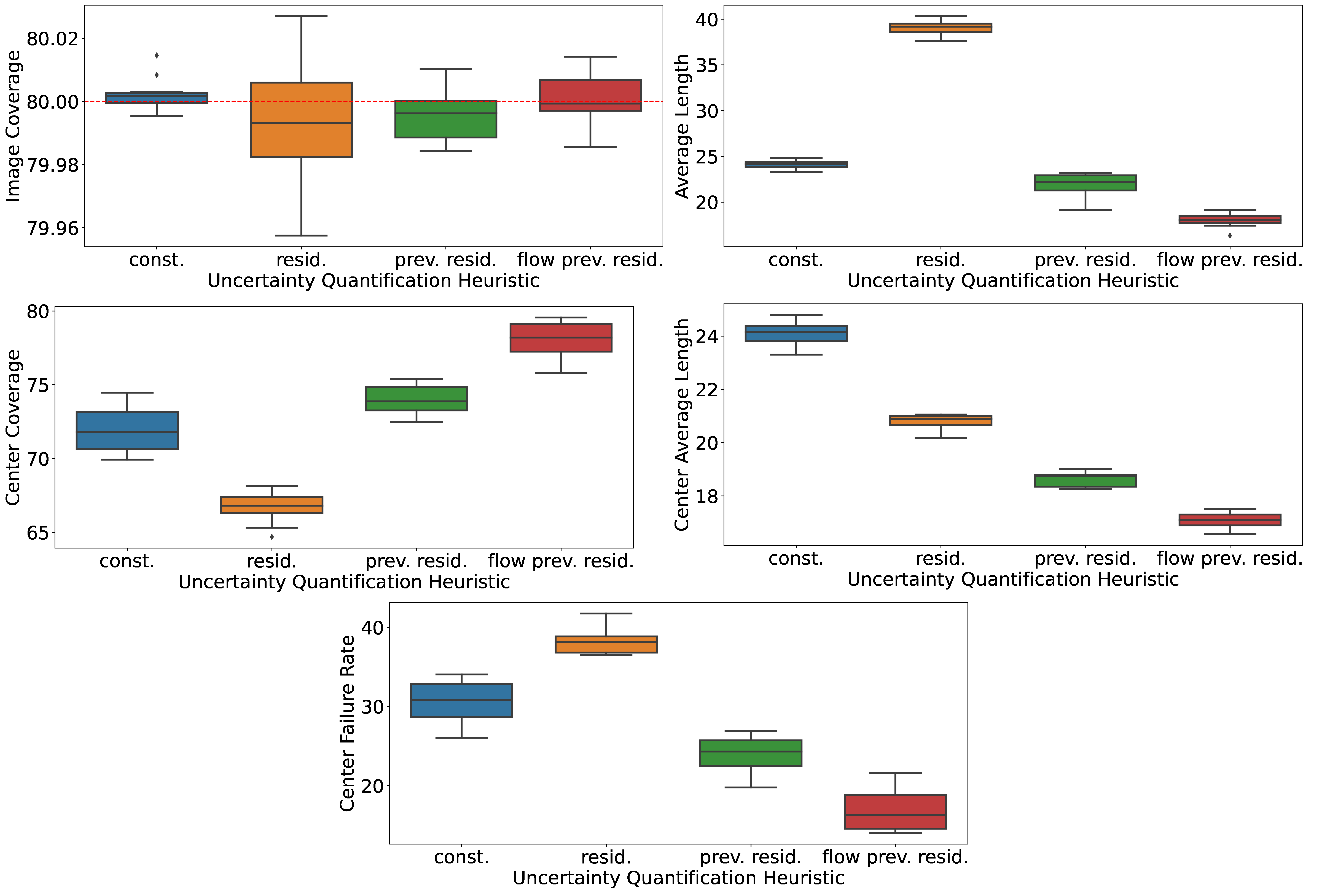} \caption{Performance of \texttt{Rolling RC} applied to control the image coverage rate with the following uncertainty quantification heuristics that are described in Section~\ref{sec:uncertainty_heuristics}: `constant value' (blue), `magnitude of the residual' (orange), `previous residuals' (green) and `previous residuals with optical flow registration' (red). All methods use the exponential stretching function introduced in Section~\ref{sec:stretching_functions}. 
}
\label{fig:full_depth_example}
\end{figure}

\subsection{Multiple Risks Control: Analyzing the Aggregation Functions}\label{sec:multi_risks_aggerations}
In this section we examine two options for aggregating the vector $\underline{\theta}_t$ into a scalar through $\lambda_t$:
\paragraph{Mean.} $\lambda^\text{mean}_t = \frac{1}{2}(\varphi(\underline{\theta}_t^1) + \varphi(\underline{\theta}_t^2))$. Taking the average of the entries compromises between the different risks and results in intervals that are not too conservative and not too liberal.
\paragraph{Max.} $\lambda^\text{max}_t = \max{ \{\varphi(\underline{\theta}_t^1), \varphi(\underline{\theta}_t^2)}\} $. Since the maximal coordinate corresponds to the most conservative loss, the constructed intervals may be too conservative.

The third possible aggregation is the minimum function $\lambda^\text{min}_t=\min(\underline{\theta}_t)$ that consistently follows the minimal entry in $\underline{\theta}_t$.
Since the minimal coordinate in $\underline{\theta}_t$ corresponds to the most liberal loss, this approach is likely to result in intervals that are too liberal, as it ignores the conservative losses.
Therefore, we do not examine this aggregation in our experiments.

We follow the experimental setup described in Section~\ref{sec:multi_risks_experiments} and display in Figure~\ref{fig:depth_multi_full} the performance of the mean and max aggregation for $\underline{\theta}_t$. This figure shows that the two methods perform similarly, and using mean aggregation leads to slightly narrower intervals compared to the mean aggregation approach.

\begin{figure}[ht]
\centering
\includegraphics[width=1\textwidth]{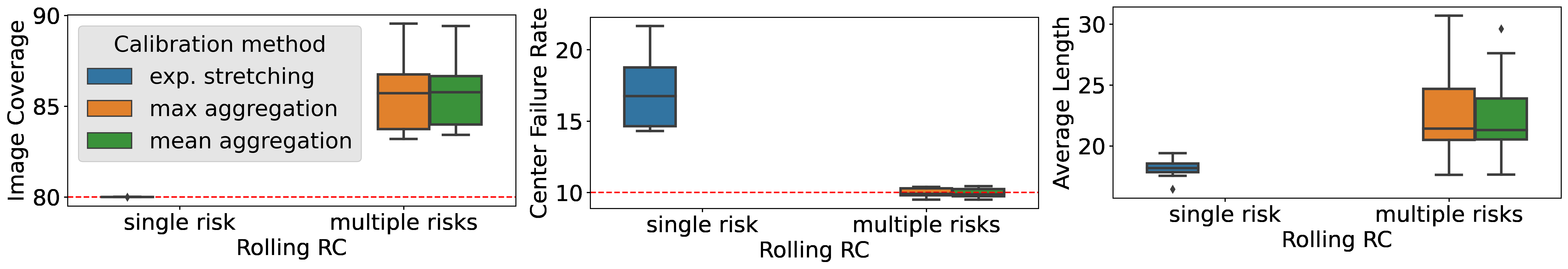} \caption{Performance of \texttt{Rolling RC} applied to control both `image miscoverage' and `center failure' risks. All methods use the exponential stretching function introduced in Section~\ref{sec:stretching_functions}. 
}
\label{fig:depth_multi_full}
\end{figure}




\section{Time-Series Conditional Coverage Metrics}
\subsection{Average Miscoverage Streak Length}\label{sec:msl_true_q}
Following Section \ref{sec:metrics}, recall that the miscoverage streak length of a series of intervals $\{\hat{C}_t(X_t)\}_{T_0}^{T_1}$ is defined as:
\begin{equation}
{\texttt{MSL}}:=\frac{1}{|\mathcal{I}|}\sum_{t\in \mathcal{I}} \min \{i: Y_{t+i}\in\hat{C}_{t+i}(X_{t+i})\text{ or } t=T_1\},
\end{equation}
where $\mathcal{I}$ is a set containing the starting times of all miscoverage streaks:
\begin{equation}
\mathcal{I}=\left\{ t\in[T_0,T_1]: \left(t=T_0 \text{ or } Y_{t-1}\in\hat{C}_{t-1}(x_{t-1})\right) \text{ and } Y_{t}\notin\hat{C}_t(X_{t}) \right\}.
\end{equation}
Above, $[T_0,T_1]$ is the set of all integers between $T_0$ and $T_1$.

To clarify this definition of the \texttt{MSL}, we now analyze the \texttt{MSL} in two concrete examples. Denote by “1” a coverage event and by “0” a miscoverage event, and consider a sequence of 15 observations. A method that results in the following coverage sequence:
$$1,1,1,1,1,1,\textbf{0},1,\textbf{0,0},1,1,1,1,1,$$
has an $\texttt{MSL}$ $= (2 + 1) / 2 = 1.5$, and coverage $=  12 / 15 = 80\%$. By contrast, a method that results in the following sequence
$$1,1,1,1,1,1,1,1,1,\textbf{0,0,0},$$
has the same average coverage of $80\%$ but much larger $\texttt{MSL}$ $= 3/1 = 3$. This emphasizes the role of $\texttt{MSL}$: while the two methods cover the response in 12 out of 15 events, the second is inferior as it has, on average, longer streaks of miscoverage events. 

We now compute the \texttt{MSL} of intervals constructed by the true conditional quantiles $\{{C}(X_t)\}_{t=T_0}^{T_1}$. By construction, these intervals satisfy:
\begin{equation}
\mathbb{P}(Y_t\in C(X_t)\mid X_t=x_t)=1-\alpha.
\end{equation}
Therefore, $Z_t=\min\{i: y_{t+i}\in\hat{C}(X_{t+i}) \text{ or } t=T_1\}$ is a geometric random variable with success probability $1-\alpha$. The average miscoverage streak length of the true quantiles is the mean of $Z_t$, which is:
\begin{equation}
{\texttt{MSL}}=\frac{1}{1-\alpha}\underset{\alpha=0.1}{\approx}1.111.
\end{equation}
Therefore, having $\texttt{MSL}=1$ is not necessarily equivalent to an optimal conditional coverage, as it indicates for undesired anti-correlation between two consecutive time steps: after a miscoverage event follows a coverage event with probability one. Consequently, we would desire to have $\texttt{MSL}=\frac{1}{1-\alpha}$, which is the \texttt{MSL} attained by the true conditional quantiles.

\subsection{The Miscoverage Counter}\label{sec:choosing_mc_alpha_level}
How should one choose the risk level for the $\texttt{MC}$ risk? If we aim at $1-\alpha = 90\%$ coverage, we argue that the right choice is $r={\alpha}/{(1-\alpha)} = {1}/{9}$. To see this, suppose we have an ideal model that attains a perfect coverage rate $1-\alpha$ conditional on $t$. In this case, the coverage events are i.i.d. realizations of Bernoulli experiments and so \texttt{MC} acts as a geometric random variable that counts the number of failures until reaching a success, where the success probability is $1-\alpha$. Hence, we define the \texttt{MC} risk level $r$ to be the expected value of such a geometric random variable, which is ${\alpha}/{(1-\alpha)}$.
By Proposition \ref{thm:mc_controls_coverage} we know that if we control \texttt{MC} at level $r=1/9$, the coverage will be at least $1-\alpha \approx 88.89\%$, where an ideal model will reach an exact $90\%$ coverage. This stands in striking contrast with a method that only controls the coverage metric, as the constructed prediction intervals may result in a large \texttt{MC} risk.

Note that Theorem \ref{thm:risk_guarantee} assumes that the loss is bounded, while $\texttt{MC}$ is not bounded. To guarantee that $\texttt{Rolling RC}$ will converge to the desired risk level of \texttt{MC}, we can use the following loss instead: $\texttt{MC}'=\min\left\{\texttt{MC}, B\right\}$ for some large $B \in \mathbb{N}$. In the experiments, however, we used the regular \texttt{MC} as we observed that its value does not get too high in practice.

\subsection{\texttt{\texorpdfstring{$\Delta$}{TEXT}Coverage}}\label{sec:delta_cov_supp}
The time-series data sets we use in the experiments in Section \ref{sec:qr_experiments} include the day of the week as an element in the feature vector. Therefore, we assess the violation of day-stratified coverage \citep{zaffran2022adaptive, oqr}, as a proxy for conditional coverage. That is, we measure the average deviation of the coverage on each day of the week from the nominal coverage level. Formally, given a series of intervals $\{\hat{C}_t(X_t)\}_{T_0}^{T_1}$, their \texttt{$\Delta$Coverage} is defined as:
\begin{equation}
\texttt{$\Delta$Coverage}=\frac{1}{7}\sum_{i\in\{1,2,...,7\}} {\left|\frac{1}{|D_i|}\sum_{t\in D_i}\mathbbm{1}_{Y_t\in\hat{C}_t(X_t)} - (1-\alpha) \right|},
\end{equation}
where $D_i$ is a set of samples that belong to the $i$-th day of the week. 
Since a lower value of this metric indicates for a better conditional coverage, we desire to have a minimal \texttt{$\Delta$Coverage}. 

\section{Calibrating on the Quantile Scale In Regression Tasks}\label{sec:rci_on_q_scale}
As an alternative for \eqref{eq:rci_stretched_reg}, where the calibration coefficient $\varphi(\theta_t)$ is added to each of the interval endpoints, one can modify the interval's length by tuning the raw miscoverage level $\tau_t= \varphi(\theta_t)$ requested from the model:
\begin{equation}
f(X_t, \theta_t, \mathcal{M}_t) = [\mathcal{M}_t(X_t, \tau_t/2), \  \mathcal{M}_t(X_t, 1-\tau_t/2)].
\end{equation}
This formulation is inspired by the work of \citep{chernozhukov2021distributional} that suggested tuning the nominal miscoverage level $\tau$, based on a calibration set. 
In contrast to \eqref{eq:rci_stretched_reg}, where we estimate only the lower $\alpha/2$ and upper $1-\alpha/2$ conditional quantiles, here, we need to estimate all the quantiles simultaneously. To accomplish this, one can apply the methods proposed in \citep{park2021learning, beyond_pinball_loss, sesia2021conformal}. Turning to the choice of the stretching function $\varphi$: the straightforward option is to set $\varphi(\theta)=-\tau$,  where $\theta_t$ is bounded in the range: $-1 \leq \theta_t \leq 0$. The reason for having the negative sign in $\varphi$, is to form larger prediction sets (resulted by smaller values of $\tau$) as $\theta$ increases. 

\section{Constructing Prediction Sets for Classification Tasks}\label{sec:pred_sets}

Consider a multi-class classification problem, where the target variable is discrete and unordered $y\in\mathcal{Y}=\{1,2,...,K\}$. Suppose we are handed a classifier that estimates the conditional probability of $P_{Y_t \mid X_t}(Y_t=y \mid X_t=x)$ for each class $y$, i.e., $\mathcal{M}_t(X_t,y)\in[0,1]$ and $\sum_{y\in\mathcal{Y}}\mathcal{M}_t(X_t,y)=1$.  
With this in place, we follow \citep{class_conf} and define the prediction set constructing function as:
\begin{equation}\label{eq:rci_class_f}
f(X_t, \theta_t, \mathcal{M}_t) = \left\{ y: \mathcal{M}_t(X_t,y) \geq \varphi(\theta_t) \right\},
\end{equation}
where one can choose $\varphi(x)=-x$, for instance. While this procedure is guaranteed to attain the pre-specified risk level $r$, according to Theorem \ref{thm:risk_guarantee}, the function $f$ in \eqref{eq:rci_class_f} may have unbalanced coverage across different sub-populations in the data \citep{wsc, angelopoulos2020uncertainty}. To overcome this, we recommend using the function $f$ presented next, which is capable of constructing prediction sets that better adapt to the underlying uncertainty. The idea, inspired by the work of \citep{angelopoulos2020uncertainty, wsc_implementation}, is to initialize an empty prediction set and add class labels to it, ordered by scores produced by the model. We keep adding class labels until the total score exceeds $1-\alpha$. Formally, the confidence set function is defined as:
\begin{equation}
f(X_t, \theta_t, \mathcal{M}_t) = \{\pi_1,...,\pi_k\}, \text{ where } k= \inf \left\{ k: \sum_{j=1}^k(\mathcal{M}_t(X_t,\pi_j))\geq 1-\varphi(\theta_t) \right\},
\end{equation}
and $\pi$ is the permutation of $\{1,2,...K\}$ sorted by the scores $\{\mathcal{M}_t(X_t,y): t\in\mathcal{Y}\}$ from the highest to lowest. As for the stretching function $\varphi$, we recommend using $\varphi(x)=x$.

\end{appendices}

\end{document}